\documentclass[twoside,11pt]{article}
\usepackage{jmlr2e,amssymb,amsmath}
\usepackage{url,psfrag,color,enumerate}

\usepackage{epsfig}
\usepackage{graphics}

\RequirePackage{natbib}

\newcommand{\mysec}[1]{Section~\ref{sec:#1}}

\newcommand{\eq}[1]{Eq.~(\ref{eq:#1})}
\newcommand{\myfig}[1]{Figure~\ref{fig:#1}}

\newcommand{\BEAS}{\begin{eqnarray*}}
\newcommand{\EEAS}{\end{eqnarray*}}
\newcommand{\BEA}{\begin{eqnarray}}
\newcommand{\EEA}{\end{eqnarray}}
\newcommand{\BEQ}{\begin{equation}}
\newcommand{\EEQ}{\end{equation}}
\newcommand{\BIT}{\begin{itemize}}
\newcommand{\EIT}{\end{itemize}}
\newcommand{\BNUM}{\begin{enumerate}}
\newcommand{\ENUM}{\end{enumerate}}
\newcommand{\BA}{\begin{array}}
\newcommand{\EA}{\end{array}}
\newcommand{\diag}{\mathop{\rm diag}}
\newcommand{\var}{\mathop{\rm var}}
\newcommand{\rank}{\mathop{\rm rank}}
\newcommand{\vect}{\mathop{\rm vec}}

\newcommand{\Diag}{\mathop{\rm Diag}}

\newcommand{\rb}{\mathbb{R}}

\newcommand{\tr}{{\rm tr}}
\newcommand{\idm}{\mathbf{I} }

\def \E{{\mathbb E}}
\def \P{{\mathbb P}}

\def \hS{\hat{\S}}
\def \hD{\hat{\D}}
\def \J { \mathbf{J} }
\def \S{ \Sigma }
\def \D{ \Delta }
\def \r{ \mathbf{r}}

\def \W{ \mathbf{W} }
\def \w{ \mathbf{w} }

\def \U{ \mathbf{U} }
\def \s{ \mathbf{s} }
\def \V{ \mathbf{V} }

\def \sig{ \sigma^2}

\newcounter{hyp}
\newenvironment{hyp}{\refstepcounter{hyp}\begin{itemize}
  \item[({\bf{A}\arabic{hyp}})]}{\end{itemize}}
  
\newcommand{\hypref}[1]{({\bf{A}\ref{hyp:#1}})}

\title{ Consistency of trace norm minimization\\[1cm]}

\author{\name Francis R. Bach \email francis.bach@mines.org\\
\addr
INRIA - Willow project\\
D\'epartement d'Informatique, Ecole Normale Sup\'erieure\\
45, rue d'Ulm\\
75230 Paris, France\\
       }


\begin{document}
\maketitle

\vspace*{1cm}

\begin{abstract}
Regularization by the sum of singular values, also referred to as the  \emph{trace norm}, is a popular technique for estimating 
low rank rectangular matrices. In this paper, we extend some of the consistency results of the Lasso to provide necessary and sufficient conditions for rank consistency of trace norm minimization with the square loss. 
We also 
provide an adaptive version that is rank consistent even when the necessary condition
for the non adaptive version is not fulfilled. 

\end{abstract}
 
\vspace*{1cm}

\section{Introduction}
In recent years, regularization by various non Euclidean 
norms has seen considerable interest. In particular, in the context of linear supervised learning, norms such as the $\ell_1$-norm may induce sparse loading vectors, i.e., loading vectors with low cardinality or $\ell_0$-norm. Such regularization schemes, also known as the Lasso~\citep{lasso} for least-square regression, come with efficient path following algorithms~\citep{lars}. Moreover, recent work has studied conditions under which such procedures consistently estimate the sparsity pattern of the loading vector~\citep{yuanlin,Zhaoyu,zou}. 
 
 When learning on rectangular matrices, the rank is a natural extension of the cardinality, and the sum of singular values, also known as the trace norm or the nuclear norm, is the natural extension of the $\ell_1$-norm; indeed, as the $\ell_1$-norm is the convex envelope of the $\ell_0$-norm on the unit ball (i.e., the largest lower bounding convex function)~\citep{boyd}, the trace norm is the convex envelope of the rank over the unit ball of the spectral norm~\citep{fazel}. In practice, it leads to low rank solutions~\citep{fazel,Srebro2005Maximum} and has seen recent increased interest in the
 context of collaborative filtering~\citep{Srebro2005Maximum}, multi-task learning~\citep{lowrank,massi} or classification with multiple classes~\citep{srebro-mc}.
 
 In this paper, we consider the rank consistency of trace norm regularization with the square loss, i.e., if the data were actually generated by a low-rank matrix, will the matrix and its rank be consistently estimated? In \mysec{consistency}, we provide necessary and sufficient conditions for the rank consistency that are extensions of corresponding results for the Lasso~\citep{yuanlin,Zhaoyu,zou} 
 and the group Lasso~\citep{grouplasso}. We do so under two sets of sampling assumptions detailed in \mysec{assumptions}: a full i.i.d assumption and a non i.i.d assumption which is natural in the context of collaborative filtering. 
 
As for the Lasso and the group Lasso, the necessary condition implies that such procedures do not always estimate the rank correctly; following the adaptive version of the Lasso and group Lasso~\citep{zou}, we design an adaptive version to achieve $n^{-1/2}$-consistency and rank consistency, with no consistency conditions.
 Finally,  in \mysec{algorithms}, we present a smoothing approach to convex optimization with
 the trace norm, while  in \mysec{simulations}, we show simulations on toy examples to illustrate the consistency results.

\section{Notations}
In this paper we consider various norms on vectors and matrices. On vectors $x$ in 
$\rb^d$, we
always consider the Euclidean norm, i.e., $\| x \| = ( x^\top x)^{1/2}$.
On rectangular matrices in $  \rb^{ p \times q}$, however, we consider several norms, based on singular values~\citep{sun}:
the \emph{spectral norm} $\| M \|_2 $ is the largest singular value (defined
 as $\|M\|_2 = \sup_{ x \in \rb^q } \frac{ \| M x\| }{\|x\|}$), 
 the \emph{trace norm} (or \emph{nuclear norm})
  $\| M \|_\ast$ is the sum of singular values,
and the \emph{Frobenius norm} $\| M \|_F $ is the $\ell_2$-norm of singular values (also defined
as $\|M\|_F = ( \tr M^\top M)^{1/2}$).  In Appendix~\ref{app:svd} and~\ref{app:tracenorm}, we review and derive relevant tools and results regarding perturbation of singular values as well as the trace norm.


Given a matrix $M \in \rb^{ p \times    q}$, $\vect(M)$ denotes the vector
in $\rb^{ pq }$ obtained by stacking its columns into a single vector; and $A \otimes B$
denotes the Kronecker product between matrices $A \in
\rb^{p_1 \times q_1}$ and $B \in
\rb^{p_2 \times q_2} $, defined as the matrix in $
\rb^{p_1 p_2 \times q_1 q_2}$, defined by blocks of sizes
$p_2 \times q_2$ equal to $a_{ij} B $. We make constant use of the following identities:
$(B^\top \otimes A) \vect(X) = \vect( AXB)$ 
and $\vect(uv^\top) = v \otimes u$. For more details and properties,
see~\citet{golub83matrix} and~\citet{magnus}. We also use the notation $\Sigma W$ for $\Sigma \in \rb^{ pq \times pq}$ and
$W \in \rb^{p \times q}$ to design the matrix in $\rb^{p \times q}$ such that
$\vect( \Sigma W) = \Sigma \vect(W)$ (note the potential confusion with $\S W$ when $\S$ is a matrix with $p$ columns). 
 
We also use the following standard asymptotic notations: 
a random variable $Z_n$ is said to be of order $O_p(a_n)$ if for
any $\eta>0$, there exists $M>0$ such that $\sup_n P( |Z_n| > M a_n ) < \eta$. Moreover, 
$Z_n$ is said to be of order $o_p(a_n)$ if $Z_n/a_n$ converges to zero in probability, i.e.,
if for
any $\eta>0$,   $ P( |Z_n| \geqslant \eta a_n ) $ converges to zero.
See~\citet{VanDerVaart} and \citet{shao}
for further definitions and properties of asymptotics in probability.

Finally, we use the following two conventions: lowercase for vectors and uppercase for matrices, while bold fonts are reserved for population quantities.

\section{Trace norm minimization}
\label{sec:regular}
We consider the problem of predicting a real random variable $z$ as a linear function
of a matrix $M \in \rb^{p \times q}$, where $p$ and $q$ are two
fixed strictly positive integers. Throughout this paper, we assume that 
we are given $n$ observations $(M_i,z_i)$, $i=1,\dots,n$, and we consider the following optimization problem with the square loss:
\BEQ
\label{eq:problemM}
\min_{W \in \rb^{p \times q} } \frac{1}{2n} \sum_{i=1}^n ( z_i - \tr W^\top M_i )^2 + \lambda_n \| W \|_\ast,
\EEQ
where $\|W\|_\ast$ denotes the \emph{trace norm} of $W$.

\subsection{Special cases}
Regularization by the trace norm has numerous applications (see, e.g.,~\citet{fazel2} for a review); in this paper, we are particularly interested in the following two situations:

\paragraph{Lasso and group Lasso}
When $x_i \in \rb^m$, we can define $ M_i = \Diag(x_i) \in \rb^{m \times m}$ as the diagonal matrix with $x_i$ on the diagonal. In this situation the minimization
of problem  \eq{problemM} must lead to diagonal solutions (indeed the minimum trace norm matrix with fixed diagonal is the corresponding
diagonal matrix, which is a consequence of Lemma~\ref{lemma:fenchel} and
Proposition~\ref{prop:subdiff}) and for a diagonal matrix the trace norm is simply the $\ell_1$ norm of the diagonal. Once we have derived our consistency conditions, we
check in \mysec{lasso}  that they 
actually lead to the known ones for the Lasso~\citep{yuanlin,Zhaoyu,zou}.

We can also see the group Lasso as a special case; indeed, if $x_{ij} \in \rb^{d_j}$ for $j=1,\dots,m$, $i=1,\dots,n$, then we define $M_i \in \rb^{(\sum_{j=1}^m d_j) \times m}$ as the block diagonal matrix
(with non square blocks) with diagonal blocks $x_{ji}$, $j=1,\dots,m$. Similarly, the optimal $\hat{W}$ must share the same block-diagonal form, and its singular values are exactly the norms of each block, i.e., the trace norm is indeed the sum of the norms of each group. We also get back results from~\citet{grouplasso} in \mysec{lasso}.

Note that the Lasso and group Lasso can be seen as special cases where the singular vectors are fixed. However, the main difficulty in analyzing trace norm regularization, as well as the main reason for it use, is that singular vectors are not fixed and those can often be seen as implicit features learned by the estimation procedure~\citep{Srebro2005Maximum}. In this paper we derive consistency results about the value and numbers of such features.

\paragraph{Collaborative filtering and low-rank completion}
Another natural application is collaborative filtering where two types of attributes $x$ and $y$ are observed and we consider bilinear forms in $x$ and $y$, which can be written as a linear form in $M=xy^\top$ (thus it corresponds to situations where all matrices $M_i$ have rank one). In this setting, the matrices $M_i$ are not usually i.i.d. but exhibit a statistical dependence structure outlined in \mysec{assumptions}. A special case here is when then no attributes are observed and we simply wish to complete a partially observed matrix~\citep{Srebro2005Maximum,lowrank}. The results presented in this paper do not immediately apply because the dimension of the estimated matrix may grow with the number of observed entries and this situation is out of the scope of this paper.

\subsection{Assumptions}

\label{sec:assumptions}
We   make the following assumptions
 on the sampling distributions of $M \in \rb^{ p \times q}$ for the problem in
\eq{problemM}. We let denote:
$\hS_{mm} = \frac{1}{n} \sum_{i=1}^n \vect(M_i) \vect(M_i)^\top \in \rb^{ p q \times p q}$, and we consider the following assumptions:

\begin{hyp}
\label{hyp:model}
Given $M_i$, $i=1,\dots,n$, the $n$ values $z_i$ are i.i.d. and there exists
$\W \in \rb^{p \times q}$ such that for all $i$, $ \E(z_i|M_1,\dots,M_n) = \tr \W^\top M_i$
 and $\var (z_i | M_1,\dots,M_n)$ is a strictly positive constant $\sig$. $\W$ is not equal to zero and does not have full rank. \end{hyp}

\begin{hyp}
\label{hyp:samplingM}
There exists an \emph{invertible} matrix $\S_{mm} \in \rb^{ p q \times p q}$ such that 
$ \E \| \hS_{mm} - \S_{mm} \|_F^2 = O(\zeta_n^2)$ for a certain sequence $\zeta_n$ that tends to zero.
\end{hyp}

\begin{hyp}
\label{hyp:normalityM}   The random variable
${n^{-1/2}}  \sum_{i=1}^n \varepsilon_i \vect(M_i )$ is converging in distribution to a normal distribution with
mean zero and covariance matrix $\sig \S_{mm}$.
\end{hyp}

Assumption \hypref{model} states that given the input matrices $M_i$, $i=1,\dots,n$ we have a linear
prediction model, where the loading matrix $\W$ is non trivial and rank-deficient, the goal being to estimate this rank (as well as the matrix itself).  We let denote 
 $\W = \U \Diag(\s) \V^\top$ its singular value decomposition,
 with $\U \in \rb^{ p \times \r}$ ,  $\V \in \rb^{ q \times \r}$, and $\r \in (0,\min\{p,q\})$  denotes the rank of $\W$.   We also let denote $\U_\bot \in \rb^{p \times (p-\r)}$ and $\V_\bot
\in \rb^{q \times (q-\r)}$ any orthogonal complements of $\U$ and $\V$.

We let denote $\varepsilon_i = z_i - \tr \W^\top M_i$  and
$
\hS_{Mz} = \frac{1}{n} \sum_{i=1}^n z_i M_i  \in \rb^{ p \times q}
$, $
\hS_{M\varepsilon} = \frac{1}{n} \sum_{i=1}^n \varepsilon_i M_i  
= \hS_{Mz} - \hS_{mm} \W \in \rb^{ p \times q}
$.  
We may then rewrite
 \eq{problemM} as 
\BEQ
\min_{W \in \rb^{p \times q} }  
  \frac{1}{2} \vect(W)^\top \hS_{mm} \vect(W) - \tr W^\top \hS_{Mz} + \lambda_n \| W \|_\ast,
\EEQ
or, equivalently,
\BEQ
\label{eq:problemM2bis}
\min_{W \in \rb^{p \times q} }  \frac{1}{2} \vect(W - \W)^\top \hS_{mm} \vect(W - \W) - \tr W^\top \hS_{M\varepsilon} + \lambda_n \| W \|_\ast.
\EEQ

 The sampling assumptions \hypref{samplingM} and \hypref{normalityM} may seem restrictive, but they are satisfied in the following two natural situations. The first situation corresponds to a classical full i.i.d problem, where the pairs $(z_i,M_i)$ are sampled i.i.d:
\begin{lemma}
Assume \hypref{model}. If the matrices $M_i$ are sampled i.i.d., $z$ and $M$ have finite
fourth order moments, and  $\E \left\{ \vect(M) \vect(M)^\top \right\}$
is invertible, then \hypref{samplingM} and \hypref{normalityM} are satisfied with $\zeta_n
=n^{-1/2}$.
\end{lemma}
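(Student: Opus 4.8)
The plan is to take $\S_{mm} = \E\{\vect(M)\vect(M)^\top\}$, which is invertible by hypothesis, as the candidate population matrix, and then verify \hypref{samplingM} and \hypref{normalityM} in turn.

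For \hypref{samplingM}, observe that $\hS_{mm} = \frac{1}{n}\sum_{i=1}^n \vect(M_i)\vect(M_i)^\top$ is an empirical average of i.i.d. matrices, each with mean $\S_{mm}$. Expanding the squared Frobenius norm entrywise gives
\[
\E\|\hS_{mm}-\S_{mm}\|_F^2 = \sum_{k,l=1}^{pq}\var\big((\hS_{mm})_{kl}\big) = \frac{1}{n}\sum_{k,l=1}^{pq}\var\big((\vect(M))_k(\vect(M))_l\big),
\]
where the last equality uses that each entry is an average of $n$ i.i.d. terms. Every summand is finite because it involves only fourth-order moments of $M$, and there are finitely many pairs $(k,l)$; hence $\E\|\hS_{mm}-\S_{mm}\|_F^2 = O(n^{-1})$, so we may take $\zeta_n = n^{-1/2}$.

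For \hypref{normalityM}, set $Y_i = \varepsilon_i\vect(M_i) \in \rb^{pq}$; these are i.i.d. since the pairs $(z_i,M_i)$ are. I will apply the multivariate central limit theorem to $n^{-1/2}\sum_{i=1}^n Y_i$, for which I need the first two moments of $Y_i$ together with square-integrability. By \hypref{model} the conditional mean $\E(\varepsilon_i \mid M_1,\dots,M_n)$ vanishes, so iterated expectation yields $\E[Y_i] = 0$; since that conditional mean is zero we also have $\E(\varepsilon_i^2 \mid M_1,\dots,M_n) = \var(z_i \mid M_1,\dots,M_n) = \sig$, whence
\[
\cov(Y_i) = \E\{\varepsilon_i^2\vect(M_i)\vect(M_i)^\top\} = \sig\,\E\{\vect(M)\vect(M)^\top\} = \sig\S_{mm}.
\]
Square-integrability follows from Cauchy--Schwarz, $\E\{\varepsilon_i^2\|\vect(M_i)\|^2\} \le (\E\varepsilon_i^4)^{1/2}(\E\|\vect(M_i)\|^4)^{1/2}$, both factors finite because $z$ and $M$ have finite fourth moments and $\varepsilon = z - \tr \W^\top M$. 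The CLT then gives convergence in distribution to a centered normal with covariance $\sig\S_{mm}$, as required.

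The computations are routine; the only delicate point is keeping the conditioning structure straight---working throughout with conditional moments given $M_1,\dots,M_n$ and invoking iterated expectations---and verifying that the finite-fourth-moment hypothesis on $z$ and $M$ propagates to $\varepsilon$, since this is exactly what secures both the $O(n^{-1})$ bound in \hypref{samplingM} and the integrability needed for the CLT in \hypref{normalityM}.
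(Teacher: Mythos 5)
Your proof is correct and is exactly the standard argument the paper relies on (it omits a proof of this lemma as routine, in contrast to Lemma~\ref{lemma:cf}): the entrywise variance bound from the finite fourth moments of $M$ gives \hypref{samplingM} with $\zeta_n = n^{-1/2}$, and the multivariate CLT applied to the i.i.d.\ centered vectors $\varepsilon_i \vect(M_i)$, with the conditional moments from \hypref{model} identifying the covariance as $\sig \S_{mm}$, gives \hypref{normalityM}. No gaps.
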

Note the further refinement when for each $i$,  $M_i = x_i y_i^\top$ and $x_i$ and $y_i$ are independent, which implies that $\S_{mm}$ is factorized as a Kronecker product, of the form $\S_{yy} \otimes \S_{xx}$ where $\S_{xx}$ and $\S_{yy}$ are the (invertible) second order moment matrices of $x$ and $y$.

The second situation corresponds to a collaborative filtering situation where two types of attributes are observed, e.g., $x$ and $y$, and for every pair $(x,y)$ we wish to predict $z$ as a bilinear form in
$x$ and $y$: we first sample $n_x$ values for $x$, and $n_y$ values for $y$, and we select uniformly at random a subset of $n \leqslant n_x n_y$ observations from the $n_x n_y$ possible pairs. The following lemma, proved in Appendix~\ref{app:cf}, shows that this set-up satisfies our assumptions:
\begin{lemma}
\label{lemma:cf}
Assume  \hypref{model}. Assume moreover that $n_x$ values 
$\tilde{x}_1,\dots,\tilde{x}_{n_x}$ are sampled i.i.d and $n_y$ 
values $\tilde{y}_1,\dots,\tilde{y}_{n_y}$ are also sampled i.i.d. from distributions with finite fourth order moments and invertible second order moment matrices
$\S_{xx}$ and $\S_{yy}$. Assume also that a random subset of size $n$ of pairs $(i_k,j_k)$ in $ \{1,\dots,n_x\} \times \{1,\dots,n_y\}$  is sampled uniformly, then if $n_x$, $n_y$ and $n$ tend to infinity, then  \hypref{samplingM} and \hypref{normalityM} are satisfied with
$\S_{mm} = \S_{yy} \otimes \S_{xx}$ and $\zeta_n = n^{-1/2} + n_x^{-1/2} + n_y^{-1/2}$.
\end{lemma}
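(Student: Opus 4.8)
The plan is to condition on the two families of attribute vectors and to split the error into a part coming from the sampling of the pairs and a part coming from the sampling of the attributes. Writing the $k$-th observation as $M_k = \tilde x_{i_k}\tilde y_{j_k}^\top$, the identity $\vect(uv^\top)=v\otimes u$ gives $\vect(M_k)=\tilde y_{j_k}\otimes\tilde x_{i_k}$, hence $\vect(M_k)\vect(M_k)^\top=(\tilde y_{j_k}\tilde y_{j_k}^\top)\otimes(\tilde x_{i_k}\tilde x_{i_k}^\top)$. Introducing the empirical second-order moments $\hat A_x=\frac1{n_x}\sum_{i=1}^{n_x}\tilde x_i\tilde x_i^\top$ and $\hat A_y=\frac1{n_y}\sum_{j=1}^{n_y}\tilde y_j\tilde y_j^\top$, and using that, conditionally on the attributes, each sampled index pair is marginally uniform over $\{1,\dots,n_x\}\times\{1,\dots,n_y\}$, I would first check by bilinearity of the Kronecker product that $\E[\hS_{mm}\mid \tilde x,\tilde y]=\hat A_y\otimes\hat A_x$. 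This suggests the decomposition $\hS_{mm}-\S_{mm}=(\hS_{mm}-\hat A_y\otimes\hat A_x)+(\hat A_y\otimes\hat A_x-\S_{yy}\otimes\S_{xx})$, the first bracket being the pair-sampling fluctuation and the second the attribute-sampling error.

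For \hypref{samplingM} I would bound the two brackets separately in expected squared Frobenius norm. The first bracket is the deviation of a sample mean (of $n$ terms drawn without replacement from a finite population of $n_xn_y$ matrices) from its conditional mean, so its conditional expected squared norm is at most $\frac1n$ times the population second moment $\frac1{n_xn_y}\sum_{i,j}\|\tilde x_i\|^4\|\tilde y_j\|^4$; taking expectations and using the finite fourth moments gives $O(n^{-1})$. For the second bracket I would write $\hat A_y\otimes\hat A_x-\S_{yy}\otimes\S_{xx}=(\hat A_y-\S_{yy})\otimes\hat A_x+\S_{yy}\otimes(\hat A_x-\S_{xx})$, use $\|A\otimes B\|_F=\|A\|_F\|B\|_F$, the independence of the two families, and the standard rates $\E\|\hat A_x-\S_{xx}\|_F^2=O(n_x^{-1})$, $\E\|\hat A_y-\S_{yy}\|_F^2=O(n_y^{-1})$ (which again only need finite fourth moments), to obtain $O(n_x^{-1}+n_y^{-1})$. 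Adding the two bounds yields $\E\|\hS_{mm}-\S_{mm}\|_F^2=O(n^{-1}+n_x^{-1}+n_y^{-1})=O(\zeta_n^2)$, and $\S_{mm}=\S_{yy}\otimes\S_{xx}$ is invertible as a Kronecker product of invertible matrices.

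For \hypref{normalityM} I would use the Cram\'er--Wold device together with a conditional central limit theorem. Fixing $t\in\rb^{pq}$ and setting $a_{n,k}=n^{-1/2}t^\top\vect(M_k)$, I would condition on all the matrices $M_1,\dots,M_n$: by \hypref{model} the residuals $\varepsilon_k$ are then independent, centred, with common variance $\sig$, so $t^\top(n^{-1/2}\sum_k\varepsilon_k\vect(M_k))=\sum_k a_{n,k}\varepsilon_k$ is a weighted sum of independent centred variables with conditional variance $\sig\,t^\top\hS_{mm}t$, which converges in probability to $\sig\,t^\top\S_{mm}t$ by the previous step. Applying the Lindeberg--Feller theorem conditionally and then taking the expectation of the (bounded) conditional characteristic function, by dominated convergence, would give convergence of $\E\exp(it^\top n^{-1/2}\sum_k\varepsilon_k\vect(M_k))$ to $\exp(-\half\sig\,t^\top\S_{mm}t)$, hence the claim by L\'evy's continuity theorem.

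The hard part is verifying the negligibility (Lindeberg/Feller) condition $\max_{k\le n}a_{n,k}^2\to 0$ in probability without assuming a relation between $n$ and $n_xn_y$. Since $a_{n,k}^2\le n^{-1}\|t\|^2\|\tilde x_{i_k}\|^2\|\tilde y_{j_k}\|^2$, the crude bound $\max_i\|\tilde x_i\|^2\cdot\max_j\|\tilde y_j\|^2$ is of order $o_p((n_xn_y)^{1/2})$ and is too lossy when $n$ is much smaller than $(n_xn_y)^{1/2}$. The point is instead to take the maximum over only the $n$ sampled pairs: each $W_k=\|\tilde x_{i_k}\|^2\|\tilde y_{j_k}\|^2$ is marginally distributed as $\|\tilde x\|^2\|\tilde y\|^2$ and has finite second moment $\E\|\tilde x\|^4\,\E\|\tilde y\|^4$, so a union bound combined with dominated convergence gives $\max_{k\le n}W_k=o_p(n^{1/2})$, whence $\max_k a_{n,k}^2=o_p(n^{-1/2})\to 0$. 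The same truncation argument controls the full Lindeberg sum using only the finite variance of $\varepsilon$. The remaining bookkeeping, namely the finite-population correction in the pair-sampling variance and the rate constants, is routine.
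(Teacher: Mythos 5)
Your treatment of \hypref{samplingM} is essentially the paper's: the same decomposition $\hS_{mm}-\S_{mm}=(\hS_{mm}-\hS_{yy}\otimes\hS_{xx})+(\hS_{yy}\otimes\hS_{xx}-\S_{yy}\otimes\S_{xx})$, with the first term controlled by the variance of sampling without replacement (the paper computes the covariances $\E(S_{ij}-n/n_xn_y)(S_{i'j'}-n/n_xn_y)$ explicitly via Hoeffding's finite-population formulas, arriving at the same $O(n^{-1})$ conditional bound you get from the population second moment $\frac{1}{n_xn_y}\sum_{i,j}\|\tilde x_i\|^4\|\tilde y_j\|^4$) and the second by the standard $O(n_x^{-1}+n_y^{-1})$ rates for the empirical second-moment matrices. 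For \hypref{normalityM}, however, you take a genuinely different route. The paper treats the noise and the pair-sampling jointly as a martingale difference array $n^{-1/2}\varepsilon_{i_kj_k}\,\tilde y_{j_k}\otimes\tilde x_{i_k}$ with respect to the filtration generated by $(\tilde X,\tilde Y)$ and the successively revealed pairs, and invokes the martingale CLT with a Lyapunov-type condition $\E\|\D_{n,k}\|^4=O(n^{-2})$; note that this condition implicitly uses a fourth moment of $\varepsilon$. You instead condition on the entire design $M_1,\dots,M_n$, reduce by Cram\'er--Wold to a weighted sum of conditionally independent centred residuals, and apply Lindeberg--Feller conditionally before integrating the characteristic function. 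This sidesteps the dependence created by sampling without replacement entirely (it only enters through $\hS_{mm}\to\S_{mm}$, already established) and requires only the conditional variance of $\varepsilon$ from \hypref{model}, at the price of having to verify the negligibility condition $\max_k a_{n,k}^2\to0$; your union-bound argument showing $\max_{k\leqslant n}\|\tilde x_{i_k}\|^2\|\tilde y_{j_k}\|^2=o_p(n^{1/2})$ from the finite second moment of $\|\tilde x\|^2\|\tilde y\|^2$ handles this correctly and is the one nontrivial step the paper's martingale route does not need. Both arguments are valid; yours is somewhat more economical in its moment assumptions on the noise, the paper's is shorter once the martingale framework is set up.
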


\subsection{Optimality conditions}
From the expression of the subdifferential of the trace norm
in Proposition~\ref{prop:subdiff} (Appendix~\ref{app:tracenorm}), we can identify the optimality condition for problem in \eq{problemM}, that  we will constantly use in the paper:
\begin{proposition}
The matrix $W$ with singular value decomposition $W=U\Diag(s)V^\top$ (with
strictly positive singular values $s$) is optimal for the problem in \eq{problemM} if and only
if
\BEQ
\label{eq:opt-vec}
 \hS_{mm} W  -  \hS_{Mz}  + \lambda_n  U V^\top  +N = 0,
 \EEQ
  with $U^\top N = 0$,
$N V=0$ and $\| N\|_2 \leqslant \lambda_n $. 
\end{proposition}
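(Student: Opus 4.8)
The plan is to treat this as a standard exercise in convex subdifferential calculus. The objective in \eq{problemM} is a convex function of $W$: it is the sum of the convex quadratic-plus-linear term $g(W) = \frac{1}{2}\vect(W)^\top \hS_{mm} \vect(W) - \tr W^\top \hS_{Mz}$ (convex because $\hS_{mm}$ is positive semidefinite, being an average of outer products) and the trace norm $\lambda_n \|W\|_\ast$, which is convex as a nonnegative multiple of a norm. For a convex function, a point $W$ is a global minimizer if and only if $0$ belongs to its subdifferential at $W$, so the whole statement will follow once I identify that subdifferential explicitly; crucially, this characterization is automatically an ``if and only if'', which is exactly what the proposition asserts.

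First I would compute the gradient of the smooth part $g$. Since $\hS_{mm}$ is symmetric, the gradient of $\frac{1}{2}\vect(W)^\top \hS_{mm} \vect(W)$ with respect to $\vect(W)$ is $\hS_{mm}\vect(W)$, which in the matrix notation $\hS_{mm} W$ introduced in the Notations section corresponds to the matrix $\hS_{mm} W$; the gradient of $-\tr W^\top \hS_{Mz}$ is $-\hS_{Mz}$. Hence $\nabla g(W) = \hS_{mm} W - \hS_{Mz}$. Because $g$ is finite and differentiable everywhere, the sum rule for subdifferentials applies without any constraint qualification, giving $\partial(g + \lambda_n \|\cdot\|_\ast)(W) = \nabla g(W) + \lambda_n\, \partial\|\cdot\|_\ast(W)$.

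Next I would invoke Proposition~\ref{prop:subdiff} for the subdifferential of the trace norm. At a matrix $W$ with reduced singular value decomposition $W = U \Diag(s) V^\top$ and strictly positive singular values $s$, that subdifferential is the set of matrices of the form $U V^\top + N'$ with $U^\top N' = 0$, $N' V = 0$ and $\|N'\|_2 \leqslant 1$. Multiplying by $\lambda_n$ and writing $N = \lambda_n N'$, the subdifferential of $\lambda_n \|\cdot\|_\ast$ becomes the set of $\lambda_n U V^\top + N$ satisfying $U^\top N = 0$, $N V = 0$ and $\|N\|_2 \leqslant \lambda_n$. Substituting into $0 \in \partial(g + \lambda_n\|\cdot\|_\ast)(W)$ then yields precisely $\hS_{mm} W - \hS_{Mz} + \lambda_n U V^\top + N = 0$ with the three stated conditions on $N$.

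I do not expect a genuine obstacle: the argument is the routine additivity-of-subdifferentials step, legitimate here because the smooth part is differentiable everywhere. The only points that demand care are the matrix-calculus computation of $\nabla g$—in particular keeping the $\vect$/matrix conversions consistent and using the symmetry of $\hS_{mm}$—and the faithful transcription of the trace-norm subdifferential from Proposition~\ref{prop:subdiff}, together with the rescaling by $\lambda_n$ that turns the unit spectral-norm bound $\|N'\|_2\leqslant 1$ into $\|N\|_2\leqslant\lambda_n$.
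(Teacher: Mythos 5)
Your proposal is correct and follows exactly the route the paper takes: the paper derives this proposition directly from the trace-norm subdifferential of Proposition~\ref{prop:subdiff}, combined with the first-order optimality condition $0 \in \nabla g(W) + \lambda_n \partial\|\cdot\|_\ast(W)$ for the convex objective, which is precisely your argument (the paper merely states this in one line rather than spelling out the gradient computation and the rescaling of the spectral-norm bound). No gaps.
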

This implies notably that $W$ and $ \hS_{mm} W -   \hS_{Mz} $ have \emph{simultaneous}
singular value decompositions, and the largest singular values are less than $\lambda_n$, and exactly
equal to $\lambda_n$ for the corresponding strictly positive singular values of $W$.
Note that when all matrices are diagonal (the Lasso case), we obtain the usual optimality conditions (see also~\citet{fazel2} for further discussions).

\section{Consistency results}
\label{sec:consistency}
We consider two types of consistency, the \emph{regular consistency}, i.e., we want the probability
$\P( \| \hat{W}- \W \| \geqslant \varepsilon) $ to tend to zero as $n $ tends to infinity, for all $\varepsilon >0$. We also consider the
\emph{rank consistency}, i.e., we want that
$\P( \rank(\hat{W}) \neq \rank(\W) ) $ tends to zero as $n$ tends to infinity. Following the similar
properties for the Lasso, the consistency depends on the decay of the regularization 
parameter. Essentially, we obtain the following results:
\begin{enumerate}[\hspace*{.5cm}a)]
\item  if  $\lambda_n$ does not tend to zero, then the trace norm estimate $\hat{W}$ is not consistent; 
\item if $\lambda_n$ tends to zero faster than $n^{-1/2}$, then the estimate is consistent and its error is $O_p(n^{-1/2})$ while it is not rank-consistent with probability tending to one (see \mysec{fast});
\item  if   $\lambda_n$ tends to zero exactly at rate $n^{-1/2}$, then the estimator is consistent with error $O_p(n^{-1/2})$ but the probability of estimating the correct rank is converging to a limit in $(0,1)$ (see \mysec{sqrtn});
\item  if   $\lambda_n$ tends to zero more slowly than $n^{-1/2}$, then the estimate is consistent with 
error $O_p(\lambda_n)$ and its rank consistency depends on  specific \emph{consistency
conditions} detailed in \mysec{slow}.
\end{enumerate}
The following sections will look at each of these cases, and state precise theorems. We then consider some special cases, i.e., factored second-order moments and implications for the special cases of the Lasso and group Lasso.

The first proposition (proved in Appendix~\ref{app:asympt}) considers the case where the regularization parameter $\lambda_n$ is converging to a certain limit $\lambda_0$. When this limit is zero, we
obtain regular consistency (Corollary~\ref{prop:regular-consistency} below), while if $\lambda_0 >0$, then $\hat{W}$ tends in probability to a limit which is always different from $\W$:

\begin{proposition}
\label{prop:asympt}
Assume \hypref{model}, \hypref{samplingM} and \hypref{normalityM}. Let $\hat{W}$ be a global minimizer of \eq{problemM}.
 If $\lambda_n$ tends to a limit $\lambda_0 \geqslant 0$, then $\hat{W}$ converges in probability to the unique global minimizer of
$$
\min_{W \in \rb^{p \times q} } 
\frac{1}{2} \vect(W - \W)^\top \S_{mm} \vect(W - \W)  + \lambda_0 \| W \|_\ast.
$$
\end{proposition}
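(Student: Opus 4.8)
The plan is to prove convergence in probability of the minimizer by a standard argmin-consistency argument: show that the random objective in \eq{problemM2bis} converges (uniformly on compacts, or in the sense of epi-convergence) to the deterministic limiting objective, and then invoke convexity to transfer convergence of the objectives to convergence of the minimizers. The limiting objective is strictly convex in $W$ because $\S_{mm}$ is invertible (hence positive definite) by \hypref{samplingM}, so it has a unique global minimizer; this uniqueness is what makes the conclusion well-posed and is essential for the argument.

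First I would write the empirical objective, up to terms not depending on $W$, as
$$
F_n(W) = \frac{1}{2} \vect(W-\W)^\top \hS_{mm} \vect(W-\W) - \tr W^\top \hS_{M\varepsilon} + \lambda_n \|W\|_\ast,
$$
and the target objective as
$$
F(W) = \frac{1}{2} \vect(W-\W)^\top \S_{mm} \vect(W-\W) + \lambda_0 \|W\|_\ast.
$$
The key step is to control the three sources of discrepancy between $F_n$ and $F$. The quadratic term differs by $\frac{1}{2}\vect(W-\W)^\top(\hS_{mm}-\S_{mm})\vect(W-\W)$, which by \hypref{samplingM} and Markov's inequality is $O_p(\zeta_n)$ uniformly on any bounded set of $W$ (since $\|\hS_{mm}-\S_{mm}\|_F = O_p(\zeta_n)$). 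The linear term $\tr W^\top \hS_{M\varepsilon}$ vanishes in probability because $\hS_{M\varepsilon} = n^{-1/2}\cdot(n^{-1/2}\sum_i \varepsilon_i M_i)$, and \hypref{normalityM} forces the bracketed quantity to be $O_p(1)$, so $\hS_{M\varepsilon} = O_p(n^{-1/2})$. Finally the regularizer differs by $(\lambda_n-\lambda_0)\|W\|_\ast \to 0$. Thus $F_n(W) \to F(W)$ in probability, uniformly on compact sets.

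To pass from objective convergence to minimizer convergence I would use the classical convexity lemma (as in \citet{VanDerVaart}): if a sequence of convex functions converges pointwise in probability to a convex function with a unique minimizer, then the minimizers converge in probability to that unique minimizer. Pointwise convergence of convex functions automatically upgrades to uniform convergence on compacts, which makes the argmin map continuous at the limit. The one preliminary I would establish is a tightness (pre-compactness) bound: that $\hat W$ stays in a bounded set with probability tending to one. This follows by evaluating $F_n$ at $\hat W$ versus at $\W$ and using coercivity of the quadratic form (its smallest eigenvalue is bounded below once $\hS_{mm}$ concentrates near the invertible $\S_{mm}$), so that a large $\|\hat W\|$ would force $F_n(\hat W)$ above $F_n(\W)$.

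The main obstacle is the technical bookkeeping of the uniform control: making precise that the $O_p(\zeta_n)$ and $O_p(n^{-1/2})$ bounds on the perturbation terms hold uniformly over a compact set of $W$, and confirming that the resulting convergence of $F_n$ to $F$ genuinely meets the hypotheses of the convexity lemma. The trace norm $\|W\|_\ast$ is continuous and convex but nonsmooth, so one must be careful that the argument only uses continuity and convexity — which it does — rather than any differentiability of the regularizer. Establishing the tightness bound rigorously, so that the convergence can be localized to a compact set, is the step I expect to require the most care.
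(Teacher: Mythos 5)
Your proposal is correct and follows essentially the same route as the paper: reduce to a compact set by a tightness bound on $\hat{W}$, show pointwise convergence in probability of the convex empirical objective to the strictly convex limit (using \hypref{samplingM} for the quadratic term, \hypref{normalityM} to get $\hS_{M\varepsilon}=O_p(n^{-1/2})$, and $\lambda_n\to\lambda_0$ for the penalty), and conclude by standard M-estimation/convexity arguments. The only cosmetic difference is the reference point used for tightness (you compare to $\W$, the paper to $\hS_{mm}^{-1}\hS_{Mz}$), which does not change the argument.
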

\begin{corollary}
\label{prop:regular-consistency}
Assume \hypref{model}, \hypref{samplingM} and \hypref{normalityM}.  Let $\hat{W}$ be a global minimizer of \eq{problemM}. If $\lambda_n$ tends to zero, then $\hat{W}$ converges in probability to $\W$.
\end{corollary}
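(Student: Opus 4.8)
The plan is to derive the result directly from Proposition~\ref{prop:asympt}, which has already established that $\hat{W}$ converges in probability to the minimizer of a limiting deterministic problem whenever $\lambda_n \to \lambda_0 \geqslant 0$. Since the hypothesis here is precisely $\lambda_n \to 0$, I would simply invoke that proposition with $\lambda_0 = 0$, which asserts that $\hat{W}$ converges in probability to the unique global minimizer of
\BEQ
\min_{W \in \rb^{p \times q}} \frac{1}{2} \vect(W - \W)^\top \S_{mm} \vect(W - \W).
\EEQ

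The only remaining task is to identify this minimizer explicitly. Under \hypref{samplingM}, $\S_{mm}$ is invertible; moreover, being the limit in quadratic mean (in Frobenius norm) of the positive semidefinite matrices $\hS_{mm}$, it is itself positive semidefinite, and hence positive definite. Consequently the quadratic form $\vect(W - \W)^\top \S_{mm} \vect(W - \W)$ is nonnegative and vanishes if and only if $\vect(W - \W) = 0$, that is, $W = \W$. Thus $\W$ is the unique minimizer of the limiting problem, and Proposition~\ref{prop:asympt} immediately yields that $\hat{W}$ converges in probability to $\W$.

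Because the argument reduces entirely to a specialization of an already-proved proposition, I expect essentially no obstacle here; the single point warranting a word of care is the strict positive definiteness of $\S_{mm}$, which guarantees both that the limiting minimizer is unique and that it equals $\W$. This follows from combining the invertibility assumed in \hypref{samplingM} with the positive semidefiniteness inherited from the empirical second-moment matrices $\hS_{mm}$.
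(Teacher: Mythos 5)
Your proposal is correct and matches the paper's intended argument: the corollary is obtained by specializing Proposition~\ref{prop:asympt} to $\lambda_0=0$, where the limiting objective reduces to the quadratic form $\frac{1}{2}\vect(W-\W)^\top \S_{mm}\vect(W-\W)$, whose unique minimizer is $\W$ by positive definiteness of $\S_{mm}$. Your extra remark justifying positive definiteness (invertibility from \hypref{samplingM} plus positive semidefiniteness inherited from the empirical matrices $\hS_{mm}$) is a sound and welcome clarification.
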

We now consider finer results when $\lambda_n$ tends to zero at certain rates, slower or faster than $n^{-1/2}$, or exactly at rate $n^{-1/2}$.

\subsection{Fast decay of regularization parameter}
\label{sec:fast}
The following proposition---which is a consequence of  standard results in M-estimation~\citep{shao,VanDerVaart}---considers the case where $n^{1/2} \lambda_n$ is tending to zero, where we obtain that $\hat{W}$ is asymptotically normal with mean
$\W$ and covariance matrix $n^{-1} \sigma^2 \S_{mm}^{-1}$, i.e., for fast decays, the first order expansion is the same as the one with no regularization parameter:
\begin{proposition}
\label{prop:asympt1d}
Assume \hypref{model}, \hypref{samplingM} and \hypref{normalityM}. Let $\hat{W}$ be a global minimizer of \eq{problemM}.
 If $n^{1/2} \lambda_n$ tends to zero,  $n^{1/2}(\hat{W}-\W)$ is asymptotically normal with mean
$\W$ and covariance matrix $ \sigma^2 \S_{mm}^{-1}$.
\end{proposition}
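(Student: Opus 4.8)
The plan is to run a convexity-based M-estimation argument, in the spirit of the results cited from \citet{VanDerVaart} and \citet{shao}. First I would reparametrize around the true matrix by setting $\D = n^{1/2}(W-\W)$, so that $W = \W + n^{-1/2}\D$, and substitute into the equivalent formulation \eq{problemM2bis}. After multiplying the objective by $n$ and discarding the term $-\tr \W^\top \hS_{M\varepsilon}$, which does not depend on $\D$, the minimization over $W$ becomes the minimization over $\D \in \rb^{p \times q}$ of
\BEQ
F_n(\D) = \half \vect(\D)^\top \hS_{mm} \vect(\D) - \tr \D^\top \big( n^{1/2} \hS_{M\varepsilon} \big) + n \lambda_n \big( \| \W + n^{-1/2} \D \|_\ast - \| \W \|_\ast \big),
\EEQ
whose minimizer is exactly $n^{1/2}(\hat{W} - \W)$. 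Each $F_n$ is convex in $\D$, being the sum of a quadratic term (with $\hS_{mm}$ positive semidefinite), an affine term, and the convex trace norm.

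Next I would identify the limit of $F_n$ term by term. By \hypref{samplingM}, $\hS_{mm} \to \S_{mm}$ in probability, so the quadratic term converges to $\half \vect(\D)^\top \S_{mm} \vect(\D)$. By \hypref{normalityM}, $n^{1/2} \hS_{M\varepsilon} = n^{-1/2} \sum_{i=1}^n \varepsilon_i M_i$ converges in distribution to a Gaussian matrix $A$ with $\vect(A) \sim \mathcal{N}(0, \sig \S_{mm})$, so the linear term converges to $-\tr \D^\top A$. For the penalty, the Lipschitz continuity of the trace norm (Appendix~\ref{app:tracenorm}) gives $\big| \| \W + n^{-1/2} \D \|_\ast - \| \W \|_\ast \big| \leqslant n^{-1/2} \| \D \|_\ast$, so the whole penalty term is bounded by $n^{1/2} \lambda_n \| \D \|_\ast$, which vanishes precisely because $n^{1/2}\lambda_n \to 0$. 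Hence, for each fixed $\D$, $F_n(\D)$ converges in distribution to
\BEQ
F(\D) = \half \vect(\D)^\top \S_{mm} \vect(\D) - \tr \D^\top A.
\EEQ

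The limiting criterion $F$ is strictly convex, since $\S_{mm}$ is invertible and positive semidefinite, hence positive definite; its unique minimizer satisfies $\vect(\hat{\D}_\infty) = \S_{mm}^{-1} \vect(A)$, which is Gaussian with mean zero and covariance $\S_{mm}^{-1}(\sig \S_{mm}) \S_{mm}^{-1} = \sig \S_{mm}^{-1}$. The final step is to upgrade pointwise convergence in distribution of the criteria to convergence in distribution of their minimizers. Here I would invoke the convexity lemma for argmin consistency: for convex random functions, finite-dimensional convergence in distribution to a limit with a unique minimizer forces the minimizers to converge in distribution. This yields $n^{1/2}(\hat{W} - \W) \to \hat{\D}_\infty \sim \mathcal{N}(0, \sig \S_{mm}^{-1})$ in distribution, as claimed (with mean zero).

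I expect the main obstacle to be this last transfer step rather than the term-by-term limits: one must rigorously justify that convergence of the convex objectives carries over to their argmins, which is exactly where convexity of the trace norm is essential. The negligibility of the penalty is the delicate point, and it hinges on the sharp rate $n^{1/2}\lambda_n \to 0$; any slower decay would contribute a non-vanishing, non-quadratic term to $F$ and thereby alter the limiting distribution.
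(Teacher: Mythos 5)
Your argument is correct and is essentially the one the paper intends: the paper dispatches this proposition to ``standard results in M-estimation'' and its proof of the neighboring Proposition~\ref{prop:asympt1dbis} uses exactly your route (reparametrize by $\D = n^{1/2}(W-\W)$, establish pointwise convergence in distribution of the convex criteria, bound the penalty by $n^{1/2}\lambda_n\|\D\|_\ast \to 0$, and invoke the epi-convergence/convexity argmin theorem of Geyer); your case is just the $\lambda_0=0$ instance. You also correctly obtain mean zero for the limit, which fixes an evident typo in the statement.
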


We now consider the corresponding rank consistency results, when $\lambda_n$ goes to zero faster than $n^{-1/2}$. The following proposition (proved in Appendix~\ref{app:rank2}) states that for such regularization parameter, the solution has rank strictly greater than $\r$ with probability tending to one and can thus not be rank consistent:
\begin{proposition}
\label{prop:rank2}
Assume  \hypref{model}, \hypref{samplingM} and \hypref{normalityM}. 
If $n^{1/2} \lambda_n$ tends to zero, then $\P(\rank(\hat{W})> \rank(\W))$ tends to one.
\end{proposition}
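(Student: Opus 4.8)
The plan is to read off the rank of $\hat{W}$ from its blocks in the orthonormal bases adapted to the singular subspaces of $\W$, and then to combine the consistency of $\hat{W}$ with the asymptotic normality of Proposition~\ref{prop:asympt1d} through a Schur-complement rank identity. Let $P = [\,\U\;\;\U_\bot\,] \in \rb^{p\times p}$ and $Q = [\,\V\;\;\V_\bot\,]\in\rb^{q\times q}$ be orthogonal; since left/right multiplication by orthogonal matrices preserves rank, $\rank(\hat{W}) = \rank(P^\top\hat{W}Q)$, and I would decompose
\[
P^\top\hat{W}Q = \begin{pmatrix} A & B \\ C & D\end{pmatrix},\qquad
A=\U^\top\hat{W}\V,\; B=\U^\top\hat{W}\V_\bot,\; C=\U_\bot^\top\hat{W}\V,\; D=\U_\bot^\top\hat{W}\V_\bot.
\]

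First I would control the four blocks. Since $n^{1/2}\lambda_n\to 0$ forces $\lambda_n\to 0$, Corollary~\ref{prop:regular-consistency} gives $\hat{W}\to\W$ in probability, so $A\to\U^\top\W\V=\Diag(\s)$, which is invertible; hence with probability tending to one $A$ is invertible and $A^{-1}=O_p(1)$. Because $\U_\bot^\top\W=0$ and $\W\V_\bot=0$, the blocks $B,C,D$ are unchanged if $\hat{W}$ is replaced by $\hat{W}-\W$; Proposition~\ref{prop:asympt1d} gives $n^{1/2}(\hat{W}-\W)=O_p(1)$, so $B,C,D$ are each $O_p(n^{-1/2})$. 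Writing $G$ for the limiting Gaussian of $n^{1/2}(\hat{W}-\W)$, with $\vect(G)$ of covariance $\sig\,\S_{mm}^{-1}$, I obtain that $n^{1/2}D=\U_\bot^\top[n^{1/2}(\hat{W}-\W)]\V_\bot$ converges in distribution to $\U_\bot^\top G\V_\bot$.

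Next I would invoke the Schur-complement rank identity: whenever $A$ is invertible,
\[
\rank(P^\top\hat{W}Q)=\rank(A)+\rank(D-CA^{-1}B)=\r+\rank(D-CA^{-1}B).
\]
The essential order-separation is that the corner block $D$, of order $n^{-1/2}$, dominates the correction $CA^{-1}B$, of order $n^{-1}$: indeed $n^{1/2}CA^{-1}B=n^{-1/2}\,(n^{1/2}C)A^{-1}(n^{1/2}B)=o_p(1)$, so $n^{1/2}(D-CA^{-1}B)$ has the same weak limit $\U_\bot^\top G\V_\bot$ as $n^{1/2}D$.

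Finally I would show this limit is almost surely nonzero. From $\vect(\U_\bot^\top G\V_\bot)=(\V_\bot^\top\otimes\U_\bot^\top)\vect(G)$, the fact that $\V_\bot\otimes\U_\bot$ has orthonormal columns, and the positive-definiteness of $\S_{mm}^{-1}$, the matrix $\U_\bot^\top G\V_\bot$ is a nondegenerate Gaussian in $\rb^{(p-\r)\times(q-\r)}$, hence nonzero (in fact of full rank) with probability one. Since $\{\,Y:\rank(Y)\geq 1\,\}=\{Y\neq 0\}$ is open and rank is scale invariant, the portmanteau theorem yields $\P(\rank(D-CA^{-1}B)\geq 1)\to 1$, and therefore $\P(\rank(\hat{W})\geq\r+1)\to1$, which is the claim. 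The main obstacle is exactly this order-separation step: one must check that the nondegenerate perturbation $D$ of the ``zero'' block genuinely survives in the Schur complement and is not cancelled by the lower-order cross term $CA^{-1}B$, so that the vanishing singular subspace of $\W$ is pushed to strictly positive singular values.
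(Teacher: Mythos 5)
Your proof is correct, but it follows a genuinely different route from the paper's. The paper also starts from the asymptotic normality of $n^{1/2}(\hat{W}-\W)$ (Proposition~\ref{prop:asympt1d}), but then invokes Proposition~\ref{prop:rankDL} --- a non-asymptotic perturbation bound, derived in Appendix~\ref{app:svd} via the Jordan--Wielandt matrix and Cauchy residue calculus, stating that $\rank(W+\D)>r$ whenever $\frac{4}{s_r}\|\D\|_2^2 < \|(\idm-UU^\top)\D(\idm-VV^\top)\|_2$ --- and checks that this event has probability tending to one by a dominated-convergence argument in the scale parameter. You replace that spectral-perturbation lemma with the elementary Schur-complement rank identity $\rank = \rank(A) + \rank(D - CA^{-1}B)$ in the bases adapted to $(\U,\U_\bot)$ and $(\V,\V_\bot)$, using consistency to make $A$ invertible and the order separation $CA^{-1}B = O_p(n^{-1})$ versus $D = O_p(n^{-1/2})$ to isolate the nondegenerate Gaussian limit of the corner block; the nondegeneracy argument via $(\V_\bot\otimes\U_\bot)^\top\S_{mm}^{-1}(\V_\bot\otimes\U_\bot)\succ 0$ is the same quantity that drives the paper's Proposition~\ref{prop:rank1}. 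Your route is more self-contained (it avoids Appendix~\ref{app:svd} entirely for this result) and in fact yields the stronger conclusion that $\P(\rank(\hat{W})=\min\{p,q\})\to 1$, since the limiting block $\U_\bot^\top G\V_\bot$ is almost surely of full rank and $\{\rank\geqslant k\}$ is open; what the paper's choice buys is that Proposition~\ref{prop:rankDL} is a reusable deterministic criterion that is also needed in the proof of Proposition~\ref{prop:rank1}, so the machinery is not built for this proposition alone.
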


\subsection{$n^{-1/2}$-decay of the regularization parameter}
\label{sec:sqrtn}
We first consider regular consistency through the following proposition (proved
in Appendix~\ref{app:asympt1dbis}), then rank consistency (proposition proved in 
Appendix~\ref{app:rank1}):

\begin{proposition}
\label{prop:asympt1dbis}
Assume \hypref{model}, \hypref{samplingM} and \hypref{normalityM}. Let $\hat{W}$ be a global minimizer of \eq{problemM}.
 If $n^{1/2} \lambda_n$ tends to a limit $\lambda_0 > 0$, then $n^{1/2}(\hat{W} - \W)$ converges in distribution to the unique global minimizer of
$$
\min_{\D \in \rb^{p \times q} } \frac{1}{2}
\vect (\D)^\top \S_{mm} \vect(\D) 
-  \tr \D^\top A + \lambda_0  \left[\tr \U^\top \D \V + \| \U_\bot^\top \D \V_\bot \|_\ast \right],
$$
where $\vect (A) \in \rb^{pq}$ is normally distributed with mean zero and
covariance matrix $\sig \S_{mm}$.
\end{proposition}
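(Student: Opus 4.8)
The plan is to follow the standard convexity argument for the asymptotics of regularized M-estimators (as in \citet{VanDerVaart,shao}), reparametrizing around $\W$ at the rate $n^{-1/2}$. Writing $W = \W + n^{-1/2}\D$ in the equivalent form \eq{problemM2bis}, subtracting the terms that do not depend on $\D$, and multiplying by $n$, I obtain a random convex function
\BEAS
V_n(\D) &=& \half \vect(\D)^\top \hS_{mm} \vect(\D) - \tr \D^\top \big(n^{1/2}\hS_{M\varepsilon}\big) \\
&& {}+ (n^{1/2}\lambda_n)\, n^{1/2}\left(\|\W + n^{-1/2}\D\|_\ast - \|\W\|_\ast\right),
\EEAS
whose unique minimizer is exactly $\hat{\D}_n = n^{1/2}(\hat{W}-\W)$, since adding a constant and rescaling by a positive factor does not change the argmin. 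The whole proof then reduces to showing that $V_n$ converges (in distribution, pointwise in $\D$) to the objective $V$ appearing in the statement, and that this convergence transfers to the minimizers.

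For the pointwise limit I would treat the three terms separately. By \hypref{samplingM}, $\hS_{mm}\to\S_{mm}$ in probability, so the quadratic term converges to $\half\vect(\D)^\top\S_{mm}\vect(\D)$. By \hypref{normalityM}, $n^{1/2}\hS_{M\varepsilon} = n^{-1/2}\sum_{i=1}^n \varepsilon_i M_i$ converges in distribution to $A$, so the linear term converges to $-\tr\D^\top A$; a Slutsky argument gives the joint convergence of the quadratic and linear parts. For the regularization term, $n^{1/2}\lambda_n\to\lambda_0$ by assumption, and it remains to evaluate
\[
\lim_{n\to\infty} n^{1/2}\left(\|\W + n^{-1/2}\D\|_\ast - \|\W\|_\ast\right) = \tr \U^\top \D \V + \|\U_\bot^\top \D \V_\bot\|_\ast.
\]
This is precisely the one-sided directional derivative of the (convex) trace norm at $\W$ in the direction $\D$: taking $t = n^{-1/2}\downarrow 0$ in the difference quotient of a convex function yields its directional derivative, which equals the support function $\max_{G\in\partial\|\W\|_\ast}\tr G^\top\D$ of the subdifferential. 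Using the description of $\partial\|\W\|_\ast$ from Proposition~\ref{prop:subdiff}, namely $G = \U\V^\top + N$ with $\U^\top N = 0$, $N\V = 0$ and $\|N\|_2\leqslant 1$, the maximization over $N = \U_\bot M \V_\bot^\top$ with $\|M\|_2\leqslant 1$ produces $\|\U_\bot^\top \D \V_\bot\|_\ast$ (the trace norm being dual to the spectral norm), giving the claimed limit.

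Finally, since $\S_{mm}$ is invertible, $V$ is strictly convex and therefore has a unique global minimizer $\D^\ast$. Each $V_n$ is convex in $\D$, and convex functions that converge pointwise to a convex limit do so uniformly on compact sets; combined with the uniqueness of $\D^\ast$, this forces the minimizers $\hat{\D}_n$ to converge in distribution to $\D^\ast$ (the standard epi-convergence / convexity argmin theorem). I expect the main obstacle to be the directional-derivative expansion of the trace norm: one must control the perturbation of the \emph{vanishing} singular values of $\W$, which move to order $n^{-1/2}$ and contribute the nonsmooth term $\|\U_\bot^\top\D\V_\bot\|_\ast$, while the nonzero singular values shift smoothly and contribute $\tr\U^\top\D\V$. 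This perturbation analysis, together with the uniformity needed to pass from pointwise to local-uniform convergence of $V_n$, relies on the singular-value perturbation tools developed in Appendix~\ref{app:svd} and~\ref{app:tracenorm}.
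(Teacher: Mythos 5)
Your proposal is correct and follows essentially the same route as the paper's proof: reparametrize as $W = \W + n^{-1/2}\D$, establish pointwise convergence in distribution of the resulting convex objectives $V_n$ to $V$ using the directional derivative of the trace norm (Proposition~\ref{prop:dirder}) together with \hypref{samplingM} and \hypref{normalityM}, and conclude via the epi-convergence argmin theorem of \citet{geyer,geyer2} and strict convexity of $V$. The only difference is cosmetic: you keep the factor $n^{1/2}\lambda_n$ explicit before passing to the limit $\lambda_0$, which is slightly more careful than the paper's shortcut.
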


\begin{proposition}
\label{prop:rank1}
Assume  \hypref{model}, \hypref{samplingM} and \hypref{normalityM}. 
If $n^{1/2} \lambda_n$ tends to a limit $\lambda_0 > 0$, then the probability that the rank of $\hat{W}$
is different from the rank of $\W$ is 
converging to $\P( \| \Lambda - \lambda_0^{-1} \Theta \|_2 \leqslant 1 ) \in (0,1)$ where 
$  \Lambda \in \rb^{( p - \r) \times (q-\r)}$ is defined in \eq{lambda} (\mysec{slow}) and
 $\Theta \in \rb^{( p - \r) \times (q-\r)} $ has a normal distribution with mean zero and covariance matrix
 $$\sigma^2 
 \left(  (\V_\bot \otimes \U_\bot)^\top \S_{mm}^{-1} (\V_\bot \otimes \U_\bot) \right)^{-1}. $$
\end{proposition}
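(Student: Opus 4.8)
The plan is to tie the rank of $\hat{W}$ to the bottom-right block of the limiting fluctuation furnished by Proposition~\ref{prop:asympt1dbis}, and then to show that this block vanishes with exactly the claimed probability. I would first write every matrix in the orthonormal bases $[\U\ \U_\bot]$ and $[\V\ \V_\bot]$, so that $\hat{W}$ acquires the blocks $\hat{W}_{11}=\U^\top\hat{W}\V$, $\hat{W}_{12}=\U^\top\hat{W}\V_\bot$, $\hat{W}_{21}=\U_\bot^\top\hat{W}\V$ and $\hat{W}_{22}=\U_\bot^\top\hat{W}\V_\bot$. By Corollary~\ref{prop:regular-consistency}, $\hat{W}\to\W$ in probability, so $\hat{W}_{11}\to\Diag(\s)$ is eventually invertible while the three remaining blocks are $O_p(n^{-1/2})$. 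A block matrix with invertible top-left block has rank $\r$ if and only if its Schur complement $\hat{W}_{22}-\hat{W}_{21}\hat{W}_{11}^{-1}\hat{W}_{12}$ vanishes; since $\hat{W}_{12},\hat{W}_{21}=O_p(n^{-1/2})$ and $\hat{W}_{11}^{-1}=O_p(1)$, the correction term is $O_p(n^{-1})$ and is negligible at scale $n^{-1/2}$. Hence, at the order that governs the fluctuations, $\{\rank(\hat{W})=\r\}$ is driven by $n^{1/2}\hat{W}_{22}=\U_\bot^\top[n^{1/2}(\hat{W}-\W)]\V_\bot$, which by Proposition~\ref{prop:asympt1dbis} converges in distribution to $\D_{22}:=\U_\bot^\top\D\V_\bot$.

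I would then analyse the limiting problem of Proposition~\ref{prop:asympt1dbis}, whose minimizer $\D$ is unique by strict convexity (as $\S_{mm}$ is invertible) and which I write in the same block form. There the penalty is $\lambda_0[\tr\D_{11}+\|\D_{22}\|_\ast]$: linear in $\D_{11}$, a genuine trace norm in $\D_{22}$, and absent from $\D_{12},\D_{21}$. Profiling out the smooth blocks $\D_{11},\D_{12},\D_{21}$ by minimizing the quadratic over them at fixed $\D_{22}$ reduces the problem to one in $\D_{22}$ of the form quadratic plus $\lambda_0\|\D_{22}\|_\ast$. By the description of the subdifferential of the trace norm at the origin used in \eq{opt-vec} (the spectral-norm unit ball), the reduced minimizer satisfies $\D_{22}=0$ if and only if the spectral norm of the reduced gradient at the origin is at most $\lambda_0$. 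Because the trace norm produces exact zeros, the event $\{\D_{22}=0\}$ carries positive probability, which is the source of the limit lying in $(0,1)$.

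The reduced gradient at $\D_{22}=0$ has a deterministic and a stochastic part. Eliminating the active blocks turns the linear penalty $\lambda_0\tr\U^\top\D\V$, through the coupling in $\S_{mm}$, into a deterministic term $\lambda_0\Lambda$ with $\Lambda$ as in \eq{lambda}, while the term $-\tr\D^\top A$ profiles to $-\tr\D_{22}^\top\Theta$. Passing to the orthonormal Kronecker basis $R=[\,\V\otimes\U,\ \V_\bot\otimes\U,\ \V\otimes\U_\bot,\ \V_\bot\otimes\U_\bot\,]$, both the effective quadratic form and $\cov(\Theta)$ equal the Schur complement of $R^\top\S_{mm}R$ with respect to its $\D_{22}$ block, namely $[(R^\top\S_{mm}R)^{-1}]_{22}^{-1}$; using $(R^\top\S_{mm}R)^{-1}=R^\top\S_{mm}^{-1}R$ this is $[(\V_\bot\otimes\U_\bot)^\top\S_{mm}^{-1}(\V_\bot\otimes\U_\bot)]^{-1}$, and a short covariance computation yields $\cov(\Theta)=\sig\,[(\V_\bot\otimes\U_\bot)^\top\S_{mm}^{-1}(\V_\bot\otimes\U_\bot)]^{-1}$, which is positive definite because $\S_{mm}$ is invertible. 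Collecting both parts, the condition $\D_{22}=0$ reads $\|\lambda_0\Lambda-\Theta\|_2\leqslant\lambda_0$, that is $\|\Lambda-\lambda_0^{-1}\Theta\|_2\leqslant 1$ after rescaling, the sign of $\Theta$ being immaterial by symmetry of the Gaussian.

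The last step transfers this from the limit back to $\hat{W}$, and it is the main obstacle: rank is a discontinuous functional and the limiting law has an atom at $\D_{22}=0$, so weak convergence alone does not deliver convergence of the rank probability. I would instead match the finite-$n$ and limiting optimality conditions, expressing $\{\rank(\hat{W})=\r\}$ through the spectral constraint $\|N\|_2\leqslant\lambda_n$ on the dual matrix $N$ of \eq{opt-vec} restricted to the $\U_\bot,\V_\bot$ block, and show that this constraint converges, up to an $o_p(1)$ term, to $\|\Lambda-\lambda_0^{-1}\Theta\|_2\leqslant 1$. Since $\cov(\Theta)$ is positive definite, $\Theta$ has a density and the boundary $\{\|\Lambda-\lambda_0^{-1}\Theta\|_2=1\}$ is null, so the continuous mapping theorem applies and the rank probability converges to $\P(\|\Lambda-\lambda_0^{-1}\Theta\|_2\leqslant 1)$. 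Controlling the $o_p(1)$ discrepancy between the two constraints, and excluding rank below $\r$ (ruled out since $\hat{W}_{11}$ stays invertible), are the remaining technical points.
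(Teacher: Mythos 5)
Your plan matches the paper's proof in all essentials: the limiting law from Proposition~\ref{prop:asympt1dbis}, the reduction of $\{\U_\bot^\top \D \V_\bot=0\}$ to $\|\Lambda-\lambda_0^{-1}\Theta\|_2\leqslant 1$ via the trace-norm subdifferential (the paper's Lemma~\ref{lemma:cond} applied with the extra Gaussian linear term, giving the same covariance $\sigma^2\left((\V_\bot\otimes\U_\bot)^\top\S_{mm}^{-1}(\V_\bot\otimes\U_\bot)\right)^{-1}$), and the resolution of the atom at zero by passing to the finite-$n$ dual certificate $\|U_o^\top(\hS_{mm}(\hat{W}-\W)-\hS_{M\varepsilon})V_o\|_2<\lambda_n$, which is exactly how the paper obtains the matching lower bound. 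The only cosmetic difference is that you certify $\rank(\hat{W})>\r$ on the event $\{\U_\bot^\top\D\V_\bot\neq 0\}$ via the Schur complement of the invertible block $\U^\top\hat{W}\V$, where the paper invokes the perturbation bound of Proposition~\ref{prop:rankDL}.
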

The previous proposition ensures that the estimate $\hat{W}$ cannot be rank consistent with this decay of the regularization parameter. Note that when we take $\lambda_0$ small (i.e., we get closer to fast decays), the probability $\P( \| \Lambda - \lambda_0^{-1} \Theta \|_2 \leqslant 1 )$ tends to zero, while when we take $\lambda_0$ large (i.e., we get closer to slow decays), the same probability tends to zero or one depending on the sign of
$\|\Lambda\|_2-1$. This heuristic argument is made more precise in the following section.

\subsection{Slow decay of regularization parameter}
\label{sec:slow}
When $\lambda_n$ tends to zero more slowly than $n^{-1/2}$, the first order expansion is deterministic, as the following proposition shows (proof in Appendix~\ref{app:asympt2}):

\begin{proposition}
\label{prop:asympt2}
Assume \hypref{model}, \hypref{samplingM} and \hypref{normalityM}.  
Let $\hat{W}$ be a global minimizer of \eq{problemM}. If $n^{1/2} \lambda_n$ tends 
to $+ \infty$ and $\lambda_n$ tends to zero, then ${\lambda_n^{-1}}(\hat{W} - \W)$ converges in probability to the unique global minimizer $\Delta$ of
\BEQ
\label{eq:expansion}
\min_{\D \in \rb^{p \times q} } \frac{1}{2}
\vect (\D)^\top \S _{mm} \vect(\D) 
 +    \tr \U^\top \D \V + \| \U_\bot^\top \D \V_\bot \|_\ast  .
\EEQ
Moreover, we have $\hat{W} = \W + \lambda_n \Delta + O_p(\lambda_n + \zeta_n + \lambda_n^{-1} n^{-1/2})$.
\end{proposition}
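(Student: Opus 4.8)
The plan is to rescale the problem by $\lambda_n$ around $\W$ and reduce the statement to the convergence of minimizers of a sequence of convex functions. Substituting $W = \W + \lambda_n \D$ into the equivalent formulation \eq{problemM2bis}, dropping the $\D$-independent terms and dividing by $\lambda_n^2$, the rescaled estimator $\hD = \lambda_n^{-1}(\hat{W} - \W)$ is exactly the global minimizer of the convex function
\[ G_n(\D) = \frac{1}{2} \vect(\D)^\top \hS_{mm} \vect(\D) - \lambda_n^{-1} \tr \D^\top \hS_{M\varepsilon} + \lambda_n^{-1}\big[ \|\W + \lambda_n \D\|_\ast - \|\W\|_\ast \big]. \]
I would then argue that $G_n$ converges, for each fixed $\D$ and in probability, to the objective $G(\D)$ appearing in \eq{expansion}, and transfer this convergence to the minimizers.

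The convergence $G_n \to G$ is checked term by term. The quadratic term converges to $\tfrac{1}{2}\vect(\D)^\top\S_{mm}\vect(\D)$ because \hypref{samplingM} gives $\E\|\hS_{mm}-\S_{mm}\|_F^2 = O(\zeta_n^2)\to 0$, hence $\hS_{mm}\to\S_{mm}$ in probability. The linear term vanishes: \hypref{normalityM} yields $\hS_{M\varepsilon} = O_p(n^{-1/2})$, so $\lambda_n^{-1}\hS_{M\varepsilon} = O_p\big((n^{1/2}\lambda_n)^{-1}\big) \to 0$ since $n^{1/2}\lambda_n \to +\infty$. Finally, as $\lambda_n\to 0$ the trace-norm increment $\lambda_n^{-1}[\|\W + \lambda_n\D\|_\ast - \|\W\|_\ast]$ tends to the directional derivative of $\|\cdot\|_\ast$ at $\W$, which by the perturbation analysis of \myapp{tracenorm} equals $\tr\U^\top\D\V + \|\U_\bot^\top\D\V_\bot\|_\ast$. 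Thus $G_n\to G$ pointwise, where $G$ is strongly convex since $\S_{mm}$ is invertible and positive semidefinite, hence positive definite, so $G$ admits the unique minimizer $\Delta$.

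Since each $G_n$ is convex and the pointwise limit $G$ is finite with a unique minimizer, pointwise convergence of convex functions upgrades to uniform convergence on compact sets, and the standard stability-of-minimizers argument from M-estimation (cf. \citet{VanDerVaart,shao}) gives $\hD \to \Delta$ in probability, that is, $\lambda_n^{-1}(\hat{W} - \W)$ converges in probability to $\Delta$. A coercivity check — the quadratic term dominates for large $\|\D\|$ uniformly in $n$ — guarantees that the minimizers stay in a fixed compact set with probability tending to one, which is what makes the argument applicable.

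For the quantitative expansion I would bound the displacement $\hD - \Delta$ by propagating the three perturbations of $G_n$ relative to $G$ through the strong convexity of $G$: a perturbation of the (sub)gradient of size $O_p(\epsilon)$ moves the minimizer by $O_p(\epsilon)$. The covariance error contributes $O_p(\zeta_n)$, the noise term contributes $O_p(\lambda_n^{-1}n^{-1/2})$, and the first-order remainder of the trace norm contributes $O_p(\lambda_n)$, which together give $\hD - \Delta = O_p(\lambda_n + \zeta_n + \lambda_n^{-1}n^{-1/2})$, i.e.\ the displayed expansion of $\hat{W}$. The hard part is this last, nonsmooth contribution: one must show that under the perturbation $\W + \lambda_n\D$ the vanishing singular values of $\W$ grow, to leading order, like the singular values of $\lambda_n\,\U_\bot^\top\D\V_\bot$, while the cross-blocks $\U^\top\D\V_\bot$ and $\U_\bot^\top\D\V$ only affect them at order $\lambda_n^2$. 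Establishing this — and thereby the $O(\lambda_n)$ control of the remainder after dividing by $\lambda_n$ — is exactly the second-order singular-value perturbation computation developed in \myapp{svd} and \myapp{tracenorm}, and is where the main technical effort lies.
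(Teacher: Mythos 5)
Your proposal is correct and follows essentially the same route as the paper: rescale by $\lambda_n$ around $\W$, show the rescaled convex objective converges to the deterministic limit (quadratic term via \hypref{samplingM}, linear term vanishing since $\lambda_n^{-1}\hS_{M\varepsilon}=O_p((n^{1/2}\lambda_n)^{-1})$, trace-norm increment to its directional derivative with second-order control from the appendices), and transfer to the minimizers with the error rate $O_p(\zeta_n+\lambda_n^{-1}n^{-1/2}+\lambda_n)$. The only cosmetic difference is that the paper localizes via an explicit bound on $\E\sup_{\|\D\|_2\leqslant M}|V_n(\D)-V(\D)|$ and Markov's inequality, whereas you invoke pointwise-to-uniform convergence of convex functions; both yield the same conclusion.
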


The last proposition gives a first order expansion of $\hat{W}$ around $\W$. From Proposition~\ref{prop:rankDL} (Appendix~\ref{app:tracenorm}), we obtain immediately that if $\U_\bot^\top \Delta \V_\bot$ is different from zero, then the rank of $\hat{W}$ is ultimately strictly larger than $\r$. The condition  $\U_\bot^\top \Delta \V_\bot=0$ is thus necessary for rank consistency when $\lambda_n n^{1/2}$ tends to infinity while $\lambda_n$ tends to zero. The next lemma (proved in Appendix~\ref{lemma:cond}), gives a necessary and sufficient condition for $\U_\bot^\top \Delta \V_\bot=0$.

\begin{lemma}
\label{lemma:cond}
Assume $\S_{mm}$ is invertible, and $\W = \U \Diag(\s) \V^\top$ is the singular value decomposition of $\W$. Then the unique global minimizer of 
$$\vect (\D)^\top \S _{mm} \vect(\D) 
 +    \tr \U^\top \D \V + \| \U_\bot^\top \D \V_\bot \|_\ast  
$$
satisfies  $\U_\bot^\top \Delta \V_\bot=0$ if and only if
$$
\left\|  \left(  (\V_\bot \otimes \U_\bot)^\top \S_{mm}^{-1} (\V_\bot \otimes \U_\bot) \right)^{-1}
\left(  (\V_\bot \otimes \U_\bot)^\top \S_{mm}^{-1} (\V \otimes \U) \vect(\idm) \right)
\right\|_2 \leqslant 1.
$$
\end{lemma}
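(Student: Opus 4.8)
The plan is to treat \eq{expansion} as a convex optimization problem and to characterize its minimizer through the subdifferential optimality condition, the key point being that the non-smooth trace-norm term acts only on the block $\U_\bot^\top \D \V_\bot$. First I would adopt coordinates adapted to the singular value decomposition of $\W$: writing the four blocks $\U^\top \D \V$, $\U^\top \D \V_\bot$, $\U_\bot^\top \D \V$, $\U_\bot^\top \D \V_\bot$ of $\D$, the objective reads $\tfrac12 \vect(\D)^\top \S_{mm} \vect(\D) + \tr(\U^\top \D \V) + \|\U_\bot^\top \D \V_\bot\|_\ast$, where the linear term sees only the $(1,1)$ block and the non-smooth term only the $(2,2)$ block. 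Since $\S_{mm}$ is invertible, and positive semidefinite as a limit of covariance matrices, it is positive definite and the quadratic part is strictly convex, so the minimizer $\Delta$ is unique; the whole question is whether its block $\U_\bot^\top \Delta \V_\bot$ vanishes. The strategy is to build the unique candidate subject to $\U_\bot^\top \D \V_\bot = 0$ and then use convexity to decide exactly when this candidate is the global minimizer.

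Next I would write the optimality condition $0 \in \partial F(\Delta)$. The gradient of the quadratic part is $\S_{mm} \vect(\Delta)$, and since $\tr(\U^\top \D \V) = \langle \U \V^\top, \D \rangle$ its gradient is $\U \V^\top$, with $\vect(\U \V^\top) = (\V \otimes \U)\vect(\idm)$. For the last term, the chain rule for the linear map $\D \mapsto \U_\bot^\top \D \V_\bot$ together with the subdifferential of the trace norm (Proposition~\ref{prop:subdiff}) gives, at a point where this block vanishes, the set $\{\U_\bot S \V_\bot^\top : \|S\|_2 \leqslant 1\}$, whose vectorization is $\{(\V_\bot \otimes \U_\bot)\vect(S) : \|S\|_2 \leqslant 1\}$. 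Hence $\Delta$ with $\U_\bot^\top \Delta \V_\bot = 0$ is optimal if and only if there exists $S$ with $\|S\|_2 \leqslant 1$ and
\[
\S_{mm}\vect(\Delta) + (\V \otimes \U)\vect(\idm) + (\V_\bot \otimes \U_\bot)\vect(S) = 0.
\]

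Finally, I would solve this system. Inverting $\S_{mm}$ expresses $\vect(\Delta)$ in terms of $S$, and imposing the constraint $(\V_\bot \otimes \U_\bot)^\top \vect(\Delta) = 0$ yields a linear equation for $S$. Because $\V_\bot \otimes \U_\bot$ has orthonormal columns and $\S_{mm}^{-1}$ is positive definite, the matrix $(\V_\bot \otimes \U_\bot)^\top \S_{mm}^{-1}(\V_\bot \otimes \U_\bot)$ is invertible, so $S$ is \emph{uniquely} determined as
\[
\vect(S) = -\left( (\V_\bot \otimes \U_\bot)^\top \S_{mm}^{-1}(\V_\bot \otimes \U_\bot) \right)^{-1} (\V_\bot \otimes \U_\bot)^\top \S_{mm}^{-1}(\V \otimes \U)\vect(\idm),
\]
which is exactly the matrix inside the norm in the statement, the sign being irrelevant to $\|\cdot\|_2$. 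The equivalence then follows: if $\|S\|_2 \leqslant 1$, this $S$ is an admissible subgradient, so the constructed $\Delta$ satisfies $0 \in \partial F(\Delta)$ and, by convexity and uniqueness, is the global minimizer with vanishing $(2,2)$ block; conversely, if the unique minimizer has vanishing $(2,2)$ block, its optimality certificate must solve the same linear system and hence equal the $S$ above, forcing $\|S\|_2 \leqslant 1$. I would also note that the overall factor $\tfrac12$ on the quadratic term is immaterial, as it cancels when solving for $S$, which is why the condition matches the form in Lemma~\ref{lemma:cond} where it is dropped.

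The step I expect to be the main obstacle is handling the non-smooth term correctly: one must use the subdifferential of the trace norm \emph{at zero} (the spectral-norm unit ball) rather than at a generic point, and then recognize that once the constraint $\U_\bot^\top \Delta \V_\bot = 0$ is imposed, the dual variable $S$ is pinned down uniquely. This is precisely what converts the existential optimality statement ``there exists $S$ with $\|S\|_2 \leqslant 1$'' into the single scalar inequality of the lemma.
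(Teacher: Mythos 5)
Your proposal is correct and follows essentially the same route as the paper: you form the KKT system of the equality-constrained quadratic problem, identify the dual certificate $S=-\Lambda$ as the uniquely determined Lagrange multiplier, and characterize optimality of the constrained candidate for the full nonsmooth objective via the spectral-norm bound $\|S\|_2\leqslant 1$; the paper does the identical computation, merely phrasing the final optimality check through nonnegativity of directional derivatives (Proposition~\ref{prop:dirder}) rather than through the subdifferential of the trace norm at zero, which is equivalent. Your side remark that the factor $\tfrac12$ on the quadratic term is immaterial for the resulting condition is also correct and resolves a small inconsistency between the lemma's statement and the proof in the appendix.
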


This leads to consider the matrix $\Lambda \in \rb^{( p - \r) \times (q-\r)}$ defined as
\BEQ 
\label{eq:lambda}
\vect(\Lambda) = \left(  (\V_\bot \otimes \U_\bot)^\top \S_{mm}^{-1} (\V_\bot \otimes \U_\bot) \right)^{-1}
\left(  (\V_\bot \otimes \U_\bot)^\top \S_{mm}^{-1} (\V \otimes \U) \vect(\idm) \right),
\EEQ
and the two weak and strict \emph{consistency conditions}:
\BEQ
\label{eq:weak}
\| \Lambda \|_2 \leqslant 1,
\EEQ
\BEQ
\label{eq:strict}
\| \Lambda \|_2 < 1.
\EEQ
 Note that if $\S_{mm}$ is proportional to identity, they are always satisfied because then $\Lambda = 0$.
We can now prove that the condition in \eq{strict} is sufficient for rank consistency when $n^{1/2} \lambda_n$ tends to infinity, while the condition
\eq{weak} is necessary for the existence of a sequence $\lambda_n$ such that the estimate is both consistent and rank consistent (which is a stronger result than restricting $\lambda_n$ to be tending to zero slower than $n^{-1/2}$). The following two theorems
are proved in Appendix~\ref{app:sufficient} and~\ref{app:necessary}:

\begin{theorem}
\label{theo:sufficient}
Assume \hypref{model}, \hypref{samplingM}, \hypref{normalityM}. Let $\hat{W}$ be a global minimizer of \eq{problemM}.  
If the condition in \eq{strict} is satisfied, and if $n^{1/2} \lambda_n$ tends 
to $+ \infty$ and $\lambda_n$ tends to zero, then the estimate $\hat{W}$ is consistent and rank-consistent.
\end{theorem}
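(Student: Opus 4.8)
The plan is to prove the two claims separately. Regular consistency is immediate: since $\lambda_n$ tends to zero, Corollary~\ref{prop:regular-consistency} already gives $\hat{W}\to\W$ in probability. The whole difficulty lies in rank consistency. Because $\hat{W}\to\W$ and the singular values are continuous, the $\r$ largest singular values of $\hat{W}$ converge to the positive singular values of $\W$ while the remaining ones converge to zero; hence $\rank(\hat{W})\geqslant\r$ with probability tending to one, and it suffices to prove $\rank(\hat{W})\leqslant\r$ with probability tending to one. I would establish this by a primal--dual witness argument: exhibit a matrix $\tilde{W}$ of rank exactly $\r$ that satisfies the optimality conditions \eq{opt-vec}, and then invoke uniqueness of the minimizer.

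The witness is built from the optimality conditions themselves. Writing $\hS_{Mz}=\hS_{mm}\W+\hS_{M\varepsilon}$, condition \eq{opt-vec} for a rank-$\r$ candidate $\tilde{W}=\tilde{U}\Diag(\tilde s)\tilde{V}^\top$ reads $\hS_{mm}(\tilde{W}-\W)-\hS_{M\varepsilon}+\lambda_n\tilde{U}\tilde{V}^\top+N=0$, with $\tilde{U}^\top N=0$, $N\tilde{V}=0$ and the crucial inequality $\|N\|_2\leqslant\lambda_n$. I would first solve the \emph{on-support} part of this system, namely its projection onto the orthogonal complement of the $\U_\bot\times\V_\bot$ block (the tangent space to the rank-$\r$ manifold at $\W$); since $\hS_{mm}$ is invertible with probability tending to one (because $\S_{mm}$ is invertible and $\hS_{mm}\to\S_{mm}$ by \hypref{samplingM}), this determines a unique rank-$\r$ matrix $\tilde{W}=\W+O_p(\lambda_n)$ to leading order, consistent with the expansion of Proposition~\ref{prop:asympt2}. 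The residual in the normal directions then \emph{defines} $N$, which is supported on the $\U_\bot\times\V_\bot$ block to leading order.

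It then remains to verify the dual feasibility $\|N\|_2\leqslant\lambda_n$, and this is where the strict condition \eq{strict} enters. Since $n^{1/2}\lambda_n\to\infty$, the stochastic term $\hS_{M\varepsilon}=O_p(n^{-1/2})$ is $o_p(\lambda_n)$, and the cross terms coming from $\U_\bot^\top\tilde{U}$ and $\tilde{V}^\top\V_\bot$ are of higher order, so the leading behaviour of the normal block is deterministic. Eliminating $\tilde{W}-\W$ through $\S_{mm}^{-1}$ and imposing that its $\U_\bot\times\V_\bot$ block vanish (the rank-$\r$ constraint, to first order) reproduces exactly the linear system defining $\Lambda$ in \eq{lambda}, yielding $\lambda_n^{-1}\,\U_\bot^\top N\V_\bot\to-\Lambda$ in probability. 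Consequently $\lambda_n^{-1}\|N\|_2\to\|\Lambda\|_2<1$, so $\|N\|_2<\lambda_n$ with probability tending to one. By the optimality conditions, $\tilde{W}$ is then a global minimizer of \eq{problemM}; since $\hS_{mm}\succ0$ with probability tending to one makes the objective strictly convex, the minimizer is unique, whence $\hat{W}=\tilde{W}$ has rank $\r$.

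The main obstacle is the perturbation analysis underlying the witness construction. The candidate's singular subspaces $\tilde{U},\tilde{V}$ move away from $\U,\V$, and one must control them at the correct order using the SVD perturbation tools of the appendix; in particular one must check that the quadratic Schur-complement correction $\U_\bot^\top\tilde{W}\V_\bot$ genuinely vanishes for the constructed $\tilde{W}$, so that $\tilde{W}$ has rank exactly $\r$ rather than merely approximately. The accompanying bookkeeping---showing that all stochastic fluctuations ($\hS_{M\varepsilon}$ and $\hS_{mm}-\S_{mm}$) are dominated by $\lambda_n$ thanks to $n^{1/2}\lambda_n\to\infty$, so that the deterministic limit $\Lambda$ controls the strict inequality---is routine but must be carried out carefully to legitimately replace $\hS_{mm}$ by $\S_{mm}$ in the identity for $\Lambda$.
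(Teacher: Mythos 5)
Your opening moves (regular consistency via Corollary~\ref{prop:regular-consistency}, and $\rank(\hat{W})\geqslant \r$ with probability tending to one by lower semi-continuity of the rank) match the paper, and the asymptotic computation you aim for at the end --- that the dual block on $\U_\bot\times\V_\bot$ equals $\lambda_n\Lambda+o_p(\lambda_n)$, so that \eq{strict} gives strict dual feasibility --- is exactly the quantity the paper's proof controls. But the primal--dual witness construction that carries the rest of your argument has a genuine gap at its central step, and it is not one you can defer as ``bookkeeping.'' You propose to obtain the witness by solving the stationarity system projected onto the tangent space $\{\Delta:\U_\bot^\top\Delta\V_\bot=0\}$ and assert this ``determines a unique rank-$\r$ matrix $\tilde{W}$ to leading order.'' It does not: a matrix $\W+\Delta$ with $\U_\bot^\top\Delta\V_\bot=0$ but $\U_\bot^\top\Delta\V\neq 0$ or $\U^\top\Delta\V_\bot\neq 0$ generically has rank strictly larger than $\r$ (its small singular values are of order $\|\Delta\|_2^2/\s_r$, cf.\ Proposition~\ref{prop:rankDL}), and the paper explicitly points out after Theorem~\ref{theo:necessary} that these off-diagonal blocks of $\D$ do \emph{not} vanish in general. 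So ``rank $=\r$'' is the vanishing of a nonlinear Schur complement, not of the linear block you solve for; the restricted problem is no longer convex, and an exact witness would require an implicit-function or fixed-point argument on the rank-$\r$ manifold with rotating singular subspaces $\tilde{U},\tilde{V}$. A witness that satisfies the optimality conditions only ``to leading order'' cannot be fed into the uniqueness-of-the-minimizer argument, which needs \eq{opt-vec} to hold exactly. You name this as ``the main obstacle'' but do not overcome it, and it is precisely where the difficulty of the theorem lives.

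The paper avoids constructing a witness altogether. It works with the actual minimizer $\hat{W}$, which by construction satisfies \eq{opt-vec}, so that $\hat{W}$ and $\hS_{mm}\hat{W}-\hS_{Mz}$ have simultaneous singular value decompositions and the singular values of the latter equal $\lambda_n$ \emph{exactly} on the strictly positive singular values of $\hat{W}$. Hence it suffices to show $\| U_o^\top(\hS_{mm}(\hat{W}-\W)-\hS_{M\varepsilon})V_o\|_2<\lambda_n$, where $U_o,V_o$ span all but the top $\r$ singular directions of $\hat{W}$: this forces the corresponding singular values of $\hat{W}$ to be zero. That bound follows from the expansion $\hat{W}=\W+\lambda_n\D+o_p(\lambda_n)$ of Proposition~\ref{prop:asympt2}, Lemma~\ref{lemma:cond} (which gives $\U_\bot^\top\D\V_\bot=0$ under \eq{strict}), and the convergence of the projectors $U_oU_o^\top\to\U_\bot\U_\bot^\top$, $V_oV_o^\top\to\V_\bot\V_\bot^\top$, yielding a norm $\lambda_n\|\Lambda\|_2+o_p(\lambda_n)<\lambda_n$. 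If you want to salvage your route, you should either carry out the manifold-valued witness construction in full, or switch to the paper's direct argument, which reaches the same inequality without ever needing an exact rank-$\r$ candidate.
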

 
\begin{theorem}
\label{theo:necessary}
Assume  \hypref{model}, \hypref{samplingM} and \hypref{normalityM}. 
 Let $\hat{W}$ be a global minimizer of \eq{problemM}. 
If the estimate $\hat{W}$ is consistent and rank-consistent, then the
condition in \eq{weak} is satisfied.
\end{theorem}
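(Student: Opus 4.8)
The plan is to reduce the claim to the slow-decay analysis already assembled in \mysec{slow}: once the regularization sequence is shown to satisfy $n^{1/2}\lambda_n \to \infty$ and $\lambda_n \to 0$, Proposition~\ref{prop:asympt2}, Proposition~\ref{prop:rankDL} and Lemma~\ref{lemma:cond} chain together to force \eq{weak}. The genuine work is therefore to prove that consistency and rank consistency pin the sequence down to this regime, for which I would use repeated subsequence extraction (observing first that under \hypref{samplingM} the matrix $\hS_{mm}$ is eventually invertible, so that $\hat{W}$ is the unique minimizer with probability tending to one and all the cited propositions apply).

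First I would show consistency forces $\lambda_n \to 0$. If not, some subsequence of $\lambda_n$ stays bounded away from zero; passing to a further subsequence along which $\lambda_n \to \lambda_0 > 0$ and invoking Proposition~\ref{prop:asympt}, $\hat{W}$ would converge along it to the minimizer of the population problem with $\lambda_0 > 0$, which is different from $\W$, contradicting consistency. Next, given $\lambda_n \to 0$, I would show rank consistency forces $n^{1/2}\lambda_n \to \infty$ by the same device: if $n^{1/2}\lambda_n$ did not diverge, some subsequence would converge to a finite $\lambda_0' \in [0,\infty)$; for $\lambda_0' = 0$ Proposition~\ref{prop:rank2} gives $\P(\rank(\hat{W}) > \r) \to 1$, and for $\lambda_0' > 0$ Proposition~\ref{prop:rank1} gives $\P(\rank(\hat{W}) \neq \r)$ converging to a limit in $(0,1)$. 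In either case the probability of recovering the correct rank does not tend to one along that subsequence, contradicting rank consistency.

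With $n^{1/2}\lambda_n \to \infty$ and $\lambda_n \to 0$ established, Proposition~\ref{prop:asympt2} yields that $\lambda_n^{-1}(\hat{W} - \W)$ converges in probability to the minimizer $\Delta$ of \eq{expansion}. Rank consistency means $\rank(\hat{W}) = \r$ with probability tending to one, whereas Proposition~\ref{prop:rankDL} guarantees that $\U_\bot^\top \Delta \V_\bot \neq 0$ would make $\rank(\hat{W})$ ultimately strictly larger than $\r$; hence $\U_\bot^\top \Delta \V_\bot = 0$. Finally Lemma~\ref{lemma:cond} identifies $\U_\bot^\top \Delta \V_\bot = 0$ with $\| \Lambda \|_2 \leqslant 1$, which is exactly \eq{weak}.

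I expect the regime-identification step to be the main obstacle, since it must handle arbitrary sequences $\lambda_n$---including oscillating ones for which $n^{1/2}\lambda_n$ has no limit---whereas Propositions~\ref{prop:rank2} and~\ref{prop:rank1} are each stated only under a definite limiting behavior of $n^{1/2}\lambda_n$; the subsequence extractions are precisely what bridge this gap. Once the slow regime is secured, the remainder is a direct assembly of the first-order expansion, the rank-perturbation bound, and the algebraic characterization in Lemma~\ref{lemma:cond}, with no further estimates required.
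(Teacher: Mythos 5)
Your proof is correct and follows essentially the same route as the paper's: rule out $\liminf\lambda_n>0$ via Proposition~\ref{prop:asympt}, rule out zero and finite strictly positive accumulation points of $n^{1/2}\lambda_n$ via Propositions~\ref{prop:rank2} and~\ref{prop:rank1} so that $n^{1/2}\lambda_n\to+\infty$, then combine Proposition~\ref{prop:asympt2}, Proposition~\ref{prop:rankDL} and Lemma~\ref{lemma:cond} to obtain $\|\Lambda\|_2\leqslant 1$. If anything, your statement of the accumulation-point step is cleaner than the paper's, whose write-up of this case analysis contains some apparent typos.
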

As opposed to the Lasso, where \eq{weak} is a necessary and sufficient condition for rank consistency~\citep{yuanlin}, this is not even true in general for the group Lasso~\citep{grouplasso}. Looking at the limiting case $\| \Lambda \|_2=1$ would similarly lead to additional but more complex  sufficient and necessary conditions, and is left out for future research.

Moreover, it may seem surprising that even when the sufficient condition \eq{strict}
is fulfilled, that the first order expansion of $\hat{W}$, i.e., $\hat{W}=
\W + \lambda_n \D + o_p(\lambda_n)$ is such that $\U_\bot^\top \D \V_\bot =0$, but nothing is said about $\U_\bot^\top \D \V$
and $\U^\top \D \V_\bot$, which are not equal to zero in general. This is due to the fact that the first $\r$ singular vectors $U$ and $V$ of $\W + \lambda_n \D $ are not fixed; indeed, the $\r$ first singular vectors (i.e., the implicit features) do rotate but with no contribution on $\U_\bot \V_\bot^\top$. This is to be contrasted with the adaptive version where asymptotically the first order expansion has constant singular vectors (see \mysec{adaptive}).

Finally, in this paper, we have only proved whether the probability of correct rank selection tends to zero or one. Proposition~\ref{prop:rank1} suggests that when $\lambda_n n^{1/2}$ tends to infinity  slowly, then this probability is close to 
$\P( \| \Lambda - \lambda_n^{-1} n^{1/2} \Theta \|_2 \leqslant 1 ) $, where $\Theta$ has a normal distribution with known covariance matrix, which converges to one exponentially fast when $\|\Lambda \|_2<1$. We are currently investigating additional assumptions under which such results are  true and thus estimate the convergence rates of the probability
of good rank selection as done by~\citet{Zhaoyu} for the Lasso.

\subsection{Factored second order moment}
Note that in the situation where $n_x$ points in $\rb^{p}$ and $n_y$ points in $\rb^{q}$ are sampled
i.i.d and a random subset of $n$ points in selected, then, we can refine the condition as follows (because
$\S_{mm} = \S_{yy} \otimes \S_{xx}$):
$$ \Lambda = ( \U_\bot^\top \S_{xx}^{-1} \U_\bot)^{-1} \U_\bot^\top \S_{xx}^{-1} \U 
\V^\top \S_{yy}^{-1} \V_\bot  ( \V_\bot^\top \S_{yy}^{-1} \V_\bot)^{-1},$$
which is equal to (by the expression of inverses of partitioned matrices):
$$\Lambda =
 ( \U_\bot^\top \S_{xx} \U)  ( \U ^\top\S_{xx}\U)^{-1}
  ( \V^\top \S_{yy}\V)^{-1}  ( \V^\top \S_{yy} \V_\bot)  .
$$
This also happens when $M_i = x_i y_i^\top$ and $x_i$ and $y_i$ independent for all $i$.

\subsection{Corollaries for the Lasso and group Lasso}
\label{sec:lasso}
For the Lasso or the group Lasso, all proposed
 results in \mysec{slow} should hold with the additional conditions that ${W}$ and $\D$ are diagonal
(block-diagonal for the group Lasso). In this situation, the singular values of the diagonal matrix $W
= \Diag(w)$ are
the norms of the diagonal blocks, while the left singular vectors are equal to the normalized versions
of the block (the signs for the Lasso). 
However, the results developed in \mysec{slow} 
do not immediately apply since
the assumptions regarding the invertibility of the second order moment matrix is not satisfied. For those problems, all matrices $M$ that are ever considered belong to a strict subspace
of $\rb^{p \times q}$ and we need to satisfy invertibility on that subspace.

More precisely, we assume that all matrices $M$ are such that 
$\vect(M) = H x $ where $H$ is a given \emph{design matrix} in $\rb^{pq \times s}$ where $s$ is the number of implicit parameter and $x \in \rb^s$. If we replace the invertibility of $\S_{mm}$ by the invertibility of $H^\top \S_{mm} H$, then all results presented in \mysec{slow} are valid, in particular, the matrix $\Lambda$ may be written as 
\begin{multline}
\label{eq:LambdaH}
\vect(\Lambda) = \left(  (\V_\bot \otimes \U_\bot)^\top H (  H^\top \S_{mm} H)
^{-1} H^\top (\V_\bot \otimes \U_\bot) \right)^\dagger \\
\times
\left(  (\V_\bot \otimes \U_\bot)^\top H ( H^\top \S_{mm}H)^{-1} H^\top (\V \otimes \U) \vect(\idm) \right),
\end{multline}
where $A^\dagger$ denotes the pseudo-inverse of $A$~\citep{golub83matrix}.

We now apply \eq{LambdaH} to the case of the group Lasso (which includes the Lasso as a special case). In this situation, we have   $M = \Diag(x_1,
\dots,x_m)$ and each $x_j \in \rb^{d_j}$, $j=1,\dots,x_m$; we consider $w$ as being defined by blocks $w_1,
\dots,w_m$, where each $w_j \in \rb^{d_j}$.
The design matrix $H$ is such that 
$ H w  = \vect( \Diag(w))$ and the matrix $H^\top \S_{mm} H$ is exactly equal to the joint
covariance matrix $\S_{xx}$ of $x=(x_1,\dots,x_m)$. Without loss of generality, we assume that the generating sparsity patttern corresponds to the first $\r$ blocks. We can then compute the singular value decomposition in closed form as
 $\U = {( \Diag(\w_i /\|\w_i\|)_{i\leqslant \r} \choose 0 } $,
$\V = {\idm \choose 0 } $ and $\s = ( \|\w_j\| )_{j \leqslant \r}$. If we let denote, for each $j$, $\mathbf{O}_j$ a basis of the subspace orthogonal to $\w_j$, we have:
$\U_\bot =\left( \begin{array}{cc}
\Diag(\mathbf{O}_i)_{i\leqslant \r} & 0 \\ 0 & \idm \end{array}\right) $ and
$\V_\bot = {0  \choose \idm } $. We can put these singular vectors into \eq{LambdaH} and get
$
 ( H^\top \S_{mm}H)^{-1} H^\top (\V \otimes \U) \vect(\idm) =
 (\S_{xx}^{-1})_{  \J, \J^c} \eta_\J
 $,
 where $\J = \{1,\dots,\r\}$ and $\eta_\J$ is the vector of normalised $\w_j$,
 $j \in \J$.
 Thus, for the group Lasso, we finally obtain:
\BEAS
   \| \Lambda \|_2 & =  & \left\| \Diag\left[ (  (\S_{xx}^{-1})_{\J^c \J^c} )^{-1} 
(\S_{xx}^{-1})_{  \J, \J^c} \eta_\J \right] \right\|_2 \\
& = &  \left\| \Diag\left[ (  \S_{xx})_{\J^c \J} 
(\S_{xx})_{  \J, \J}^{-1} \eta_\J \right] \right\|_2 
\mbox{ by the partitioned matrices inversion lemma,} \\
& = & 
\max_{ i \in \J^c }   \left\|  \S_{x_i x_\J  } \S_{x_\J  x_\J }^{-1}
\eta_\J   \right\| .
\EEAS

The condition on the invertibility of $H^\top \S_{mm} H$ is exactly the invertibility of the full joint covariance matrix of $x=(x_1,
\dots,x_m)$ and is a standard assumption for the Lasso or the group Lasso~\citep{yuanlin,Zhaoyu,zou,grouplasso}. Moreover the condition $\| \Lambda \|_2 \leqslant 1$ is exactly the one for the
group Lasso~\citep{grouplasso}, where
the pattern consistency is replaced by the consistency for the number of non zero groups.
  
 Note that we only obtain a result in terms of numbers of selected groups of variables and not in terms of the identities of the groups themselves. However, because of regular consistency, we know that at least the $\r$ true groups will be selected, and then correct model size is asymptotically equivalent to the correct groups being selected.

\section{Adaptive version}
\label{sec:adaptive}
We can follow the adaptive version of the Lasso to provide a consistent algorithm with no consistency conditions
such as \eq{weak} or \eq{strict}. More precisely,
we consider the least-square estimate $ \vect(\hat{W}_{LS}) = \hS_{mm}^{-1} \vect(\hS_{Mz})$. We have the following well known result for least-square regression:
\begin{lemma}
\label{lemma:LS}
Assume \hypref{model}, \hypref{samplingM} and \hypref{normalityM}. Then
$n^{1/2}( \hS_{mm}^{-1} \vect( \hS_{Mz}) - \vect(\W ))$ is converging in distribution to a normal distribution with zero mean
and covariance matrix $\sigma^2 \S_{mm}^{-1}$.
\end{lemma}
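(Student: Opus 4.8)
The plan is to reduce the statement to the two high-level sampling assumptions by writing the least-squares error in closed form, and then to recombine the pieces with the standard asymptotic toolkit. First I would use the identity $\hS_{M\varepsilon} = \hS_{Mz} - \hS_{mm} \W$ (stated just before \eq{problemM2bis}) to substitute $\vect(\hS_{Mz}) = \hS_{mm}\vect(\W) + \vect(\hS_{M\varepsilon})$, recalling the convention that $\vect(\hS_{mm}\W) = \hS_{mm}\vect(\W)$. Plugging this into $\vect(\hat{W}_{LS}) = \hS_{mm}^{-1}\vect(\hS_{Mz})$ and cancelling the $\vect(\W)$ term gives the clean expression
\BEQ
n^{1/2}\left(\hS_{mm}^{-1}\vect(\hS_{Mz}) - \vect(\W)\right) = \hS_{mm}^{-1}\left( n^{-1/2}\sum_{i=1}^n \varepsilon_i \vect(M_i)\right),
\EEQ
using that $n^{1/2}\vect(\hS_{M\varepsilon}) = n^{-1/2}\sum_i \varepsilon_i\vect(M_i)$. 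This isolates exactly the two ingredients the assumptions control.

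Next I would treat the two factors on the right separately and recombine them. By \hypref{normalityM}, the vector $n^{-1/2}\sum_i \varepsilon_i\vect(M_i)$ converges in distribution to a centred normal with covariance $\sig\S_{mm}$. For the matrix factor, \hypref{samplingM} together with Markov's inequality yields $\hS_{mm} \to \S_{mm}$ in probability; since $\S_{mm}$ is invertible and matrix inversion is continuous in a neighbourhood of $\S_{mm}$, the continuous mapping theorem gives $\hS_{mm}^{-1}\to \S_{mm}^{-1}$ in probability. Applying Slutsky's theorem to the product of a sequence converging in probability to a constant matrix and a sequence converging in distribution, the right-hand side converges in distribution to $\S_{mm}^{-1}Z$ with $Z$ normal, centred, of covariance $\sig\S_{mm}$.

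Finally I would identify the limit law. The variable $\S_{mm}^{-1}Z$ is a linear image of a centred Gaussian, hence centred Gaussian, with covariance $\S_{mm}^{-1}(\sig\S_{mm})(\S_{mm}^{-1})^\top$; because $\hS_{mm}$ is symmetric (being an average of outer products $\vect(M_i)\vect(M_i)^\top$), its limit $\S_{mm}$ and therefore $\S_{mm}^{-1}$ are symmetric, so this collapses to $\sig\S_{mm}^{-1}$, as claimed.

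The only genuine subtlety, and the step I would be most careful about, is the passage from $\hS_{mm}\to\S_{mm}$ in probability to $\hS_{mm}^{-1}\to\S_{mm}^{-1}$: one must note that $\hS_{mm}$ need not be invertible for small $n$, so $\hat{W}_{LS}$ is strictly speaking defined only on the event that $\hS_{mm}$ is invertible. This event has probability tending to one, since its complement is contained in $\{\|\hS_{mm}-\S_{mm}\|_F \geqslant \tau\}$, where $\tau>0$ is the smallest eigenvalue of $\S_{mm}$, and this set is negligible by the same Markov argument used above. Hence it does not affect the limiting distribution, and everything else is a routine application of the standard asymptotic results of \citet{VanDerVaart} and \citet{shao}.
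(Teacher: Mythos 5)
Your proof is correct. Note that the paper itself offers no proof of Lemma~\ref{lemma:LS}: it is introduced as a ``well known result for least-square regression'' and stated without argument, so there is nothing to compare against beyond confirming that your write-up is exactly the standard argument the paper is implicitly invoking. The decomposition $\hS_{mm}^{-1}\vect(\hS_{Mz}) - \vect(\W) = \hS_{mm}^{-1}\vect(\hS_{M\varepsilon})$, the use of \hypref{normalityM} for the numerator, of \hypref{samplingM} plus Markov's inequality and the continuous mapping theorem for $\hS_{mm}^{-1} \to \S_{mm}^{-1}$, and Slutsky's theorem to conclude, is precisely how the assumptions were designed to be used (the same three ingredients reappear in the paper's proofs of Propositions~\ref{prop:asympt} and~\ref{prop:asympt1dbis}); your remark on handling the event where $\hS_{mm}$ fails to be invertible is a worthwhile refinement that the paper glosses over.
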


We consider the singular value decomposition of 
$\hat{W}_{LS} = U_{LS} \Diag(s_{LS}) V_{LS}^\top$, where $s_{LS} \geqslant 0$. With probability tending to one, $\min \{p,q\}$ singular values
are strictly positive (i.e. the rank of $\hat{W}_{LS}$ is full). We consider the \emph{full} decomposition where $U_{LS}$ and $V_{LS}$ are orthogonal \emph{square} matrices and the matrix $\Diag(s_{LS})$ is rectangular. We complete the singular values $s_{LS} \in \rb^{ \min\{p,q\}}$ by $n^{-1/2}$
to reach dimensions $p$ and $q$ (we keep the same notation for both dimensions for simplicity).

For $\gamma \in (0,1]$, we let denote 
$$A = 
U_{LS} \Diag(s_{LS})^{-\gamma} U_{LS}^\top 
\in \rb^{p \times p} \mbox{ and } B = V_{LS} \Diag(s_{LS})^{-\gamma} V_{LS}^\top
 \in \rb^{q \times q},$$ two positive definite symmetric matrices,
 and, following the adaptive Lasso of~\citet{zou},  we consider replacing
  $\| W\|_\ast$ by 
$ \| A W B \|_\ast$---note that in the Lasso special case, this exactly corresponds to the adaptive Lasso of~\citet{zou}. We obtain the following consistency theorem (proved in Appendix~\ref{app:adaptive}):
\begin{theorem}
\label{theo:adaptive}
Assume \hypref{model}, \hypref{samplingM} and \hypref{normalityM}. 
If $ \gamma \in(0,1]$, $n^{1/2} \lambda_n$ tends to 0 and $\lambda_n n^{1/2 + \gamma/2}$ tends to infinity, then any global minimizer
$\hat{W}_A$
of  
$$\frac{1}{2n} \sum_{i=1}^n ( z_i - \tr W^\top M_i)^2 + \lambda_n \| A W B \|_\ast
$$
is consistent and rank consistent. Moreover, $n^{1/2}\vect( \hat{W}_A - \W)$ is converging in distribution to a normal distribution with mean zero and covariance matrix
$$ \sigma^2 (\V \otimes \U) \left[(\V \otimes \U)^\top \S_{mm} (\V \otimes \U) \right]^{-1} (\V \otimes \U)^\top.
$$
\end{theorem}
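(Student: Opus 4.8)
The plan is to combine a convexity (argmin-convergence) argument for the limiting law with a separate subgradient argument for rank consistency, exploiting that the adaptive weights $A,B$ act completely differently on the signal and noise singular directions. By Lemma~\ref{lemma:LS}, $\hat{W}_{LS}$ is $n^{-1/2}$-consistent, so its top $\r$ singular values converge to the positive $\s_j$ while the remaining ones (together with the $n^{-1/2}$ padding) are of order $n^{-1/2}$; raised to the power $-\gamma$, the weights therefore stay $O_p(1)$ on the ranges of $\U,\V$ but blow up like $n^{\gamma/2}$ on the ranges of $\U_\bot,\V_\bot$. I would first record this split, writing $A=\U\Diag(\s)^{-\gamma}\U^\top+n^{\gamma/2}\U_\bot(\cdots)\U_\bot^\top+o_p(\cdots)$ and analogously for $B$, together with the control on the $O_p(n^{-1/2})$ cross terms coming from the rotation of the estimated singular subspaces. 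I then reparametrize $W=\W+n^{-1/2}\D$ and study the rescaled convex function $G_n(\D)=n[\,g_n(\W+n^{-1/2}\D)-g_n(\W)\,]$, where $g_n$ is the objective. Using the rewriting in \eq{problemM2bis} for the data part, the smooth contribution is $\frac12\vect(\D)^\top\hS_{mm}\vect(\D)-\tr\D^\top(n^{1/2}\hS_{M\varepsilon})$, which by \hypref{samplingM} and \hypref{normalityM} converges to $\frac12\vect(\D)^\top\S_{mm}\vect(\D)-\tr\D^\top A$ with $\vect(A)\sim N(0,\sig\S_{mm})$; this part is routine.

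\textbf{The penalty limit (main obstacle).} The crux is the limit of $n\lambda_n[\,\|A(\W+n^{-1/2}\D)B\|_\ast-\|A\W B\|_\ast\,]$. Decomposing $\D$ into blocks relative to $(\U,\U_\bot)$ and $(\V,\V_\bot)$ and expanding the trace norm about the rank-$\r$ matrix $A\W B\approx\U\Diag(\s^{1-2\gamma})\V^\top$ via the perturbation results of Appendix~\ref{app:tracenorm}, one tracks how the weight amplification $n^{\gamma/2}$ feeds into each block. I expect to show that the signal block $\U^\top\D\V$ contributes $O(n^{1/2}\lambda_n)\to0$, so that direction is asymptotically unpenalized, whereas the three complementary blocks $\U^\top\D\V_\bot$, $\U_\bot^\top\D\V$, $\U_\bot^\top\D\V_\bot$ are amplified enough that, under $\lambda_n n^{1/2+\gamma/2}\to+\infty$, their penalties diverge unless those blocks vanish; equivalently, the rescaled penalty epi-converges to the indicator of the tangent space $\mathcal{T}=\{\U M\V^\top:M\in\rb^{\r\times\r}\}$. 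This two-sided calibration---$n^{1/2}\lambda_n\to0$ removing bias on $\mathcal{T}$ and $\lambda_n n^{1/2+\gamma/2}\to\infty$ killing the complement---is exactly where both hypotheses on $\lambda_n$ enter. It is the hard part, because the blocks live at different powers of $n$ (the expansion is genuinely multi-scale) and the off-diagonal blocks enter only through higher-order terms of the trace norm that must be estimated carefully.

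\textbf{Limiting law.} Granting this, the limiting objective is $G(\D)=\frac12\vect(\D)^\top\S_{mm}\vect(\D)-\tr\D^\top A+\iota_{\mathcal{T}}(\D)$. Restricting to $\D=\U M\V^\top$, writing $\vect(\D)=(\V\otimes\U)\vect(M)$ and minimizing the resulting quadratic in $M$ gives $\vect(\D)=(\V\otimes\U)[(\V\otimes\U)^\top\S_{mm}(\V\otimes\U)]^{-1}(\V\otimes\U)^\top\vect(A)$, whose covariance is exactly the stated matrix (using $\vect(A)\sim N(0,\sig\S_{mm})$ and cancelling one factor $(\V\otimes\U)^\top\S_{mm}(\V\otimes\U)$). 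Since each $G_n$ is convex, the argmin-convergence lemma for convex processes upgrades the pointwise (in distribution) convergence $G_n\to G$ to convergence of the minimizer $n^{1/2}(\hat{W}_A-\W)=\arg\min G_n$ in distribution to $\arg\min G$, yielding both $n^{-1/2}$-consistency and the asymptotic law.

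\textbf{Rank consistency.} Finally I would establish $\rank(\hat{W}_A)=\r$ with probability tending to one by a separate argument from the optimality conditions \eq{opt-vec} (Proposition~\ref{prop:subdiff}) applied to the weighted problem. Regular consistency keeps the $\r$ largest singular values of $\hat{W}_A$ bounded away from zero, so $\rank(\hat{W}_A)\geqslant\r$. For the reverse inequality, the subgradient bound $\|N\|_2\leqslant\lambda_n$ on the $\U_\bot,\V_\bot$ block, combined with the $n^{\gamma/2}$ amplification of the weights there, forces the remaining singular values to be \emph{exactly} zero with probability tending to one; this exact-zeroness (a non-open event, not delivered by argmin convergence alone) is the second delicate point, and is again where $\lambda_n n^{1/2+\gamma/2}\to\infty$ is used.
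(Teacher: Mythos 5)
Your proposal follows essentially the same route as the paper's proof: the same split of the adaptive weights $A,B$ into an $O_p(1)$ action on the ranges of $\U,\V$ and an $n^{\gamma/2}$ amplification on their complements, the same rescaling $W=\W+n^{-1/2}\D$ with epi-/argmin-convergence of the convex objectives, the same conclusion that the penalty asymptotically vanishes on $\{\U M\V^\top\}$ and diverges on the three complementary blocks (forcing the constrained quadratic whose minimizer has the stated covariance), and the same use of the optimality conditions with the damped dual certificate $A^{-1}(\cdot)B^{-1}$ to get exact rank $\r$. The two points you flag as delicate are precisely the ones the paper handles by the block expansions of $A\U$, $A\U_\bot$, $A^{-1}\U$, $A^{-1}\U_\bot$, so the plan is sound and matches the paper.
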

Note the restriction $\gamma \leqslant 1$ which is due to the fact that the least-square estimate $\hat{W}_{LS}$ only estimates the singular subspaces at rate $O_p(n^{-1/2})$. In \mysec{simulations}, we illustrate the previous theorem on synthetic examples. In particular, we exhibit some singular behavior for the limiting case $\gamma=1$.

\section{Algorithms and simulations}
\label{sec:algorithms}
In this section we provide a simple algorithm to solve problems of the form
\BEQ
\label{eq:opt}
 \min_{W \in \rb^{p \times q}} \frac{1}{2}\vect(W)^\top \Sigma \vect(W) - \tr W^\top Q + \lambda \| W\|_\ast,
 \EEQ
where $\Sigma \in \rb^{pq \times pq}$ is a  positive definite matrix (note that we do not restrict $\Sigma$ to
be of the form $\Sigma = A \otimes B$ where $A$ and $B$ are  positive semidefinite matrices of size
$p \times p$ and $q \times q$). We assume that $\vect(Q)$ is in the column space of $\S$, so that the optimization problem is bounded from below (and thus the dual is feasible). In our setting, we have $\S = \hS_{mm}$ and $Q = \hS_{Mz}$.

We focus on problems where $p$ and $q$ are not too large so that we can apply Newton's method to obtain convergence up to machine precision, which is required for the fine analysis of rank consistency in \mysec{simulations}. For more efficient algorithms with larger $p$ and $q$, see~\citet{Srebro2005Maximum,Srebro2005Fast} and~\citet{lowrank}.

Because the dual norm of the trace norm is the spectral norm (see Appendix~\ref{app:tracenorm}), the dual is easily obtained as
\BEQ
\label{eq:dual}
\max_{V \in \rb^{p \times q}, \| V \|_2 \leqslant 1} - \frac{1}{2} \vect(Q - \lambda V )^\top \Sigma^{-1} \vect(Q - \lambda V ) .
\EEQ
Indeed, we have:
\BEAS
& &  \min_{W \in \rb^{p \times q}} \frac{1}{2}\vect(W)^\top \Sigma \vect(W) - \tr W^\top Q + \lambda \| W\|_\ast\\
& = & 
\min_{W \in \rb^{p \times q}} \max_{V \in \rb^{p \times q}, \| V \|_2 \leqslant 1}  \frac{1}{2}\vect(W)^\top \Sigma \vect(W) - \tr W^\top Q  + \lambda \tr V^\top W \\
& = & \max_{V \in \rb^{p \times q}, \| V \|_2 \leqslant 1}  \min_{W \in \rb^{p \times q}}  \frac{1}{2}\vect(W)^\top \Sigma \vect(W) - \tr W^\top Q  + \lambda \tr V^\top W \\
& = & \max_{V \in \rb^{p \times q}, \| V \|_2 \leqslant 1} - \frac{1}{2} \vect(Q - \lambda V )^\top \Sigma^{-1} \vect(Q - \lambda V )  , 
\EEAS
where strong duality holds because both the primal and dual problems are convex and strictly feasible~\citep{boyd}.

\subsection{Smoothing}
The problem in \eq{opt} is convex but non differentiable; in this paper we consider adding
a strictly convex function to its dual in \eq{dual} in order to make it differentiable, while controlling the increase of duality gap yielded by the added function~\citep{bonnans}.

We thus consider the following smoothing of the trace norm, namely we define
$$ F_{\varepsilon}(W) = \max_{V \in \rb^{p \times q}, \| V \|_2 \leqslant 1} \tr V^\top W -  \varepsilon  B(V),$$
where $B(V)$ is a spectral function (i.e., that depends only on singular values of $V$, equal to
$B(V) = \sum_{i=1}^{ \min\{p,q\}} b(s_i(V))$ where $b(s) = ( 1+ s) \log(1+s)
+(1-s) \log(1-s)$ if $|s| \leqslant 1$ and $+\infty$ otherwise ($s_i(V)$ denotes the $i$-th largest singular
values of $V$).  This function $F_\varepsilon$ may be computed in closed form as:
$$ F_\varepsilon(W)
= \sum_{i=1}^{ \min\{p,q\}} b^\ast(s_i(W)),
$$
where $b^\ast(s) = \varepsilon \log ( 1 + e^{v/\varepsilon}) +
 \varepsilon \log ( 1 + e^{-v/\varepsilon})  - 2 \varepsilon \log 2$. These functions
 are plotted in \myfig{barrier}; note that $| b^\ast(s) - |s||$ is uniformly bounded by $2 \log 2$.
 
 \begin{figure}
\begin{center}
\includegraphics[scale=.55]{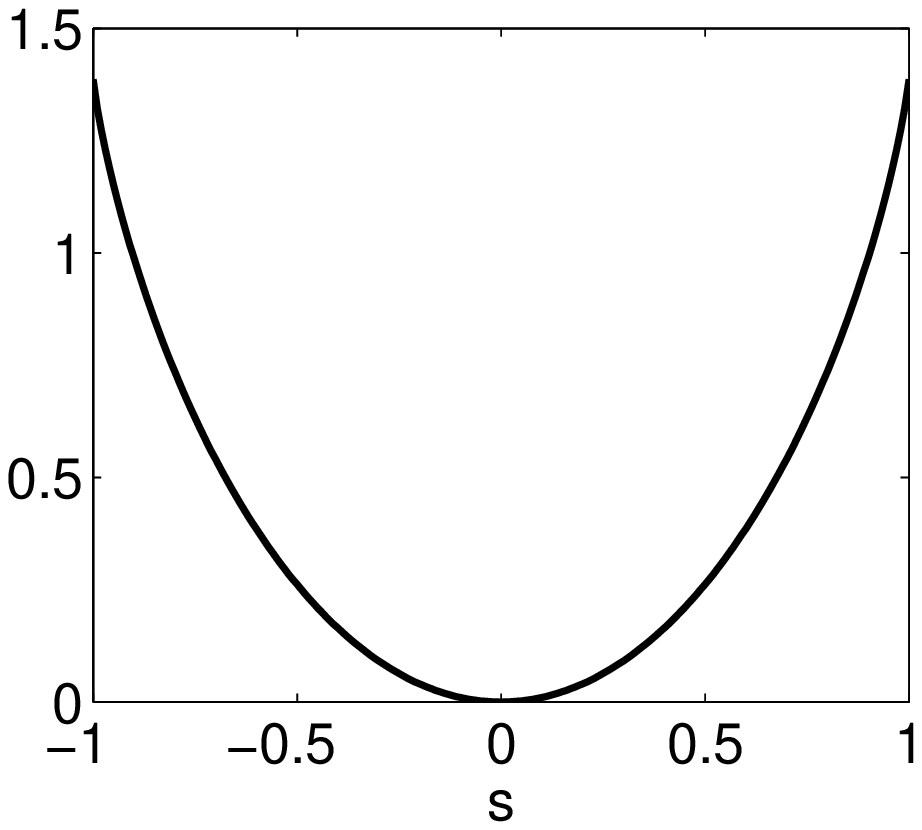} \hspace*{1cm}
\includegraphics[scale=.55]{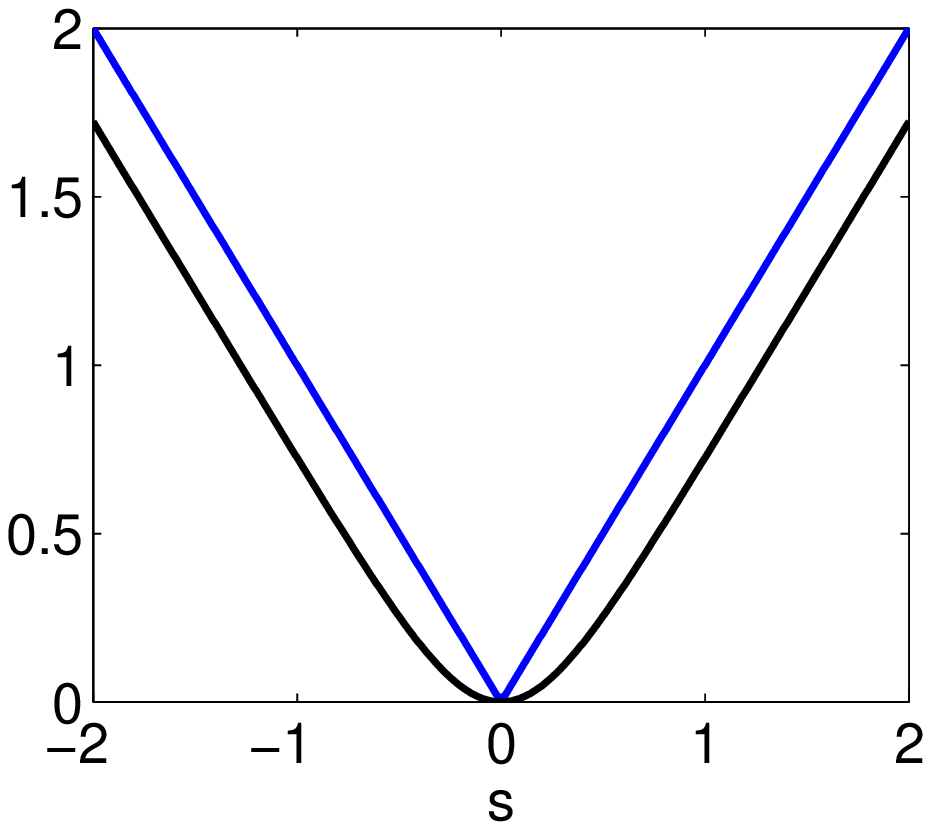}
\end{center}

\vspace*{-.5cm}

\caption{Spectral barrier functions: (left) primal function $b(s)$ and (right) dual functions $b^\ast(s)$.}
\label{fig:barrier}
\end{figure}

We finally get the following pairs of primal/dual optimization problems:
$$ \min_{W \in \rb^{p \times q}} \frac{1}{2}\vect(W)^\top \Sigma \vect(W) - \tr W^\top Q + \lambda  F_{\varepsilon/\lambda}(W), $$
$$ \max_{V \in \rb^{p \times q}, \| V \|_2 \leqslant 1} - \frac{1}{2} \vect(Q - \lambda V )^\top \Sigma^{-1} \vect(Q - \lambda V )  
 - \varepsilon B(V).$$ 
 
  We can now optimize directly in the primal formulation which is infinitely differentiable, using  Newton's method. Note that the stopping criterion
 should be an $\varepsilon \times \min \{p,q\}$ duality gap, as the controlled smoothing also leads to a small
 additional gap on the solution of the original non smoothed problem. More precisely, a duality gap of $\varepsilon \times \min \{p,q\}$ on the smoothed problem, leads to a gap of at most $(1+2\log 2) \varepsilon \times \min \{p,q\}$ for the original problem.
   
   \subsection{implementation details}
   \paragraph{Derivatives of spectral functions}
  Note that derivatives of spectral functions of the form
  $B(W) = \sum_{i=1}^{\min\{p,q\}} b(s_i(W))$, where $b$ is an even
  twice differentiable function such that $b(0)=b'(0)=0$, are easily calculated as follows; Let 
  $U \Diag(s) V^\top$ be the singular value decomposition of $W$. We then have
  the following Taylor expansion~\citep{lewis02twice}:
  $$
  B(W+\D) = B(W) + \tr \D^\top U \Diag(b'(s_i)) V^\top
  + \frac{1}{2}\sum_{i=1}^p \sum_{j=1}^q
  \frac{ b'(s_i) - b'(s_j) }{ s_i - s_j } (u_i^\top \D v_j)^2,
  $$
  where the vector of singular values is completed by zeros, and 
  $ \frac{ b'(s_i) - b'(s_j) }{ s_i - s_j }$ is defined as $b''(s_i)$ when $s_i=s_j$.
  
  \paragraph{Choice of $\varepsilon$ and computational complexity}
 Following the common practice in barrier methods we decrease the parameter geometrically after each iteration of Newton's method~\citep{boyd}. 
  Each of these Newton iterations has complexity $O(p^3 q^3)$. Empirically, the number
  of iterations does not exceed a few hundreds for solving one problem up to machine precision\footnote{MATLAB code can be downloaded from \url{http://www.di.ens.fr/~fbach/tracenorm/}}. We are currently investigating theoretical bounds on the number of iterations through
 self concordance theory~\citep{boyd}.
 
 \paragraph{Start and end of the path}
In order to avoid to consider useless values of the regularization parameter and thus use  a well adapted grid for trying several $\lambda$'s, we can consider
a specific interval for $\lambda$. When $\lambda$ is large, the solution is exactly zero, while when $\lambda$ is small, the solution
tends to $\vect(W) = \S^{-1} \vect(Q)$.

More precisely, if $\lambda$ is larger than $\| Q \|_2$, then the solution is exactly zero (because in this situation $0$ is in the
subdifferential). On the other side, we
consider for which $\lambda$, $\S^{-1} \vect(Q)$ leads to a duality gap which is less than
$\varepsilon \vect(Q)^\top \S^{-1} \vect(Q)$, where $\varepsilon$ is small. A looser condition is to take $V=0$, and the condition
becomes $ \lambda \| \S^{-1} \vect(Q) \|_\ast \leqslant \varepsilon \vect(Q)^\top \S^{-1} \vect(Q) $. Note that this is in the correct order
(i.e. lower bound smaller than upper bound ), because
$$  \vect(Q)^\top \S^{-1} \vect(Q) = \langle \vect(Q), \S^{-1} \vect(Q) \rangle \leqslant \| \S^{-1} \vect(Q)  \|_\ast \|  \vect(Q)  \|_2.$$
 This allows to design a good interval for searching for a good value of $\lambda$ or for computing the regularization path by uniform grid sampling (in log scale), or numerical path following with predictor-corrector methods such as used by~\citet{bach_thibaux}.

\subsection{Simulations}
\label{sec:simulations}
In this section, we perform simulations on toy examples to illustrate our consistency results.
We generate random i.i.d. data $\tilde{X}$ and $\tilde{Y}$ with Gaussian distributions and we select
a low rank matrix $\W$ at random and generate $Z = \diag(\tilde{X}^\top \W \tilde{Y}) + \varepsilon$ where
$\varepsilon $ has i.i.d components with normal distributions with zero mean and known variance. In this section, we always use $\r=2$,  $p=q=4$, while we consider several numbers of samples $n$, and several distributions for which the consistency conditions \eq{weak} and \eq{strict} may or may not be satisfied\footnote{Simulations may be reproduced
with MATLAB code available from \url{http://www.di.ens.fr/~fbach/tracenorm/}}. 

\begin{figure}
\begin{center}

\vspace*{-.5cm}

\hspace*{-.5cm}
\includegraphics[scale=.5]{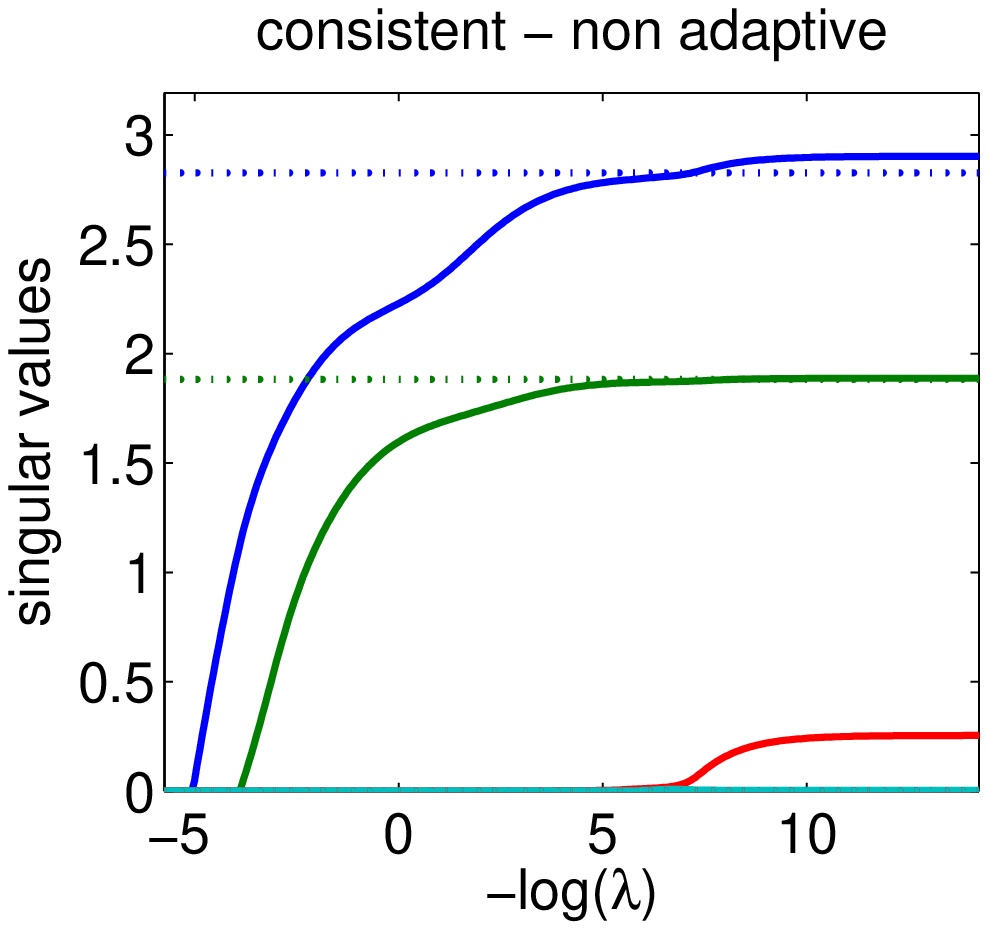} \hspace*{-.35cm}
\includegraphics[scale=.5]{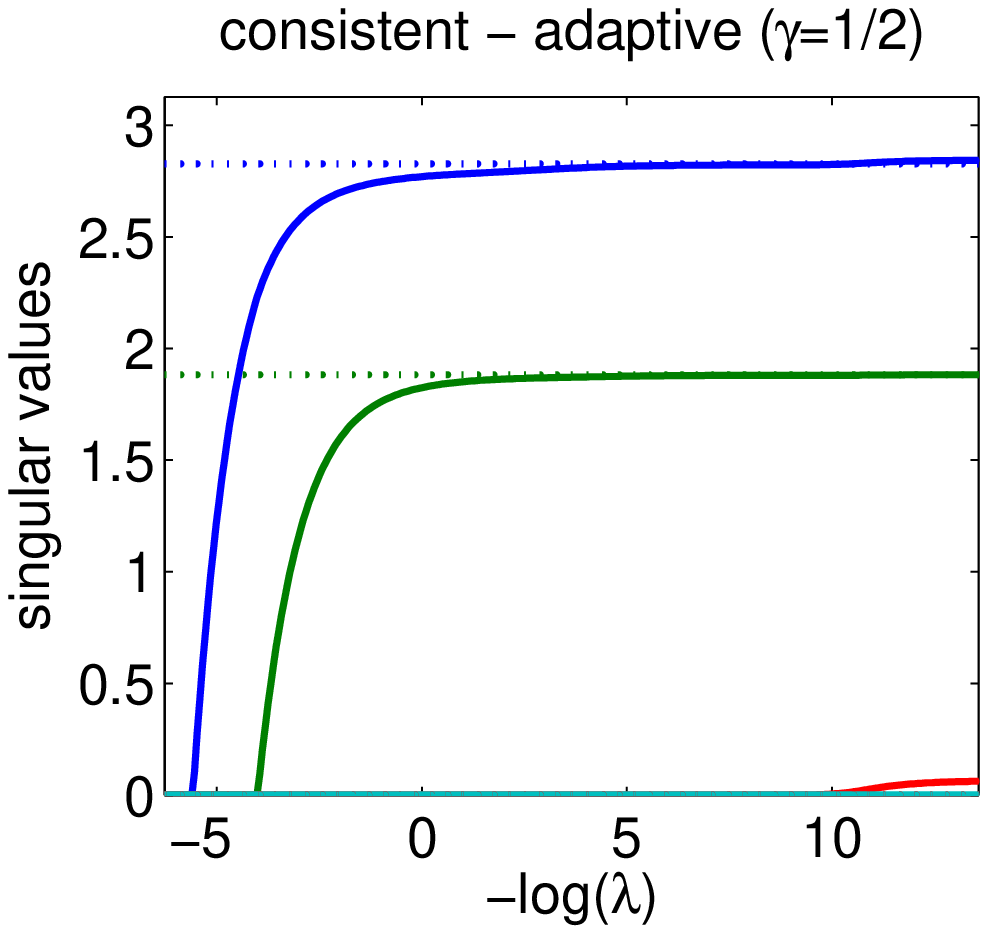}  \hspace*{-.35cm}
\includegraphics[scale=.5]{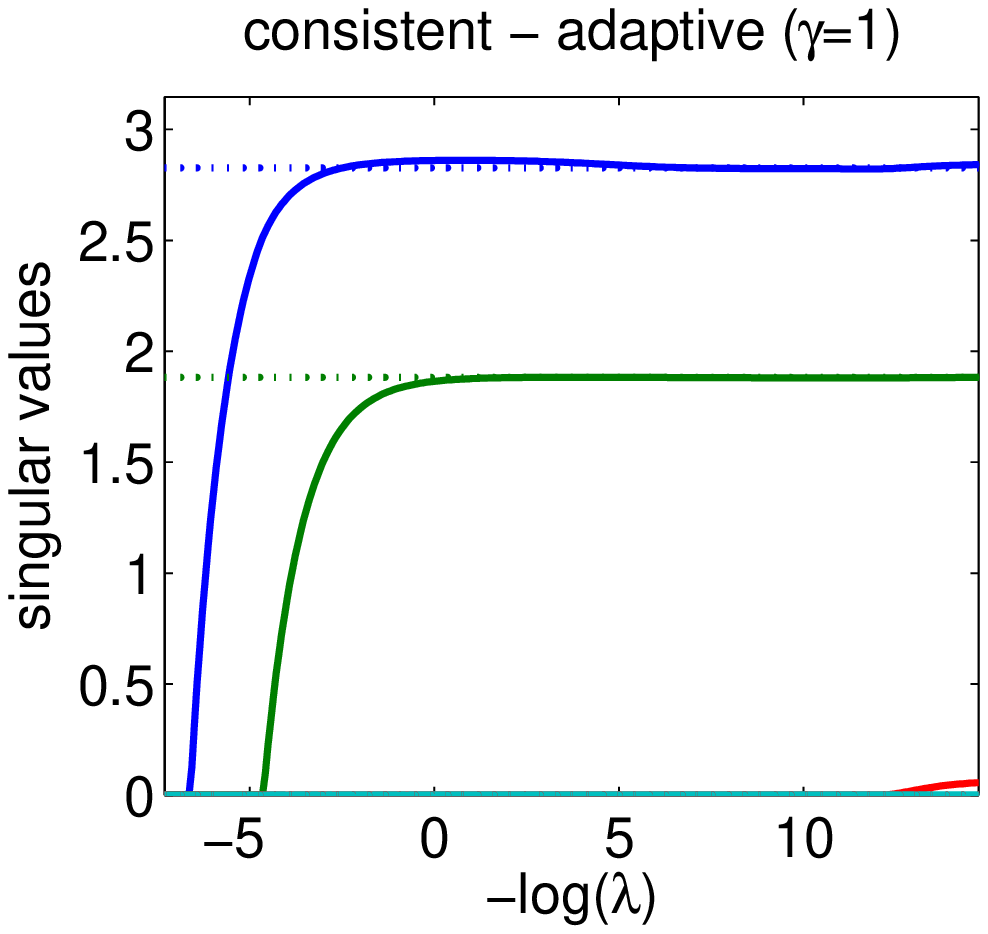}
\hspace*{-.5cm}

\vspace*{0cm}

\hspace*{-.5cm}
\includegraphics[scale=.5]{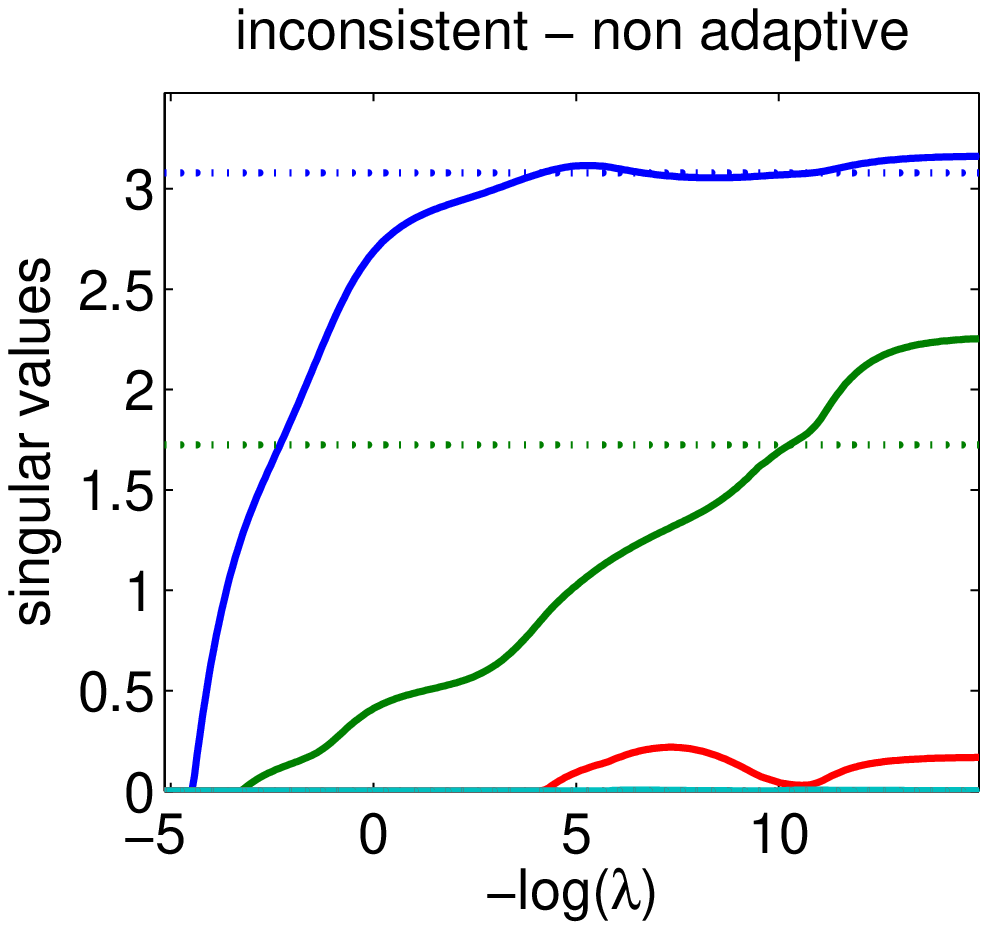} \hspace*{-.35cm}
\includegraphics[scale=.5]{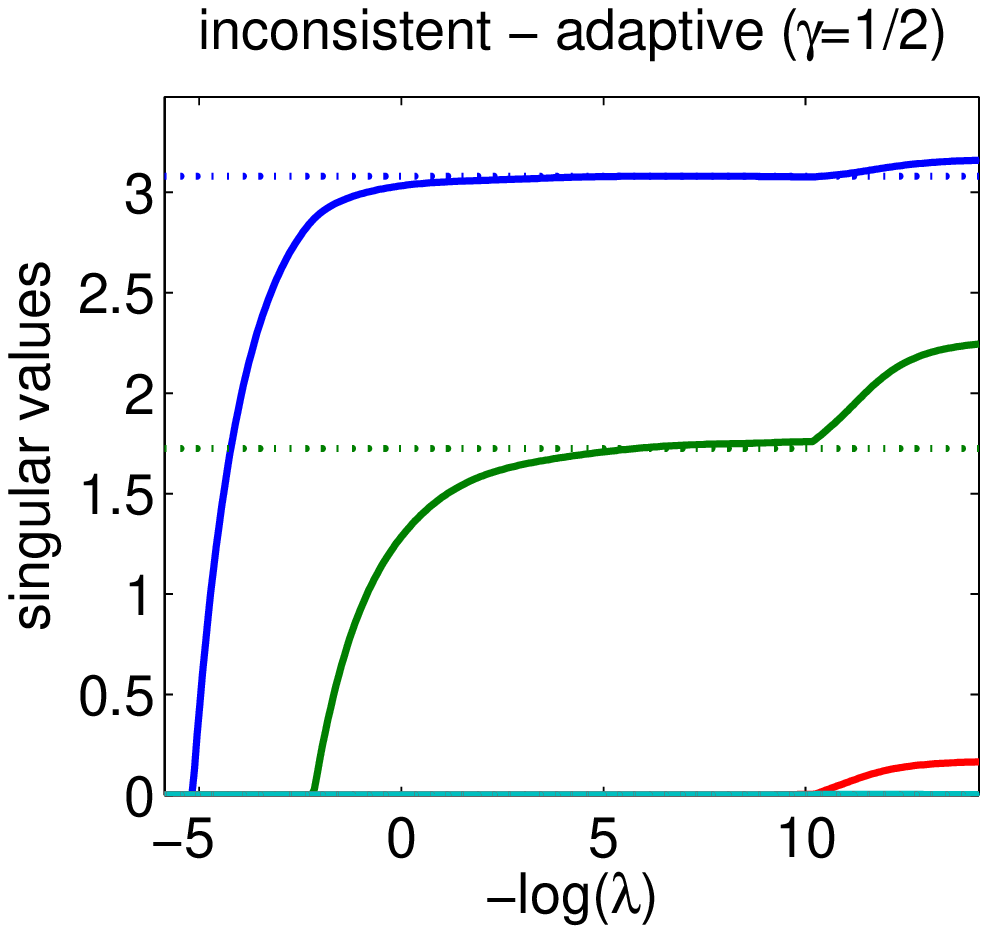}  \hspace*{-.35cm}
\includegraphics[scale=.5]{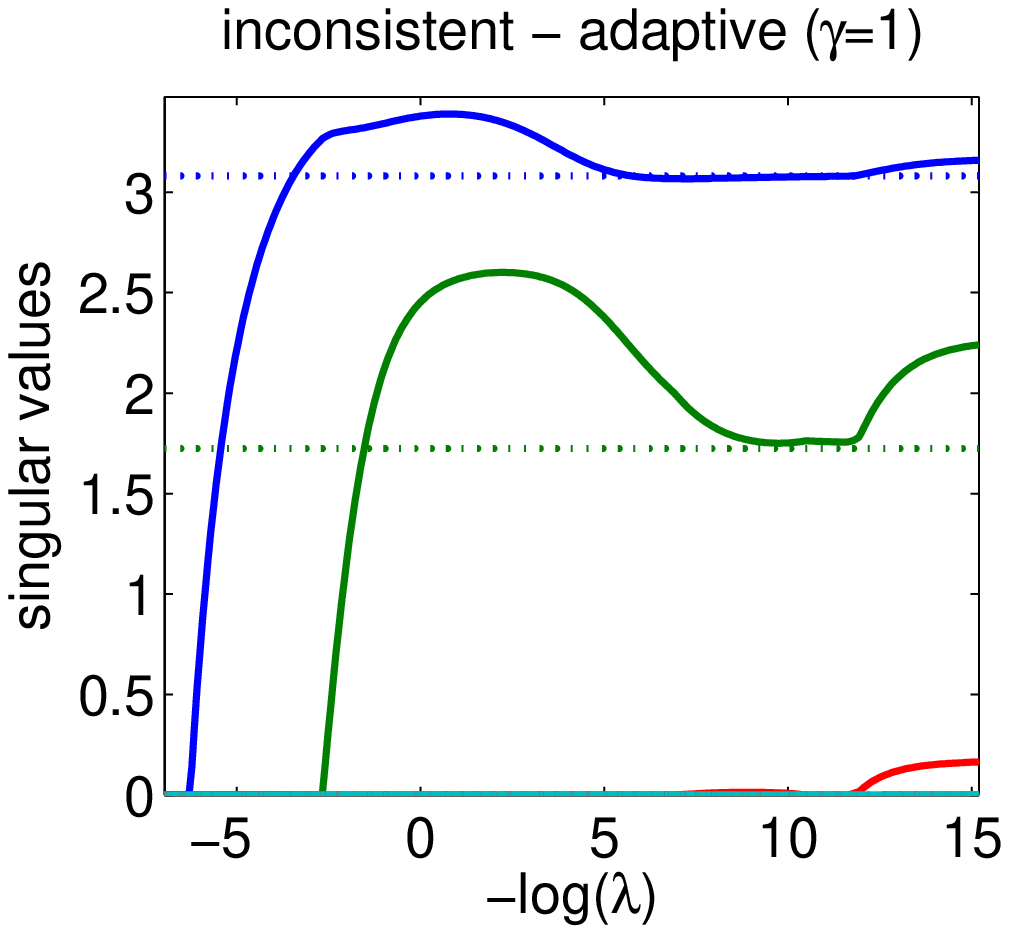}
\hspace*{-.5cm}

\vspace*{-1cm}

\end{center}
\caption{Examples of paths of singular values for $\|\Lambda\|_2 = 0.49 < 1$ (consistent, top) and  $\|\Lambda\|_2= 4.78 > 1$  (inconsistent, bottom) rank selection: regular trace norm penalization (left) and adaptive penalization with $\gamma=1/2$ (center) and $\gamma=1$ (right). Estimated singular values are plotted in plain, while population singular values are dotted.}
\label{fig:paths}
\end{figure}

In \myfig{paths}, we plot regularization paths for $n=10^3$, by showing the singular values of $\hat{W}$ compared to the singular values of $\W$, in two particular situations (\eq{weak} and \eq{strict} satisfied and not satisfied), for the regular trace norm regularization and the adaptive versions, with $\gamma=1/2$ and $\gamma=1$. Note that in the consistent case (top),  the singular values and their cardinalities are well jointly estimated, both for the non adaptive version (as predicted by Theorem~\ref{theo:sufficient}) and the adaptive versions (Theorem~\ref{theo:adaptive}), while the range of correct rank selection increases compared to the adaptive versions. However in the inconsistent case, the non adaptive regularizations scheme (bottom left) cannot achieve regular consistency together with rank consistency (Theorem~\ref{theo:necessary}), while the adaptive schemes can. Note the particular behavior of the limiting case $\gamma=1$, which still achieves both consistencies but with a singular behavior for large $\lambda$.

In \myfig{ranks-cons}, we select the distribution used for the rank-consistent case of \myfig{paths}, and compute the paths from 200 replications for $n= 10^2$, $10^3$, $10^3$ and $10^5$. For each $\lambda$, we plot the proportion of estimates with correct rank on the left plots (i.e., we get an estimation of $\P(\rank(\hat{W})=\rank(\W))$, while we plot the logarithm of the average root mean squared estimation error $\| \hat{W} - \W\|$ on the right plot. For the three regularization schemes, the range of values with high probability of correct rank selection increases as $n$ increases, and, most importantly achieves good mean squared error (right plot); in particular, for the non adaptive schemes (top plots), this corroborates the results from Proposition~\ref{prop:rank1}, which states that for $\lambda_n = \lambda_0 n^{-1/2}$ the probability tends to a limit in $(0,1)$: indeed, when $n$ increases, the value $\lambda_n$ which achieves a particular limit grows as $n^{-1/2}$, and considering the log-scale for $\lambda_n$ in \myfig{ranks-cons} and the uniform sampling for $n$ in log-scale as well, the regular spacing between the decaying parts observed in \myfig{ranks-cons} is coherent with our results.

\begin{figure}

\vspace*{-.5cm}

\begin{center}
\includegraphics[scale=.5]{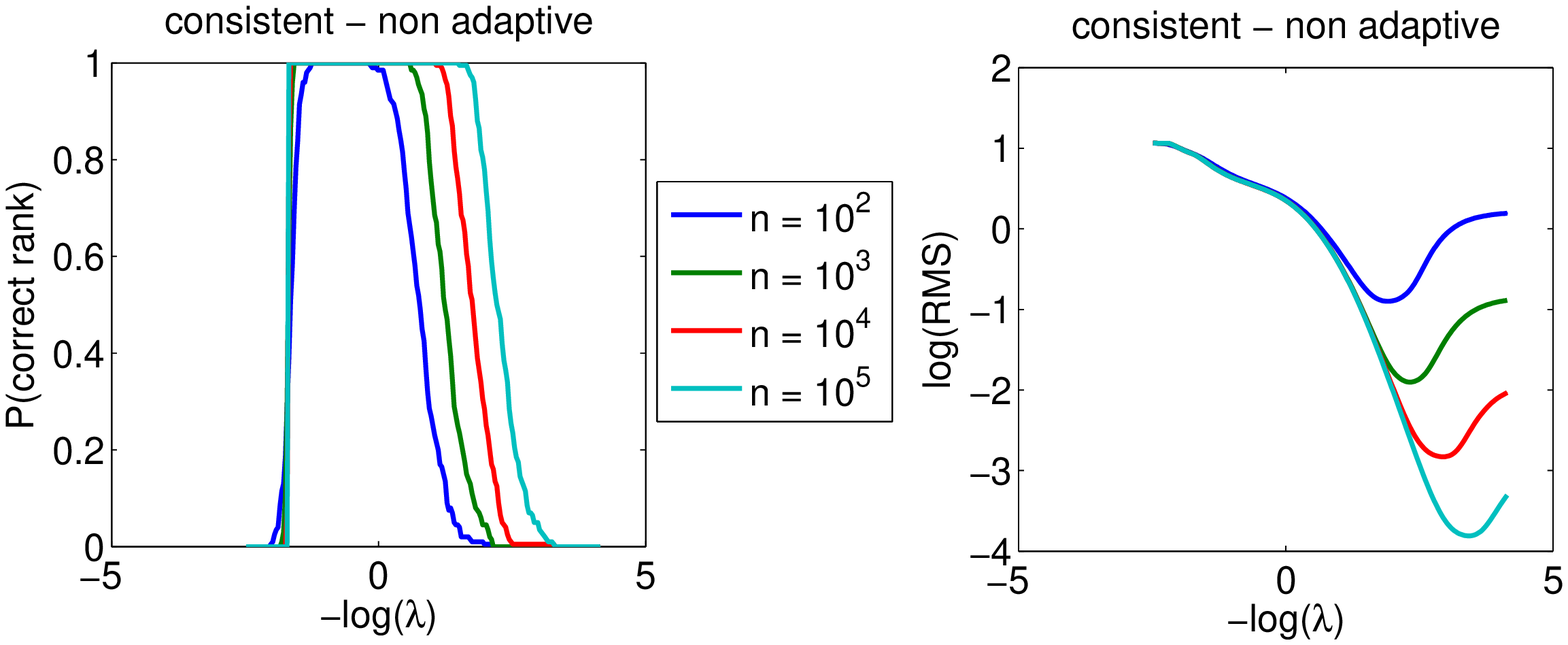}  

\vspace*{.5cm}

\includegraphics[scale=.5]{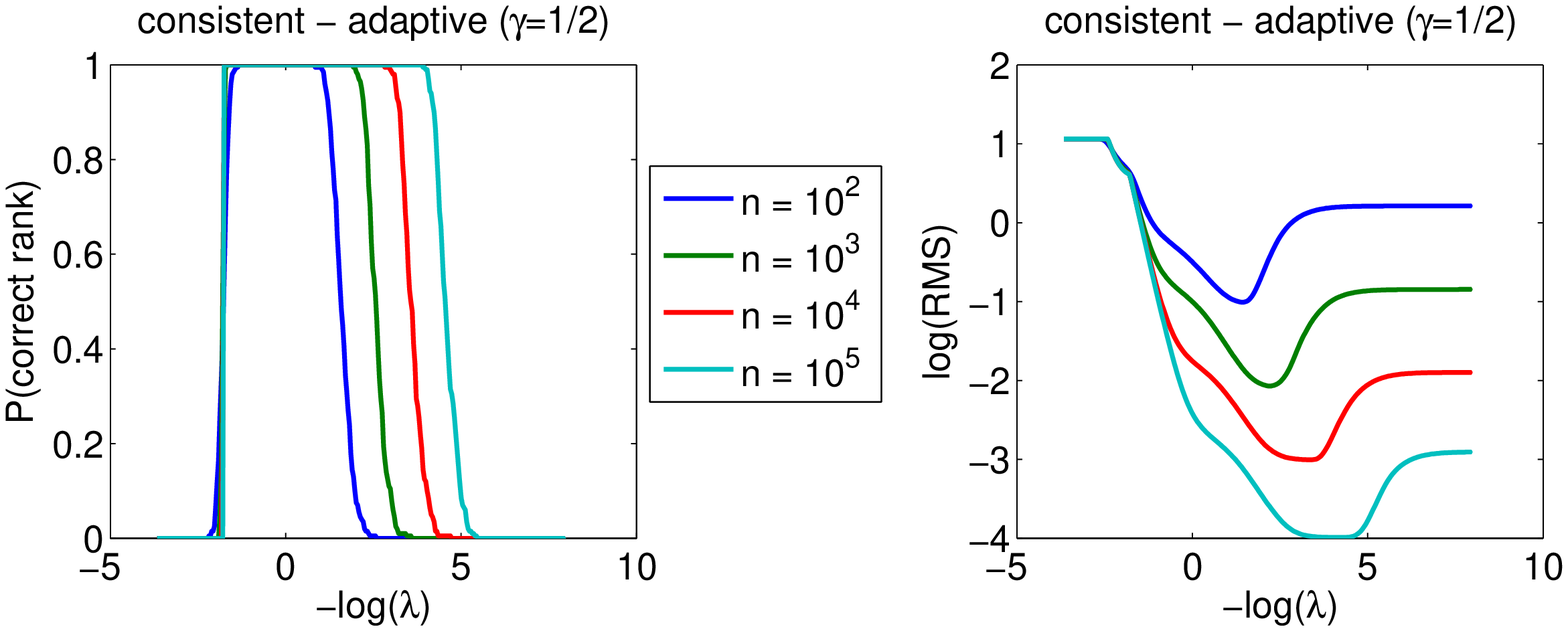}  

\vspace*{.5cm}

\includegraphics[scale=.5]{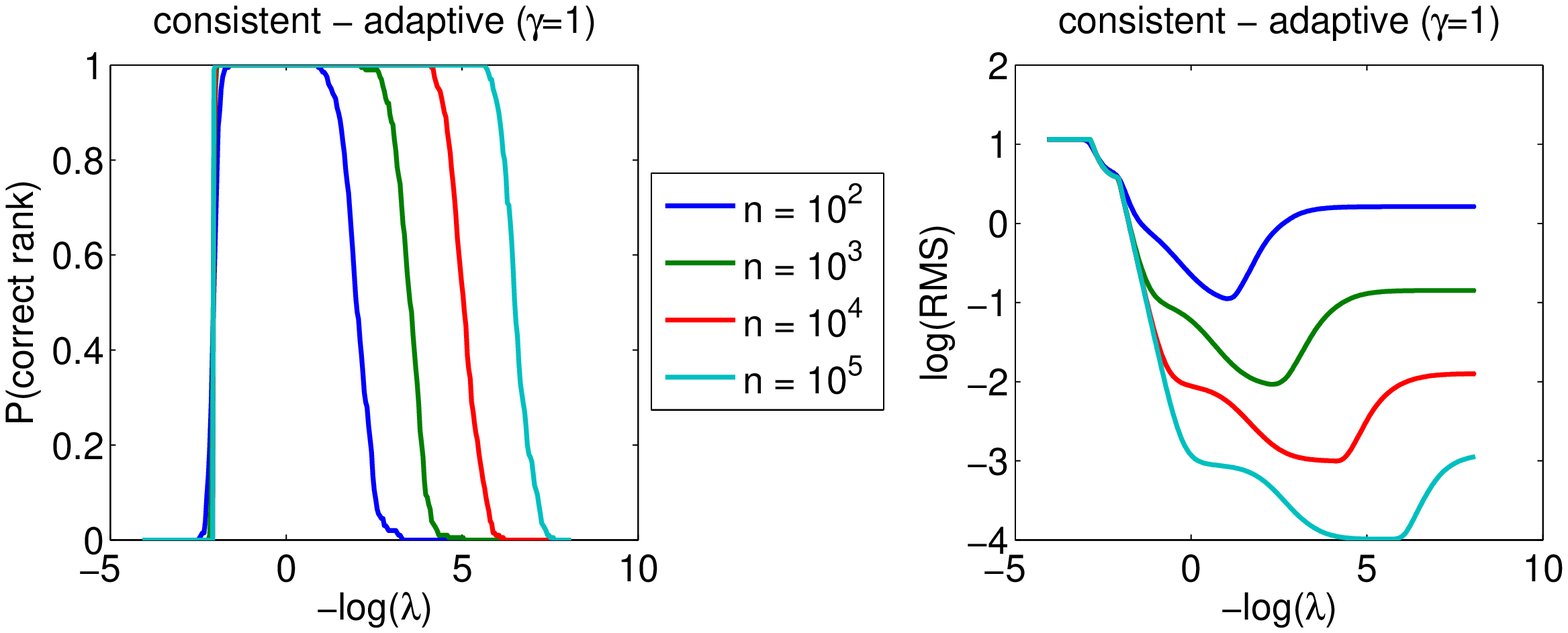}  

\end{center}

\vspace*{-.5cm}

\caption{Synthetic example where consistency condition in \eq{strict} is satisfied: probability of correct rank selection (left) and  logarithm of the expected mean squared estimation error (right), for several number of samples as a function of the regularization parameter, for regular regularization (top), adaptive regularization with $\gamma=1/2$ (center) and $\gamma=1$ (bottom). }
\label{fig:ranks-cons}
\end{figure}

In \myfig{ranks-inc}, we perform the same operations but with the inconsistent case of \myfig{paths}. For the non adaptive case (top plot), the range of values of $\lambda$ that achieve high probability of correct rank selection does not increase when $n$ increases and stays bounded, in places where the estimation error is not tending to zero: in the inconsistent case, the trace norm regularization does not manage to solve the trade-off between rank consistency and regular consistency. However, for the adaptive versions, it does, still with a somewhat singular behavior of the limiting case $\gamma=1$.

\begin{figure}

\vspace*{-.5cm}

\begin{center}
\includegraphics[scale=.5]{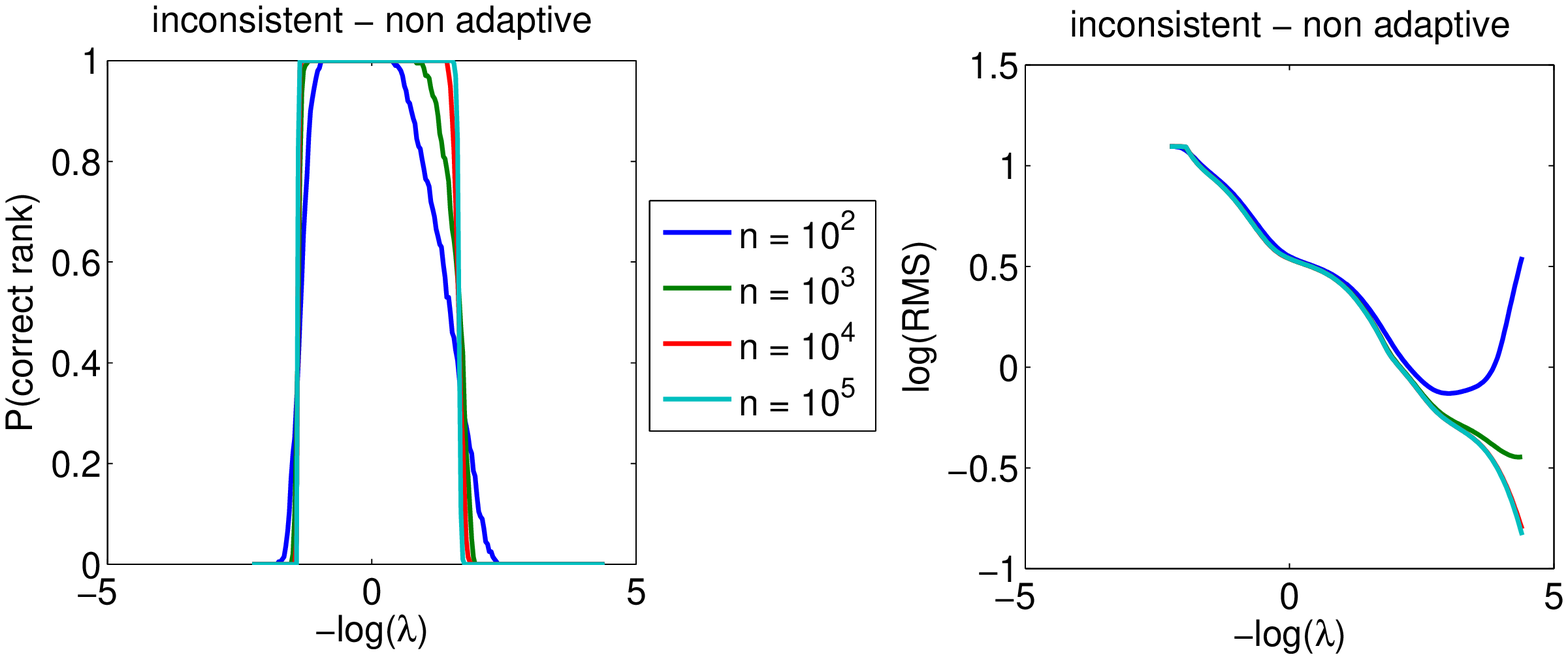}  

\vspace*{.5cm}

\includegraphics[scale=.5]{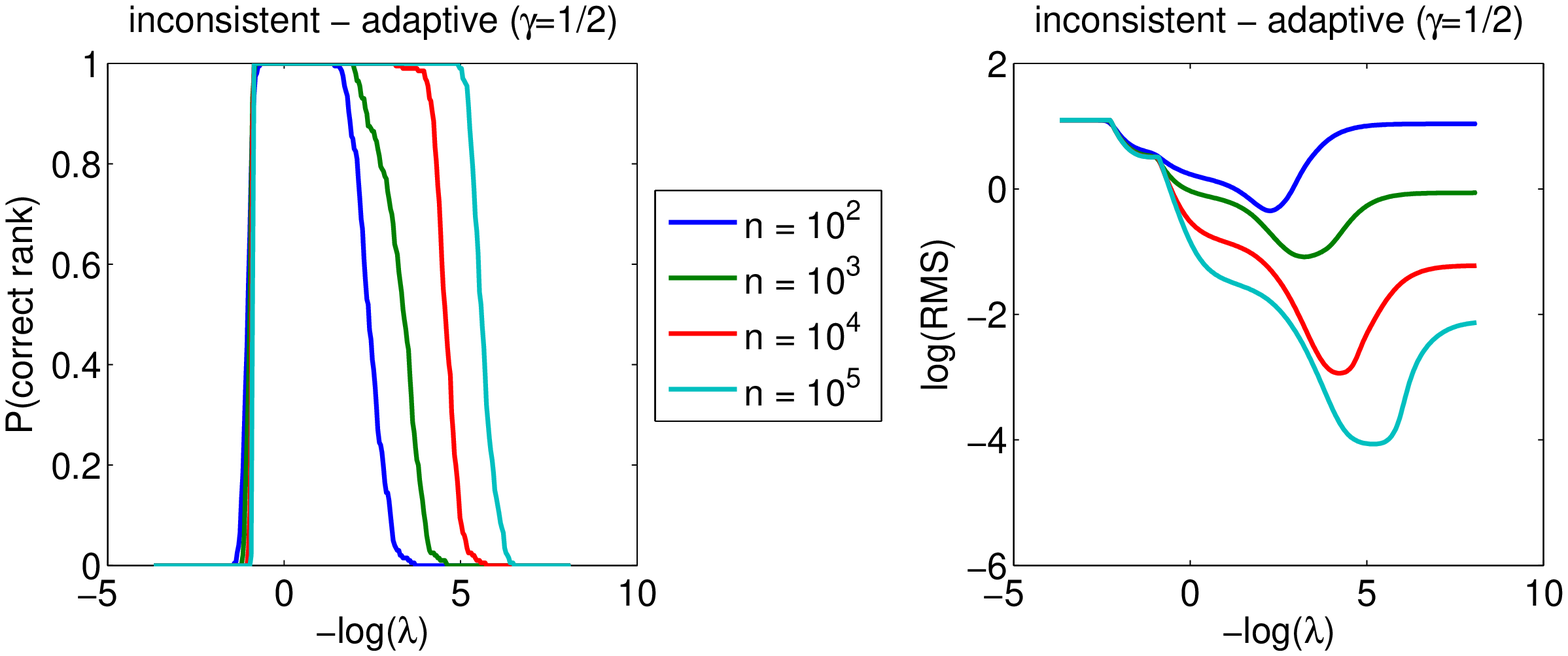}  

\vspace*{.5cm}

\includegraphics[scale=.5]{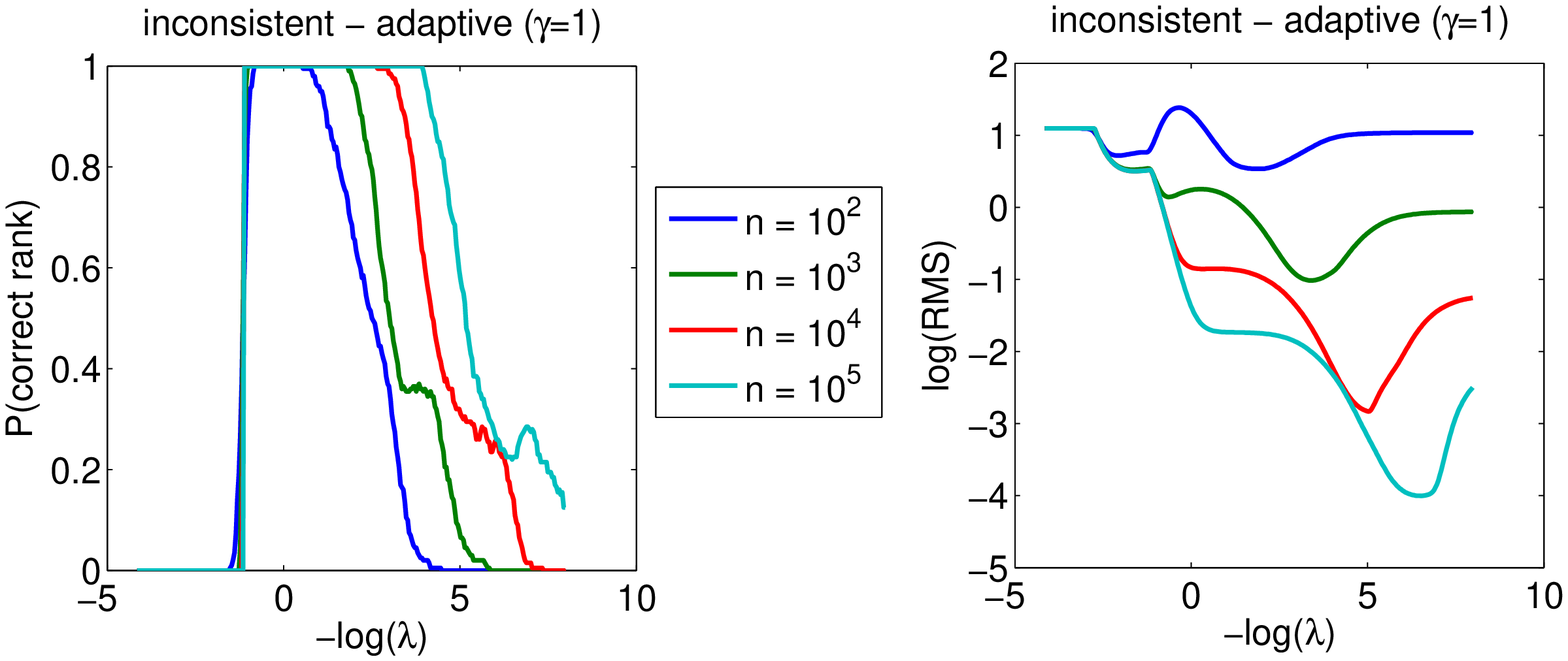}  

\end{center}

\vspace*{-.5cm}

\caption{Synthetic example where consistency condition in \eq{weak} is not satisfied: probability of correct rank selection (left) and  logarithm of the expected mean squared estimation error  (right), for several number of samples as a function of the regularization parameter, for regular regularization (top), adaptive regularization with $\gamma=1/2$ (center) and $\gamma=1$ (bottom). }
\label{fig:ranks-inc}
\end{figure}

Finally, in \myfig{lambdas}, we consider 400 different distributions with various values of
$\| \Lambda \|$ smaller or greater than one, and computed the regularization paths with $n=10^3$ samples. From the paths, we consider the estimate $\hat{W}$ with correct rank and best distance to $\W$ and plot the best error versus $\log_{10}( \| \Lambda \|_2)$. For positive values of  $\log_{10}( \| \Lambda \|_2)$, the best error is far from zero, and the error grows with the distance to zero; while for negative values, we get low errors with lower errors for small $\log_{10}( \| \Lambda \|_2)$, corroborating the influence of $\|\lambda\|_2$ described in Proposition~\ref{prop:rank1}.

\begin{figure}
\begin{center}
\includegraphics[scale=.35]{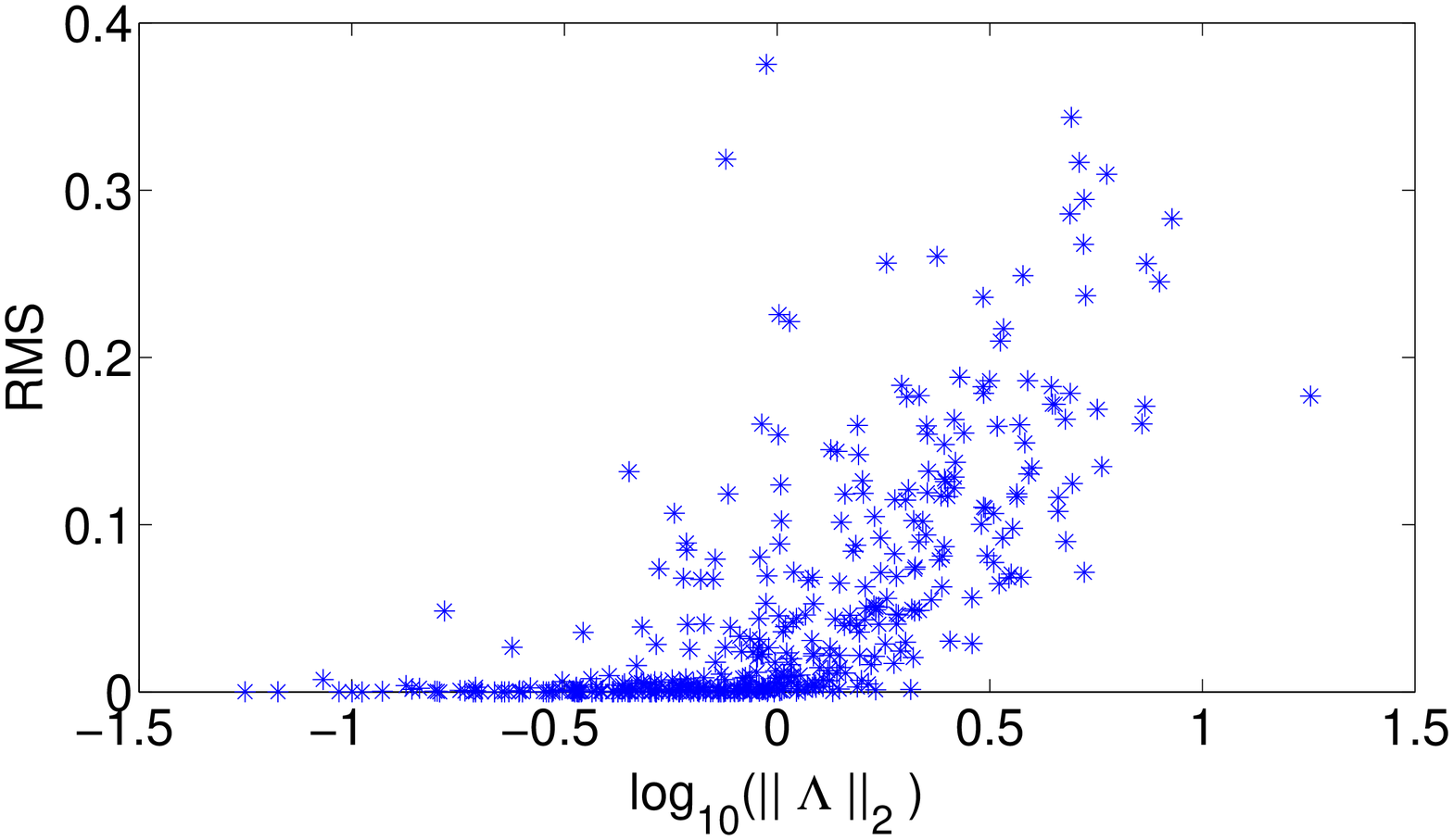}  

\vspace*{-.5cm}

\end{center}
\caption{Scatter plots of $\log_{10}( \| \Lambda \|_2)$ versus the squared error of the best estimate with correct rank (i.e., such that
$\rank(\hat{W})  =\r$ and  $\| \hat{W} - \W\|$ as small as possible). See text for details. }
\label{fig:lambdas}
\end{figure}

\section{Conclusion}
We have presented an analysis of the rank consistency for the penalization by the trace norm, and derived general necessary and sufficient conditions. This work can be extended in several interesting ways: first, by going from the square loss to more general losses, in particular for other types of supervised learning problems such as classification; or by looking at the collaborative filtering setting where  only some of the attributes are observed~\citep{lowrank} and dimensions $p$ and $q$ are allowed to grow. Moreover, we are currently pursuing non asymptotic extensions of the current work, making links with the recent work of~\citet{fazel2} and of~\citet{mein}.

\vspace*{1cm}

\appendix
\section{Tools for analysis of singular value decomposition}

\label{app:svd}
In this appendix, we review and derive precise results regarding singular value decompositions. We consider $W 
\in \rb^{ p \times q}$ and we let denote $W = U \Diag(s) V^\top$ its singular value decomposition with
$U \in \rb^{p \times r}$, $V \in \rb^{q \times r}$ with orthonormal columns, and $s \in \rb^r$ with
strictly positive values ($r$ is the rank of $W$). Note that when a singular value $s_i$ is simple, i.e., does not coalesce with any other singular values, then the vectors $u_i$ and $v_i$ are uniquely defined up to simultaneous sign flips, i.e., only the matrix $u_i v_i^\top$ is unique. However, when some singular values coalesce, then the corresponding singular vectors are
defined up to a rotation, and thus in general care must be taken and considering isolated singular vectors should be avoided~\citep{sun}. All tools presented in this appendix are robust to the particular choice of the singular vectors.

\subsection{Jordan-Wielandt matrix}
We use the fact that singular values of $W$ can be obtained from the eigenvalues of the Jordan-Wielandt  matrix
$\bar{W} = \left( \begin{array}{cc} 0 & W \\ W^\top & 0  \end{array} \right) \in \rb^{ (p+q) \times (p+q)}$~\citep{sun}. Indeed this matrix has eigenvalues $s_i$ and $ -s_i$, $i=1,\dots,r$, where
$s_i$ are the (strictly positive) singular values of $W$, with eigenvectors $ \frac{1}{\sqrt{2}} \left( \begin{array}{c} u_i \\ v_i  \end{array} \right)$
and
$ \frac{1}{\sqrt{2}} \left( \begin{array}{c} u_i \\ -v_i  \end{array} \right)$ where $u_i,v_i$ are the left and right associated
singular vectors. Also, the remaining eigenvalues are all equal to zero, with eigensubspace (of dimension
$p+q-2r$)  composed of all
$\left( \begin{array}{c} u \\v \end{array} \right)$ such that  for all $i \in \{1,\dots, r\}$, $u^\top u_i = v^\top v_i= 0$. We let denote $\bar{U}$ the eigenvectors of $\bar{W}$ corresponding to non zero eigenvalues in $\bar{S}$. We have
$\bar{U} = \frac{1}{\sqrt{2}}
\left( \begin{array}{cc}U & U \\ V & - V  \end{array} \right)$ and
$\bar{S} = \frac{1}{\sqrt{2}}
\left( \begin{array}{cc} \Diag(s) & 0 \\ 0 & - \Diag(s)  \end{array} \right)$ and
  $\bar{W} = \bar{U} \bar{S} \bar{U}^\top$, 
$ \bar{U}  \bar{U}^\top
=\left( \begin{array}{cc} UU^\top & 0 \\ 0 & VV^\top  \end{array} \right)
$, and $ \bar{U}  { \rm sign} (\bar{S}) \bar{U}^\top
=\left( \begin{array}{cc} 0 &  UV^\top \\ V U^\top & 0  \end{array} \right)
$.

\subsection{Cauchy residue formula and eigenvalues}
Given the matrix $\bar{W}$, and a simple closed curve $\mathcal{C}$ in the complex plane that does
 not go through any of the eigenvalues of $\bar{W}$, then
$$\Pi_\mathcal{C}(\bar{W}) = \frac{1}{2i\pi} \oint_\mathcal{C} \frac{ d\lambda}{\lambda \idm - \bar{W}}$$ is equal to the orthogonal projection onto the orthogonal sum
of all eigensubspaces of $\bar{W}$ associated with eigenvalues in the interior of $\mathcal{C}$~\citep{kato}. This is easily seen  by writing down the eigenvalue decomposition and the Cauchy residue formula ($\frac{1}{2i\pi} \oint_\mathcal{C} \frac{ d\lambda}{\lambda - \lambda_i} = 1$ if $\lambda_i$
is in the interior ${\rm int} ( \mathcal{C})$ of $\mathcal{C}$ and $0$ otherwise), and:
$$
\frac{1}{2i\pi} \oint_\mathcal{C} \frac{ d\lambda}{\lambda \idm - \bar{W}}
= \sum_{i=1}^{2r } \bar{u}_i \bar{u}_i^\top  \times 
\frac{1}{2i\pi} \oint_\mathcal{C} \frac{ d\lambda}{\lambda  - \bar{s}_i } =
\sum_{ i , \ \bar{s}_i \in  {\rm int} ( \mathcal{C})} u_i u_i^\top.
$$
See~\citet{rudin} for an introduction to complex analysis and Cauchy residue formula.
Moreover, we can obtain the restriction of $\bar{W}$ onto a specific eigensubspace as:
$$
\bar{W} \Pi_\mathcal{C}(\bar{W}) = 
\frac{1}{2i\pi} \oint_\mathcal{C} \frac{ \bar{W} d\lambda}{\lambda \idm - \bar{W}}
=  - 
\frac{1}{2i\pi} \oint_\mathcal{C} \frac{\lambda d\lambda}{\lambda \idm - \bar{W}}.
$$
We let denote $s_1$ and $s_r$ the largest and smallest strictly positive singular values of $W$; if $\| \D \|_2 < s_r / 2$, then $W+\Delta$ has $r$ singular values strictly greater than $s_r/2$ and the
remaining ones are strictly less than $s_r/2$~\citep{sun}. Thus, if we denote
$\mathcal{C}$  the oriented circle of radius  $s_r/2$, $\Pi_\mathcal{C}(\bar{W})$ is the projector on the $p+q-2r$-dimensional 
null space of $\bar{W}$, and
for any $\D$ such that $\| \D \|_2 < s_r/2$, 
$\Pi_\mathcal{C}( \bar{W} + \bar{\D} ) $ is also the projector on the $p+q-2r$-dimensional invariant subspace of
$\bar{W}+\bar{\D}$, which corresponds to the smallest eigenvalues. We let denote
$\Pi_o(\bar{W}+\bar{\Delta})$ that projector and $\Pi_r(\bar{W}+\bar{\Delta})
= \idm - \Pi_o(\bar{W}+\bar{\Delta}) $ the orthogonal projector (which is the projection onto the $2r$-th principal subspace).

We can now find expansions around $\Delta=0$ as follows:
\BEAS
\Pi_o(\bar{W}+\bar{\D}) - \Pi_o(\bar{W})  & = & 
 \frac{1}{2i\pi} \oint_\mathcal{C}  (\lambda \idm - \bar{W})^{-1} \bar{\D} (\lambda \idm - \bar{W} -  \bar{\D})^{-1}  d\lambda \\
 & = & 
 \frac{1}{2i\pi} \oint_\mathcal{C}  (\lambda \idm - \bar{W})^{-1} \bar{\D} (\lambda \idm - \bar{W} )^{-1}  d\lambda  \\
 &  &  + \frac{1}{2i\pi} \oint_\mathcal{C}  (\lambda \idm - \bar{W})^{-1}  \bar{\D}(\lambda \idm - \bar{W})^{-1} \bar{\D}
   (\lambda \idm - \bar{W} - \bar{\D})^{-1}  d\lambda, 
\EEAS
and
\BEAS
(\bar{W}+\bar{\D}) \Pi_o(\bar{W}+\bar{\D})\! -\! \bar{W} \Pi_o(\bar{W}) \!\!\!\! &  = & 
 - \frac{1}{2i\pi} \oint_\mathcal{C}  \lambda (\lambda \idm - \bar{W})^{-1} \bar{\D} (\lambda \idm - \bar{W} -  \bar{\D})^{-1}  d\lambda \\
 & = & 
- \frac{1}{2i\pi} \oint_\mathcal{C}  \lambda (\lambda \idm - \bar{W})^{-1} \bar{\D} (\lambda \idm - \bar{W} )^{-1}  d\lambda  \\
 &  & \!\!\!\!\! - \frac{1}{2i\pi} \oint_\mathcal{C}  \lambda (\lambda \idm - \bar{W})^{-1}  \bar{\D}(\lambda \idm - \bar{W})^{-1} \bar{\D}
   (\lambda \idm - \bar{W} - \bar{\D})^{-1}  d\lambda, 
\EEAS
which lead to the following two propositions:
\begin{proposition}
Assume $W$ has rank $r$ and $\| \D \|_2 < s_r / 4$ where $s_r$ is the smallest positive singular value of $W$. Then the projection $\Pi_r(\bar{W})$ on the first $r$ eigenvectors
of $\bar{W}$ is such that
$$
\| \Pi_o(\bar{W}+\bar{\D}) - \Pi_o(\bar{W}) \|_2
\leqslant  \frac{4}{   s_r  } \|  {\D} \|_2
$$
and
$$
\| \Pi_o(\bar{W}+\bar{\D}) - \Pi_o(\bar{W})
-(\idm - \bar{U}\bar{U}^\top)\bar{\D}\bar{U} \bar{S}^{-1} \bar{U}^\top -\bar{U}\bar{S}^{-1} \bar{U}^\top \bar{\D} (\idm - \bar{U}\bar{U}^\top)
  \|_2 
\leqslant  \frac{8}{s_r^2 } \| \bar{\D} \|_2^2.
$$
\end{proposition}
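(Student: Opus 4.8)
The plan is to derive both inequalities directly from the two contour-integral identities displayed just above the statement, by bounding their integrands in spectral norm along the circle $\mathcal{C}$ of radius $s_r/2$ and invoking the elementary estimate $\left\| \frac{1}{2i\pi}\oint_\mathcal{C} G(\lambda)\,d\lambda \right\|_2 \leqslant \frac{1}{2\pi}\,\mathrm{length}(\mathcal{C})\,\sup_{\lambda \in \mathcal{C}}\| G(\lambda)\|_2$, with $\mathrm{length}(\mathcal{C})=\pi s_r$. Everything then reduces to two resolvent bounds on $\mathcal{C}$.

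First I would record those two resolvent estimates. Since the eigenvalues of $\bar{W}$ are $0$ and $\pm s_i$ with $s_i \geqslant s_r$, every eigenvalue lies at distance at least $s_r/2$ from $\mathcal{C}$, so $\|(\lambda \idm - \bar{W})^{-1}\|_2 \leqslant 2/s_r$ for $\lambda \in \mathcal{C}$. Next, because $\|\bar{\D}\|_2 = \|\D\|_2 < s_r/4$ (the nonzero eigenvalues of the Jordan-Wielandt form $\bar{\D}$ are $\pm$ the singular values of $\D$), Weyl's inequality—the eigenvalue-separation fact recalled above the statement—shows every eigenvalue of $\bar{W}+\bar{\D}$ stays within $s_r/4$ of an eigenvalue of $\bar{W}$, hence at distance at least $s_r/4$ from $\mathcal{C}$; thus $\|(\lambda \idm - \bar{W} - \bar{\D})^{-1}\|_2 \leqslant 4/s_r$ on $\mathcal{C}$. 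The first inequality is then immediate: the integrand of the first identity is bounded by $(2/s_r)\,\|\D\|_2\,(4/s_r)$, and multiplying by $\pi s_r/(2\pi)$ gives exactly $4\|\D\|_2/s_r$.

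For the second inequality I would use the second identity, which splits the difference into a linear integral and a quadratic remainder. The quadratic remainder has integrand bounded by $(2/s_r)^2\,(4/s_r)\,\|\bar{\D}\|_2^2 = 16\|\bar{\D}\|_2^2/s_r^3$, and the same factor $\pi s_r/(2\pi)$ converts this into the claimed bound $8\|\bar{\D}\|_2^2/s_r^2$. It then remains to show that the linear integral $\frac{1}{2i\pi}\oint_\mathcal{C}(\lambda\idm - \bar{W})^{-1}\bar{\D}(\lambda\idm - \bar{W})^{-1}d\lambda$ equals the explicit linear term subtracted in the statement. Here I would insert the spectral resolution $(\lambda\idm - \bar{W})^{-1} = \lambda^{-1}(\idm - \bar{U}\bar{U}^\top) + \bar{U}(\lambda\idm - \bar{S})^{-1}\bar{U}^\top$ and evaluate by residues, noting that $0$ is the only eigenvalue of $\bar{W}$ enclosed by $\mathcal{C}$. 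The $(\idm - \bar{U}\bar{U}^\top)\bar{\D}(\idm - \bar{U}\bar{U}^\top)$ piece carries a double pole $\lambda^{-2}$ at $0$ and hence contributes nothing; the piece with both resolvents projected onto the range has no pole inside $\mathcal{C}$; only the two mixed pieces survive, and evaluating their residues at $\lambda=0$ reassembles the scalars $\bar{s}_j^{-1}$ into $\bar{U}\bar{S}^{-1}\bar{U}^\top$ and reproduces the linear correction $(\idm - \bar{U}\bar{U}^\top)\bar{\D}\bar{U}\bar{S}^{-1}\bar{U}^\top + \bar{U}\bar{S}^{-1}\bar{U}^\top\bar{\D}(\idm - \bar{U}\bar{U}^\top)$ subtracted in the statement, with the overall sign fixed by the orientation of $\mathcal{C}$ and the convention for $\bar{S}$.

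I expect the residue evaluation of the linear term to be the main obstacle, essentially as careful bookkeeping: one must track the sign of each residue and verify that the scalars $\bar{s}_j^{-1}$ recombine into the matrix $\bar{U}\bar{S}^{-1}\bar{U}^\top$. A convenient feature of the contour approach is that it never isolates individual singular vectors, so coalescing singular values of $W$ cause no difficulty—only the separation of the nonzero eigenvalues of $\bar{W}$ from $\mathcal{C}$ is used. The sole analytic input beyond this algebra is the perturbed-resolvent bound, which rests on Weyl's inequality together with the hypothesis $\|\D\|_2 < s_r/4$; this is precisely what forces the radius $s_r/4$ rather than the coarser $s_r/2$ used earlier.
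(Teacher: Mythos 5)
Your proof is correct and follows essentially the same route as the paper's: the same resolvent bounds $2/s_r$ and $4/s_r$ on the circle of radius $s_r/2$ combined with the length of $\mathcal{C}$ give both norm estimates, and the linear term is identified by a residue computation in the eigenbasis of $\bar{W}$ (your grouping of the resolvent into $\lambda^{-1}(\idm-\bar{U}\bar{U}^\top)+\bar{U}(\lambda\idm-\bar{S})^{-1}\bar{U}^\top$ is just a repackaging of the paper's double sum over $\bar{u}_i\bar{u}_i^\top\bar{\D}\bar{u}_j\bar{u}_j^\top$). The only point deserving care is the one you already flag: a literal evaluation of the residues at $\lambda=0$ produces $(0-\bar{s}_j)^{-1}=-\bar{s}_j^{-1}$, so the mixed terms come out with the opposite sign to the one displayed in the statement (the paper's own computation drops this same minus sign), which affects only the sign convention of the first-order correction and not either norm bound.
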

\begin{proof}
For $\lambda \in \mathcal{C}$ we have: 
$ \| ( \lambda \idm - \bar{W} )^{-1} \|_2 \geqslant 2/s_r$ and
$ \| ( \lambda \idm - \bar{W} - \bar{\Delta} )^{-1} \|_2 \geqslant 4/s_r$, which implies
\BEAS
\| \Pi_r(\bar{W}+\bar{\D}) - \Pi_r(\bar{W}) \|_2
& \leqslant
&   \frac{1}{2\pi} \oint_\mathcal{C}  \| ( \lambda \idm - \bar{W} )^{-1} \|_2
\| \Delta \|_2  \| ( \lambda \idm - \bar{W} - \bar{\Delta} )^{-1} \|_2
\\
& \leqslant &   \left( \frac{1}{2\pi} 2\pi \frac{s_r}{2} \right) \|\Delta\|_2 
\frac{2}{s_r} \frac{4}{s_r}. \EEAS
 In order to prove the other result, we simply need to compute:
 \BEAS
 \frac{1}{2i\pi} \oint_\mathcal{C}  (\lambda \idm - \bar{W})^{-1} \bar{\D} (\lambda \idm - \bar{W} )^{-1}  d\lambda  
 & = &
\sum_{i,j} \bar{u}_i \bar{u}_i^\top \Delta \bar{u}_j \bar{u}_j^\top  \frac{1}{2i\pi} \oint_\mathcal{C}  \frac{1}{(\lambda- \bar{s}_i)(\lambda- \bar{s}_j)} d\lambda   \\
& = &\!\!\!
\sum_{i,j} \bar{u}_i \bar{u}_i^\top \Delta \bar{u}_j \bar{u}_j^\top  
\left( \frac{ 1_{ i \notin {\rm int}(\mathcal{C})}  1_{j \in {\rm int}(\mathcal{C})}}{\bar{s}_i} 
+ \frac{ 1_{ j \notin {\rm int}(\mathcal{C})}  1_{i \in {\rm int}(\mathcal{C})}}{\bar{s}_j}
\right)  \\
& = &    
(\idm - \bar{U}\bar{U}^\top)\bar{\D}\bar{U} \bar{S}^{-1} \bar{U}^\top +\bar{U}\bar{S}^{-1} \bar{U}^\top \bar{\D} (\idm - \bar{U}\bar{U}^\top) . 
 \EEAS 
\end{proof}

\begin{proposition}
\label{prop:proj}
Assume $W$ has rank $r$ and $\| \D \|_2 < s_r / 4$ where $s_r$ is the smallest positive singular value of $W$. Then the projection $\Pi_r(\bar{W})$ on the first $r$ eigenvectors
of $\bar{W}$ is such that
$$
\| \Pi_o(\bar{W}+\bar{\D}) (\bar{W}+\bar{\D}) - \Pi_o(\bar{W})  \bar{W}\|_2
\leqslant  2 \|  {\D} \|_2
$$
and
$$
\|\Pi_o(\bar{W}+\bar{\D}) (\bar{W}+\bar{\D}) - \Pi_o(\bar{W})  \bar{W}
+(\idm - \bar{U}\bar{U}^\top)\bar{\D}  (\idm - \bar{U}\bar{U}^\top)
  \|_2 
\leqslant  \frac{4}{s_r } \| \bar{\D} \|_2^2.
$$
\end{proposition}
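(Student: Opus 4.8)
The plan is to reproduce, essentially verbatim, the proof of the preceding proposition, but starting from the \emph{second} resolvent expansion derived just above these two propositions, namely the one for $(\bar{W}+\bar{\D})\Pi_o(\bar{W}+\bar{\D}) - \bar{W}\Pi_o(\bar{W})$. Two structural remarks make the statement match that expansion: the spectral projector $\Pi_o(\bar{W}+\bar{\D})$ commutes with $\bar{W}+\bar{\D}$, so $\Pi_o(\bar{W}+\bar{\D})(\bar{W}+\bar{\D})=(\bar{W}+\bar{\D})\Pi_o(\bar{W}+\bar{\D})$, and $\Pi_o(\bar{W})\bar{W}=0$ since $\Pi_o$ projects onto the null space; hence the quantity to be bounded is exactly the left-hand side of that expansion. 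First I would record the resolvent bounds on the contour $\mathcal{C}$ of radius $s_r/2$: the nonzero eigenvalues of $\bar{W}$ have modulus at least $s_r$ and the enclosed eigenvalue is $0$, so every $\lambda\in\mathcal{C}$ lies at distance at least $s_r/2$ from the spectrum of $\bar{W}$, giving $\|(\lambda\idm-\bar{W})^{-1}\|_2\leqslant 2/s_r$; and since $\|\bar{\D}\|_2=\|\D\|_2<s_r/4$, the perturbed spectrum stays at distance at least $s_r/4$ from $\mathcal{C}$, giving $\|(\lambda\idm-\bar{W}-\bar{\D})^{-1}\|_2\leqslant 4/s_r$.

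For the first inequality I would bound the single integral $-\frac{1}{2i\pi}\oint_\mathcal{C}\lambda(\lambda\idm-\bar{W})^{-1}\bar{\D}(\lambda\idm-\bar{W}-\bar{\D})^{-1}\,d\lambda$ directly. The length of $\mathcal{C}$ is $\pi s_r$, the factor $|\lambda|$ is at most $s_r/2$, and multiplying by the two resolvent bounds and by $\|\bar{\D}\|_2$ yields $\frac{1}{2\pi}\cdot\pi s_r\cdot\frac{s_r}{2}\cdot\frac{2}{s_r}\cdot\frac{4}{s_r}\,\|\bar{\D}\|_2=2\|\bar{\D}\|_2=2\|\D\|_2$, using $\|\bar{\D}\|_2=\|\D\|_2$.

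For the second inequality I would split the expansion into its linear-in-$\bar{\D}$ part and the quadratic remainder $-\frac{1}{2i\pi}\oint_\mathcal{C}\lambda(\lambda\idm-\bar{W})^{-1}\bar{\D}(\lambda\idm-\bar{W})^{-1}\bar{\D}(\lambda\idm-\bar{W}-\bar{\D})^{-1}\,d\lambda$, exactly as before. The remainder is bounded by the same device, now with three resolvent factors and two copies of $\bar{\D}$: $\frac{1}{2\pi}\cdot\pi s_r\cdot\frac{s_r}{2}\cdot\frac{2}{s_r}\cdot\frac{2}{s_r}\cdot\frac{4}{s_r}\,\|\bar{\D}\|_2^2=\frac{4}{s_r}\|\bar{\D}\|_2^2$. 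It then remains to evaluate the linear term in closed form and identify it with $-(\idm-\bar{U}\bar{U}^\top)\bar{\D}(\idm-\bar{U}\bar{U}^\top)$.

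This residue computation is the main obstacle and the one genuine point of departure from the previous proposition. Expanding both resolvents in the eigenbasis of $\bar{W}$, the $(\bar{u}_k,\bar{u}_l)$ term carries the scalar $\frac{1}{2i\pi}\oint_\mathcal{C}\frac{\lambda\,d\lambda}{(\lambda-\bar{s}_k)(\lambda-\bar{s}_l)}$. The extra factor $\lambda$ in the numerator — absent in the earlier proposition — is precisely what alters the outcome: when both $\bar{s}_k=\bar{s}_l=0$ are enclosed the integrand is $1/\lambda$ and the residue is $1$; when exactly one index is enclosed ($\bar{s}=0$) the factor $\lambda$ cancels the pole at the origin, leaving $\frac{1}{\lambda-\bar{s}}$ with no pole inside $\mathcal{C}$, so the contribution vanishes; and when neither is enclosed there is no pole inside. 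Hence only the pairs with both indices in the null space survive, and summing them gives $\Pi_o(\bar{W})\bar{\D}\Pi_o(\bar{W})=(\idm-\bar{U}\bar{U}^\top)\bar{\D}(\idm-\bar{U}\bar{U}^\top)$; the overall minus sign then produces the claimed leading term. This is exactly why the projection here is the two-sided null-space projection, rather than the off-diagonal combination obtained in the preceding proposition, and everything else in the argument is a verbatim repeat of the earlier estimates.
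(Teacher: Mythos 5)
Your proof is correct and follows the paper's own argument essentially verbatim: the same contour of radius $s_r/2$, the same resolvent bounds (which you correctly state as upper bounds $\leqslant 2/s_r$ and $\leqslant 4/s_r$, where the paper's proof has a sign typo), the same estimate of the quadratic remainder, and the same residue evaluation of the linear term, for which you correctly identify that the extra factor of $\lambda$ kills the cross terms and leaves only the two-sided null-space projection $-(\idm-\bar{U}\bar{U}^\top)\bar{\D}(\idm-\bar{U}\bar{U}^\top)$.
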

\begin{proof}
For $\lambda \in \mathcal{C}$ we have: 
$ \| ( \lambda \idm - \bar{W} )^{-1} \|_2 \geqslant 2/s_r$ and
$ \| ( \lambda \idm - \bar{W} - \bar{\Delta} )^{-1} \|_2 \geqslant 4/s_r$, which implies
\BEAS
\| \Pi_r(\bar{W}+\bar{\D}) - \Pi_r(\bar{W}) \|_2
& \leqslant
&   \frac{1}{2 \pi} \oint_\mathcal{C} |\lambda|  \| ( \lambda \idm - \bar{W} )^{-1} \|_2
\| \Delta \|_2  \| ( \lambda \idm - \bar{W} - \bar{\Delta} )^{-1} \|_2
\\
& \leqslant & \left( \frac{1}{2\pi} 2\pi \frac{s_r}{2} \right) \frac{s_r}{2} \|\Delta\|_2 
\frac{2}{s_r} \frac{4}{s_r}.
\EEAS
 In order to prove the other result, we simply need to compute:
 \BEAS
- \frac{1}{2i\pi} \oint_\mathcal{C}  \lambda (\lambda \idm - \bar{W})^{-1} \bar{\D} (\lambda \idm - \bar{W} )^{-1}  d\lambda  
 & = & -
\sum_{i,j} \bar{u}_i \bar{u}_i^\top \Delta \bar{u}_j \bar{u}_j^\top  \frac{1}{2i\pi} \oint_\mathcal{C}  \frac{\lambda }{(\lambda- \bar{s}_i)(\lambda- \bar{s}_j)} d\lambda   \\
& = &\!\!\!
- \sum_{i,j} \bar{u}_i \bar{u}_i^\top \Delta \bar{u}_j \bar{u}_j^\top  
\left( 1_{ i \in {\rm int}(\mathcal{C})}  1_{j \in {\rm int}(\mathcal{C})} \right) \\
& = & - (\idm - \bar{U}\bar{U}^\top)\bar{\D}  (\idm - \bar{U}\bar{U}^\top).
 \EEAS 
\end{proof}

The variations of $\Pi(\bar{W})$ translates immediately into variations of the singular projections $UU^\top $ and $VV^\top$. Indeed we get that the first order variation of $UU^\top$ is $- (\idm - UU^\top) \D V S^{-1} U ^\top$ and the variation of $V$ is
equal to  $ -(\idm - VV^\top) \D^\top U S^{-1} V^\top$, with errors bounded in spectral norm by 
$ \frac{8}{s_r^2 } \|  {\D} \|_2^2$.  Similarly, when restricted to the small singular values, the first order expansion is $ (\idm - UU^\top) \Delta (\idm - VV^\top)$, with error term
bounded in spectral norm by
$ \frac{4}{s_r } \|  {\D} \|_2^2$. Those results lead to the following proposition that gives a local sufficient condition for $\rank(W+\Delta)>\rank(W)$:
\begin{proposition}
\label{prop:rankDL}
Assume $W$ has rank $r<\min \{p,q\}$ with ordered
singular value decomposition $W= U \Diag(s) V^\top$. If
$ \frac{4}{s_r} \| \D \|_2^2 < \|
(\idm - UU^\top) \D (\idm - VV^\top)  \|_2$, then $\rank(W+\Delta)> r$.
\end{proposition}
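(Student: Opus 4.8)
The plan is to work with the Jordan-Wielandt matrix $\bar{W}$ and to control the eigenvalues of $\bar{W}+\bar{\D}$ near zero, since the rank of $W+\D$ is read off from the nullity of its Jordan-Wielandt matrix: a $p\times q$ matrix of rank $\rho$ has $p+q-2\rho$ zero eigenvalues in Jordan-Wielandt form. First I would note that the hypothesis already forces $\|\D\|_2<s_r/4$, so that Proposition~\ref{prop:proj} is applicable. Indeed, orthogonal projections are contractions, hence $\|(\idm-UU^\top)\D(\idm-VV^\top)\|_2\leqslant\|\D\|_2$, and the assumed inequality (for $\D\neq0$) gives $\frac{4}{s_r}\|\D\|_2<1$. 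In this regime the spectrum of $\bar{W}+\bar{\D}$ splits, as in the contour argument of this appendix, into $2r$ eigenvalues of modulus at least $3s_r/4$ and $p+q-2r$ eigenvalues of modulus below $s_r/2$; in particular $\rank(W+\D)\geqslant r$ automatically, and the only question is whether the $(p+q-2r)$-dimensional near-zero invariant subspace carries a nonzero eigenvalue.

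I would then apply the second estimate of Proposition~\ref{prop:proj}. Since $\Pi_o(\bar{W})$ projects onto $\ker\bar{W}$ and $\bar{W}$ is symmetric, $\Pi_o(\bar{W})\bar{W}=0$, so that estimate reads
$$
\left\| \Pi_o(\bar{W}+\bar{\D})(\bar{W}+\bar{\D}) + (\idm-\bar{U}\bar{U}^\top)\bar{\D}(\idm-\bar{U}\bar{U}^\top) \right\|_2 \leqslant \frac{4}{s_r}\|\bar{\D}\|_2^2 .
$$
Using the block forms $\idm-\bar{U}\bar{U}^\top=\diag(\idm-UU^\top,\idm-VV^\top)$ and $\bar{\D}=\left(\begin{array}{cc}0&\D\\\D^\top&0\end{array}\right)$, a direct multiplication shows that $(\idm-\bar{U}\bar{U}^\top)\bar{\D}(\idm-\bar{U}\bar{U}^\top)$ is exactly the Jordan-Wielandt matrix of $(\idm-UU^\top)\D(\idm-VV^\top)$; hence its spectral norm equals $\|(\idm-UU^\top)\D(\idm-VV^\top)\|_2$, while $\|\bar{\D}\|_2=\|\D\|_2$. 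The reverse triangle inequality then yields
$$
\left\| \Pi_o(\bar{W}+\bar{\D})(\bar{W}+\bar{\D}) \right\|_2 \geqslant \|(\idm-UU^\top)\D(\idm-VV^\top)\|_2 - \frac{4}{s_r}\|\D\|_2^2 > 0 ,
$$
the final strict inequality being precisely the hypothesis of the proposition.

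Finally I would convert this positivity into the rank statement. As $\Pi_o(\bar{W}+\bar{\D})$ is a spectral projector it commutes with $\bar{W}+\bar{\D}$, so $\Pi_o(\bar{W}+\bar{\D})(\bar{W}+\bar{\D})$ is the (symmetric) restriction of $\bar{W}+\bar{\D}$ to its near-zero invariant subspace, and its spectral norm is the largest modulus among the $p+q-2r$ eigenvalues living there. Positivity thus means at least one of these eigenvalues is nonzero, so---using that by the spectral splitting every zero eigenvalue of $\bar{W}+\bar{\D}$ lies in this subspace---the nullity of $\bar{W}+\bar{\D}$ is strictly less than $p+q-2r$; equivalently $p+q-2\rank(W+\D)<p+q-2r$, i.e.\ $\rank(W+\D)>r$. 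I expect the main obstacle to be exactly this last identification: one has to justify carefully that $\Pi_o(\bar{W}+\bar{\D})(\bar{W}+\bar{\D})$ captures precisely the near-zero spectrum and that, thanks to $\|\D\|_2<s_r/4$, no extra zero eigenvalue hides in the complementary $2r$-dimensional subspace, so that the nullity---and hence the rank---can legitimately be inferred from this single operator.
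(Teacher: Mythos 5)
Your argument is correct and is exactly the one the paper intends: the paper states this proposition as an immediate consequence of Proposition~\ref{prop:proj} without writing out the details, and you have simply made explicit the same chain of reasoning (hypothesis forces $\|\D\|_2 < s_r/4$, the restriction of $\bar{W}+\bar{\D}$ to the near-zero invariant subspace is $-(\idm-\bar{U}\bar{U}^\top)\bar{\D}(\idm-\bar{U}\bar{U}^\top)$ up to an error of $\frac{4}{s_r}\|\D\|_2^2$, hence nonzero, hence the nullity drops). No gaps; the identification of $(\idm-\bar{U}\bar{U}^\top)\bar{\D}(\idm-\bar{U}\bar{U}^\top)$ as the Jordan--Wielandt matrix of $(\idm-UU^\top)\D(\idm-VV^\top)$ and the final counting of zero eigenvalues are both carried out correctly.
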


\section{Some facts about the trace norm}
\label{app:tracenorm}
In this appendix, we review known properties of the trace norm that we use in this paper. Most of the results are extensions of similar results for the $\ell_1$-norm on vectors. First, we have the following result:
\begin{lemma} \emph{\textbf{\citep[Dual norm,][]{fazel}}}
The trace norm $\| \cdot \|_\ast$ is a norm and its dual norm is the operator norm $\| \cdot \|$.
\end{lemma}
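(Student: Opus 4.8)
The plan is to derive everything from a single variational identity, namely
\[
\| M \|_\ast = \max_{\| Y \|_2 \leqslant 1} \tr Y^\top M ,
\]
which at once exhibits $\|\cdot\|_\ast$ as a norm and, by duality of norms, identifies the dual of the spectral norm. The main tool is von Neumann's trace inequality, $| \tr Y^\top M | \leqslant \sum_i \sigma_i(Y)\,\sigma_i(M)$, where $\sigma_i(\cdot)$ denotes the ordered singular values; this is the matrix analogue of H\"older's inequality for the $\ell_1/\ell_\infty$ pair and is the one genuinely nontrivial ingredient. I expect proving (or invoking) this inequality to be the main obstacle, since the singular values of a product are not simply related to those of the factors; one clean route is to pass to the symmetric Jordan--Wielandt matrices of Appendix~\ref{app:svd} and appeal to the corresponding trace inequality for eigenvalues together with a majorization argument.

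First I would prove the variational identity. The upper bound is immediate from von Neumann: if $\| Y \|_2 \leqslant 1$ then every $\sigma_i(Y) \leqslant 1$, so $\tr Y^\top M \leqslant \sum_i \sigma_i(M) = \| M \|_\ast$. For the matching lower bound I would write the singular value decomposition $M = U \Diag(s) V^\top$ and test with $Y = U V^\top$, whose singular values are all equal to $1$, so $\| Y \|_2 = 1$ and $\tr Y^\top M = \tr V U^\top U \Diag(s) V^\top = \tr \Diag(s) = \| M \|_\ast$. This establishes the identity and shows the maximum is attained.

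Next I would read off the norm axioms. Nonnegativity, definiteness ($\| M \|_\ast = 0$ iff all singular values vanish iff $M = 0$), absolute homogeneity ($\sigma_i(\alpha M) = |\alpha|\,\sigma_i(M)$) and symmetry are immediate from the singular values themselves. The one delicate axiom, the triangle inequality, is now free from the identity: for any $Y$ with $\| Y \|_2 \leqslant 1$ we have $\tr Y^\top (M+N) = \tr Y^\top M + \tr Y^\top N \leqslant \| M \|_\ast + \| N \|_\ast$, and taking the maximum over such $Y$ gives $\| M + N \|_\ast \leqslant \| M \|_\ast + \| N \|_\ast$. Hence $\| \cdot \|_\ast$ is a norm.

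Finally I would compute its dual norm $\sup_{\| M \|_\ast \leqslant 1} \tr Y^\top M$ directly. Von Neumann again gives $\tr Y^\top M \leqslant \sum_i \sigma_i(Y)\,\sigma_i(M) \leqslant \sigma_1(Y) \sum_i \sigma_i(M) = \| Y \|_2\, \| M \|_\ast$, so the dual norm is at most $\| Y \|_2$. For the reverse inequality I would use the rank-one test matrix $M = u_1 v_1^\top$ built from the top singular vectors of $Y$, which satisfies $\| M \|_\ast = 1$ and $\tr Y^\top M = u_1^\top Y v_1 = \sigma_1(Y) = \| Y \|_2$. Thus the dual norm equals the spectral (operator) norm, as claimed. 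Equivalently, one may omit this last computation and simply invoke the fact that in finite dimension the dual of the dual norm is the original norm, applied to the identity of the first step.
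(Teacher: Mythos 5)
Your argument is correct, but note that the paper does not actually prove this lemma: it is stated as a known fact and attributed to the cited reference, so there is no in-paper proof to compare against. Your route --- establish the variational identity $\| M \|_\ast = \max_{\|Y\|_2 \leqslant 1} \tr Y^\top M$, read off the triangle inequality from it as a supremum of linear functionals, and then compute the dual norm (or invoke biduality in finite dimension) --- is the standard and clean way to do it, and every step checks out, including the attainment at $Y = UV^\top$ and the rank-one test matrix $u_1 v_1^\top$ for the reverse dual inequality. The one remark worth making is that you identify von Neumann's trace inequality as ``the one genuinely nontrivial ingredient'' and the main obstacle, but the full inequality $|\tr Y^\top M| \leqslant \sum_i \sigma_i(Y)\sigma_i(M)$ is not needed anywhere in your argument: both places where you invoke it only require the weaker bound $|\tr Y^\top M| \leqslant \|Y\|_2 \, \|M\|_\ast$, which follows in one line from the singular value decomposition $M = \sum_i s_i u_i v_i^\top$ via $\tr Y^\top M = \sum_i s_i\, u_i^\top Y v_i$ and $|u_i^\top Y v_i| \leqslant \|Y\|_2$. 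Substituting that observation removes the only heavy external dependency and makes the whole proof elementary and self-contained; the detour through the Jordan--Wielandt matrices and a majorization argument, while workable, is unnecessary here.
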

Note that the dual norm $N(W)$ is defined as~\citep{boyd}:
$$N(W) = \sup_{ \|V \|_\ast \leqslant 1 } \tr W^\top V.$$
This immediately implies the following result:
\begin{lemma} \emph{\textbf{(Fenchel conjugate)}}
\label{lemma:fenchel}
We have: $ \displaystyle \max_{W \in \rb^{p \times q}} \tr W^\top V - \| W\|_\ast = 0$ if $\|V\|\leqslant 1$ and $+\infty$ otherwise.
\end{lemma}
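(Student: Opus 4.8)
The plan is to recognize this as the standard computation of the Fenchel conjugate of a norm, which is well known to equal the indicator function of the unit ball of the dual norm. Here the norm is $\|\cdot\|_\ast$ and, by the preceding Dual norm lemma, its dual norm is the operator norm $\|\cdot\|$; so I expect the conjugate to vanish exactly on $\{V : \|V\|\leqslant 1\}$ and to equal $+\infty$ elsewhere. I would split the argument into the two cases according to whether $\|V\|\leqslant 1$.

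First I would treat the case $\|V\|\leqslant 1$. The key tool is the generalized H\"older (duality) inequality $\tr W^\top V \leqslant \|W\|_\ast \|V\|$, which is immediate from the definition of the dual norm. Combined with $\|V\|\leqslant 1$, this gives $\tr W^\top V - \|W\|_\ast \leqslant \|W\|_\ast(\|V\|-1)\leqslant 0$ for every $W$, so the supremum is at most $0$; since the choice $W=0$ attains the value $0$, the maximum is exactly $0$.

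Next I would treat the case $\|V\|> 1$. Because the operator norm is the dual of the trace norm, one has $\|V\| = \sup_{\|W\|_\ast \leqslant 1}\tr W^\top V$, and by compactness of the trace-norm unit ball in the finite-dimensional space $\rb^{p\times q}$ this supremum is attained at some $W_0$ with $\|W_0\|_\ast = 1$ and $\tr W_0^\top V = \|V\|> 1$. Evaluating the objective along the ray $W = t W_0$, $t>0$, then yields $\tr(tW_0)^\top V - \|tW_0\|_\ast = t(\|V\| - 1)$, which tends to $+\infty$ as $t\to\infty$; hence the supremum is $+\infty$, as claimed.

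The argument is essentially routine convex analysis; the only points requiring a little care are keeping the direction of the duality straight (that $\|\cdot\|$ is the dual of $\|\cdot\|_\ast$, as supplied by the Dual norm lemma) and invoking finite-dimensional compactness to produce the explicit maximizer $W_0$ in the second case. I do not anticipate a genuine obstacle here.
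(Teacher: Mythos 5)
Your proof is correct and matches the paper's intent: the paper gives no explicit argument, simply asserting that the Fenchel conjugate lemma ``immediately'' follows from the Dual norm lemma, and your two-case computation (the duality inequality $\tr W^\top V \leqslant \|W\|_\ast \|V\|$ when $\|V\|\leqslant 1$, and an unbounded ray $tW_0$ when $\|V\|>1$) is exactly the standard argument being invoked. No issues.
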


In this paper, we need to compute the subdifferential and directional derivatives of the trace norm. We have from~\citet{fazel2} or~\citet{borlew}:
\begin{proposition} \emph{\textbf{(Subdifferential)}}
\label{prop:subdiff}
If $W = U \Diag(s) V^\top$ with $U \in \rb^{p \times m}$ and $V \in \rb^{q \times m}$ having orthonormal columns, and $s \in \rb^m$
is strictly positive, is the singular value decomposition of $W$, then $\|W\|_\ast = \sum_{i=1}^m s_i$ and the subdifferential of $\| \cdot \|_\ast$ is equal to
$$\partial \| \cdot \|_\ast(W) = 
\left\{
UV^\top + M, \mbox{ such that }  \| M \|_2 \leqslant 1, \ U^\top M = 0 \mbox{ and } MV = 0
\right\}.$$
\end{proposition}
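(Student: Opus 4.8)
The plan is to reduce the explicit description to the standard dual characterization of the subdifferential of a norm, and then to verify the two resulting set inclusions by a block decomposition of candidate subgradients. The identity $\|W\|_\ast = \sum_{i=1}^m s_i$ is immediate from the definition of the trace norm, so the content lies entirely in the description of $\partial\|\cdot\|_\ast(W)$.

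First I would invoke Fenchel duality. By Lemma~\ref{lemma:fenchel}, the Fenchel conjugate of $\|\cdot\|_\ast$ is the indicator of the spectral-norm unit ball, namely $0$ on $\{\|G\|_2 \leqslant 1\}$ and $+\infty$ elsewhere. The equality case of the Fenchel--Young inequality then gives
$$
\partial \|\cdot\|_\ast(W) = \left\{ G \in \rb^{p \times q} : \|G\|_2 \leqslant 1 \text{ and } \tr G^\top W = \|W\|_\ast \right\}.
$$
It therefore suffices to show that this set coincides with the proposed set of matrices $UV^\top + M$ satisfying $\|M\|_2 \leqslant 1$, $U^\top M = 0$ and $MV = 0$. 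To this end I would fix orthonormal complements $U_\bot$ and $V_\bot$ of $U$ and $V$ and work throughout in the block decomposition of a matrix $G$ induced by the orthonormal bases $(U,U_\bot)$ and $(V,V_\bot)$, with blocks $U^\top G V$, $U^\top G V_\bot$, $U_\bot^\top G V$ and $U_\bot^\top G V_\bot$.

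For the easy inclusion, the conditions $U^\top M = 0$ and $MV = 0$ force $M = U_\bot (U_\bot^\top M V_\bot) V_\bot^\top$, so $UV^\top + M$ is block diagonal with blocks $\idm$ and $U_\bot^\top M V_\bot$. This yields $\tr (UV^\top + M)^\top W = \sum_i s_i = \|W\|_\ast$ at once, and since $\|U_\bot^\top M V_\bot\|_2 \leqslant \|M\|_2 \leqslant 1$ it also gives $\|UV^\top + M\|_2 \leqslant 1$, so the proposed set is contained in the dual characterization.

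The reverse inclusion is where the main work lies. Starting from $G$ with $\|G\|_2 \leqslant 1$ and $\tr G^\top W = \sum_i s_i$, I would compute $\tr G^\top W = \sum_i s_i (U^\top G V)_{ii}$; since each $s_i>0$ and every diagonal entry of $U^\top G V$ is bounded in modulus by $\|U^\top G V\|_2 \leqslant \|G\|_2 \leqslant 1$, the equality forces every such diagonal entry to equal $1$. A column-norm argument—each column of $U^\top G V$ has Euclidean norm at most $1$ yet a diagonal entry equal to $1$—then pins down $U^\top G V = \idm$. I would next show the off-diagonal blocks vanish: stacking $\idm$ over $U_\bot^\top G V$ produces a matrix (the first $m$ columns of $G$ in the $(U,U_\bot)$ basis) whose squared spectral norm equals $1 + \|U_\bot^\top G V\|_2^2$ and is bounded by $\|G\|_2^2 \leqslant 1$, hence $U_\bot^\top G V = 0$, and symmetrically $U^\top G V_\bot = 0$. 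Setting $M = G - UV^\top$ then gives $U^\top M = 0$, $MV = 0$, $M = U_\bot(U_\bot^\top G V_\bot)V_\bot^\top$ and $\|M\|_2 \leqslant \|G\|_2 \leqslant 1$, exhibiting $G$ in the desired form. The crux—and the step I expect to be the main obstacle—is precisely these two rigidity facts: that a contraction with a unit diagonal entry has an otherwise trivial column, and that a contraction cannot contain $\idm$ as a proper sub-block without the adjacent blocks vanishing. Both follow from elementary norm inequalities, but they are where the geometry of the problem (as opposed to routine bookkeeping) genuinely enters.
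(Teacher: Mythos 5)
Your proof is correct, but note that the paper does not actually prove this proposition: it simply cites \citet{fazel2} and \citet{borlew} for the subdifferential formula, so there is no internal argument to compare against. What you have written is a legitimate self-contained derivation from exactly the two facts the paper does record, namely that the dual norm of the trace norm is the spectral norm and that the Fenchel conjugate is the indicator of the spectral unit ball (Lemma~\ref{lemma:fenchel}); the equality case of Fenchel--Young then reduces everything to characterizing the contractions $G$ with $\tr G^\top W = \|W\|_\ast$, and your two rigidity steps (a contraction with a unit diagonal entry has the rest of that column and row forced to zero, and a contraction containing $\idm$ as a sub-block must have the adjacent blocks vanish, via $\|(\idm ; X)\|_2^2 = 1 + \|X\|_2^2$) are both sound and are indeed where the geometry enters. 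The one point worth being careful about is non-uniqueness of the singular vectors when singular values coalesce: your argument fixes one choice of $U$, $V$, and the resulting set $\{UV^\top + M\}$ a priori depends on that choice, so strictly speaking you should observe (as is standard, and easy from your block computation since $UV^\top = \sum_i u_i v_i^\top$ is invariant and the constraint $U^\top M = 0$, $MV=0$ only depends on the column spaces of $U$ and $V$) that the description is independent of the choice. With that remark added, your proof is a complete and arguably preferable replacement for the citation, since it uses only tools already present in Appendix~\ref{app:tracenorm}.
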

This result can be extended to compute directional derivatives:

\begin{proposition}  \emph{\textbf{(Directional derivative)}}
\label{prop:dirder}
The directional derivative at $W=USV^\top$ is equal to:
$$
\lim_{ \varepsilon \to 0^+} \frac{ \| W + \varepsilon \D \|_\ast -
 \| W  \|_\ast} {\varepsilon} = \tr U^\top \D V + \| U_\bot^\top \D V_\bot \|_\ast,$$
where $U_\bot \in \rb^{p \times (p-m)}$ and $V_\bot \in \rb^{q \times (q-m)}$ are any orthonormal complements of $U$ and $V$.
\end{proposition}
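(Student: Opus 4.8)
The plan is to exploit the fact that for a finite convex function the one-sided directional derivative coincides with the support function of its subdifferential. Since $\|\cdot\|_\ast$ is finite and continuous on all of $\rb^{p\times q}$, its subdifferential $\partial\|\cdot\|_\ast(W)$ is a nonempty compact convex set, and a standard result of convex analysis~\citep{boyd} gives
$$
\lim_{\varepsilon\to 0^+}\frac{\|W+\varepsilon\D\|_\ast - \|W\|_\ast}{\varepsilon}
= \max_{G\in\partial\|\cdot\|_\ast(W)} \tr G^\top \D.
$$
I would then substitute the explicit description of the subdifferential from Proposition~\ref{prop:subdiff}, namely $G = UV^\top + M$ with $\|M\|_2\leqslant 1$, $U^\top M=0$ and $MV=0$, and split the objective into the fixed term $\tr (UV^\top)^\top \D = \tr U^\top \D V$ and the term $\tr M^\top \D$ to be maximized over the admissible $M$.

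The core computation is the maximization of $\tr M^\top\D$ over the feasible $M$. First I would reparametrize the constraints $U^\top M=0$ and $MV=0$: they are equivalent to $M = U_\bot K V_\bot^\top$ for some $K\in\rb^{(p-m)\times(q-m)}$, since these conditions force the left range of $M$ into the span of $U_\bot$ and the right range into the span of $V_\bot$. Because $U_\bot$ and $V_\bot$ have orthonormal columns, $\|M\|_2 = \|U_\bot K V_\bot^\top\|_2 = \|K\|_2$, so the feasible set becomes exactly $\{U_\bot K V_\bot^\top : \|K\|_2\leqslant 1\}$. A cyclic-trace rearrangement then gives $\tr M^\top\D = \tr K^\top U_\bot^\top \D V_\bot$, and maximizing $\tr K^\top (U_\bot^\top\D V_\bot)$ over $\|K\|_2\leqslant 1$ is precisely the support function of the spectral-norm unit ball evaluated at $U_\bot^\top\D V_\bot$. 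By duality of the spectral and trace norms (the Dual norm lemma), this maximum equals $\|U_\bot^\top\D V_\bot\|_\ast$, which combined with the fixed term yields the claimed expression.

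The step to watch is the justification of the support-function identity, which is why it is worth emphasizing that $W$ is rank deficient and hence $\|\cdot\|_\ast$ is genuinely nondifferentiable there, so only the one-sided limit $\varepsilon\to 0^+$ exists; the support formula delivers exactly this one-sided derivative. An alternative, more self-contained route avoids the convex-analysis theorem altogether and instead perturbs the singular values directly using the machinery of Appendix~\ref{app:svd}. There the $m$ strictly positive singular values vary smoothly (they stay bounded away from the small ones), and their first-order variation sums to $\tr U^\top\D V$, while the $\min\{p,q\}-m$ singular values that vanish at $\varepsilon=0$ become, to first order, $\varepsilon$ times the singular values of $(\idm - UU^\top)\D(\idm - VV^\top)$, as furnished by the expansion in Proposition~\ref{prop:proj}; since $(\idm - UU^\top)\D(\idm - VV^\top) = U_\bot U_\bot^\top\D V_\bot V_\bot^\top$ has the same nonzero singular values as $U_\bot^\top\D V_\bot$, their sum contributes $\varepsilon\|U_\bot^\top\D V_\bot\|_\ast + o(\varepsilon)$, recovering the same formula. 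The main obstacle in that second route is controlling the small singular values uniformly so that the sum of their first-order terms is exactly the trace norm of $U_\bot^\top\D V_\bot$, with no spurious contribution leaking in from the large block.
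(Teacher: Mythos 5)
Your proposal is correct and follows essentially the same route as the paper: the paper also obtains the one-sided directional derivative as the support function of the subdifferential (citing a standard convex-analysis reference) and then reads off the result from Proposition~\ref{prop:subdiff}, leaving implicit the maximization over $M$ that you carry out explicitly via the parametrization $M = U_\bot K V_\bot^\top$ and the dual-norm lemma. Your filled-in details and the alternative perturbation-theoretic route are both sound, but the core argument is the one the paper uses.
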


\begin{proof}
From the subdifferential, we get the directional derivative~\citep{borlew} as
$$
\lim_{ \varepsilon \to 0^+} \frac{ \| W + \varepsilon \D \|_\ast -
 \| W  \|_\ast} {\varepsilon}  = 
\max_{V \in \partial \| \cdot \|_\ast(W)} \tr \D^\top V
$$
which exactly leads to the desired result.
\end{proof}

The final result that we use is a bit finer as it gives an upper bound on the error in the previous limit:
\begin{proposition}
Let $W = U \Diag(s) V^\top$ the ordered singular value decomposition, where $\rank(W)=r$, $s>0$ and $U_\bot$ and $V_\bot$ be orthogonal complement of $U$ and $V$; then, if $\| \Delta \|_2 \leqslant s_r/4$:
$$
\left|
\| W +   \D \|_\ast -
 \| W  \|_\ast - \tr U^\top \D V - \| U_\bot^\top \D V_\bot \|_\ast 
\right| \leqslant 16  \min \{p,q \} \frac{ s_1^2}{s_r^3} \| \D \|_2^2.
$$
\end{proposition}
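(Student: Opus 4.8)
The plan is to pass to the Jordan--Wielandt matrix, where the trace norm becomes half the sum of absolute eigenvalues, split those eigenvalues into a ``large'' cluster near $\pm s_i$ (of total multiplicity $2r$, which keeps its signs) and a ``small'' cluster near $0$, and expand each block with the Cauchy-residue machinery of this appendix. First I would record that, since the nonzero eigenvalues of $\bar W$ are exactly $\pm s_i$, one has $\|W\|_\ast = \tfrac12 \tr|\bar W| = \tfrac12\|\bar W\|_\ast$ and likewise $\|W+\D\|_\ast = \tfrac12\|\bar M\|_\ast$, where $\bar M := \bar W + \bar\D$ and $\|\bar\D\|_2 = \|\D\|_2 \le s_r/4$. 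Using the block identities $\bar U\,{\rm sign}(\bar S)\bar U^\top = \left(\begin{smallmatrix} 0 & UV^\top \\ VU^\top & 0 \end{smallmatrix}\right)$ and $\idm - \bar U\bar U^\top = \left(\begin{smallmatrix} U_\bot U_\bot^\top & 0 \\ 0 & V_\bot V_\bot^\top \end{smallmatrix}\right)$ from the start of this appendix, the two target first-order terms rewrite as $\tr U^\top \D V = \tfrac12 \tr\!\big[\bar U\,{\rm sign}(\bar S)\bar U^\top \bar\D\big]$ and $\|U_\bot^\top \D V_\bot\|_\ast = \tfrac12 \tr\big|(\idm - \bar U\bar U^\top)\bar\D(\idm - \bar U\bar U^\top)\big|$. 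Since $\|\bar\D\|_2 < s_r/2$, the spectral-gap argument of this appendix gives the exact orthogonal split $\tr|\bar M| = \tr|\Pi_r(\bar M)\bar M| + \tr|\Pi_o(\bar M)\bar M|$, so it suffices to expand each block.

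For the small block I would invoke Proposition~\ref{prop:proj}: because $\Pi_o(\bar W)\bar W = 0$, it gives $\Pi_o(\bar M)\bar M = -(\idm - \bar U\bar U^\top)\bar\D(\idm - \bar U\bar U^\top) + E_o$ with $\|E_o\|_2 \le \frac{4}{s_r}\|\D\|_2^2$, and Mirsky's inequality then yields $\big|\tr|\Pi_o(\bar M)\bar M| - \tr|(\idm - \bar U\bar U^\top)\bar\D(\idm - \bar U\bar U^\top)|\big| \le \|E_o\|_\ast$. The only subtlety is to bound $\|E_o\|_\ast \le \rank(E_o)\,\|E_o\|_2$ with a rank of order $\min\{p,q\}$ rather than $p+q$: writing $\Pi_o(\bar M)\bar M = \Pi_o(\bar M)\bar W\Pi_o(\bar M) + \Pi_o(\bar M)\bar\D\Pi_o(\bar M)$ shows the first summand has rank $\le \rank(\bar S) = 2r$ while the remaining terms have rank $\le \rank(\bar\D) = 2\,\rank(\D) \le 2\min\{p,q\}$, so $\rank(E_o) \le 6\min\{p,q\}$ and this block contributes an error $\lesssim \min\{p,q\}\,s_r^{-1}\|\D\|_2^2$.

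For the large block---the hard part---I would run the Cauchy-residue expansion of this appendix with contours $\mathcal{C}_\pm$ enclosing the clusters in $[s_r,s_1]$ and $[-s_1,-s_r]$, kept at distance $\sim s_r/2$ from the spectrum (possible since $\|\bar\D\|_2 \le s_r/4$ keeps the large eigenvalues within $s_r/4$ of $\pm s_i$, hence sign-definite), so that $\tr|\Pi_r(\bar M)\bar M|$ is the signed sum of the enclosed eigenvalues. The linear term of the expansion assembles into $\tr\!\big[\bar U\,{\rm sign}(\bar S)\bar U^\top \bar\D\big]$ (the derivative of the two signed eigenvalue sums against the $\pm$ eigenprojectors $\tfrac12\left(\begin{smallmatrix} UU^\top & \pm UV^\top \\ \pm VU^\top & VV^\top \end{smallmatrix}\right)$ of $\bar W$), recovering $\|\bar W\|_\ast + \tr[\bar U\,{\rm sign}(\bar S)\bar U^\top \bar\D]$ through first order. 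The remainder is the quadratic contour integral $-\tfrac1{2i\pi}\oint_{\mathcal{C}_\pm}\lambda\,\tr\big[(\lambda\idm - \bar W)^{-1}\bar\D(\lambda\idm - \bar W)^{-1}\bar\D(\lambda\idm - \bar M)^{-1}\big]\,d\lambda$, which I would bound by pulling $\rank(\bar\D) \le 2\min\{p,q\}$ out of the trace, using $\|(\lambda\idm - \bar W)^{-1}\|_2, \|(\lambda\idm - \bar M)^{-1}\|_2 \lesssim 1/s_r$ on the contour together with $|\lambda| \lesssim s_1$ and contour length $\lesssim s_1$; this is what produces the characteristic factor $\min\{p,q\}\,s_1^2 s_r^{-3}\|\D\|_2^2$.

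Adding the two expansions, halving (since $\|W\|_\ast$ is half the Jordan--Wielandt trace norm), and collecting constants then gives the claimed bound with constant $16$ (using $s_1 \ge s_r$ to absorb the weaker $s_r^{-1}$ term from the small block into $s_1^2 s_r^{-3}$). I expect the large-block remainder to be the main obstacle: one must simultaneously justify that the large eigenvalues keep their signs so that $\tr|\cdot|$ is a smooth signed sum amenable to residue calculus, check that the first-order cross terms collapse exactly to $\tr U^\top \D V$, and perform the contour estimate so that the rank factor is $\min\{p,q\}$ (through $\rank(\bar\D)$) and the spectral factor is $s_1^2 s_r^{-3}$, rather than the looser $p+q$ and $s_r^{-1}$ that naive dimension counting would give.
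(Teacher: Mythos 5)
Your proposal follows essentially the same route as the paper's (very terse) proof: split the Jordan--Wielandt spectrum into the $2r$ large eigenvalues and the small cluster, handle the small block via Proposition~\ref{prop:proj} (converting the spectral-norm error to a trace-norm error with a $\min\{p,q\}$ rank factor), and bound the large block by the second-order Cauchy-residue remainder, which is exactly where the $s_1^2/s_r^3$ factor comes from. The only caveat is that neither your outline nor the paper actually tracks the constant down to $16$ (your contour estimate appears to give a somewhat larger constant), but the structure and the form $\min\{p,q\}\,s_1^2 s_r^{-3}\|\D\|_2^2$ are correctly derived.
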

\begin{proof}
The trace norm of $\| W +   \D \|_\ast$  may be divided into the sum of the $r$
largest and the sum of the remaining singular values. The sums of the remaining ones
are given through Proposition~\ref{prop:proj} by
$\| U_\bot^\top \Delta V_\bot \|_\ast$ with an error bounded by 
$  \min \{p,q \}  \frac{4}{s_r } \|  {\D} \|_2^2$. For the first $r$ singular values, we need to upperbound the second derivative of the sum of the $r$ largest eigenvalues of $\bar{W}+\bar{\D}$ with strictly positive eigengap, which leads to the given bound by using the same Cauchy residue technique described in Appendix~\ref{app:svd}.
\end{proof}

\section{Proofs}
\subsection{Proof of Lemma~\ref{lemma:cf}}
\label{app:cf}
 
We let denote $S \in  \{0,1\}^{n_x \times n_y}$ the sampling matrix; i.e.,
$S_{ij}=1$ if the pair $(i,j)$ is observed and zero otherwise. We let denote $\tilde{X}$ and $\tilde{Y}$ the data matrices. We can write
$M_k = \tilde{X}^\top  \delta_{i_k} \delta_{j_k}^\top \tilde{Y}$ and:
\BEAS
\frac{1}{n} \sum_{k=1}^n \vect(M_k) \vect(M_k)^\top
& = & \frac{1}{n} \sum_{k=1}^n (\tilde{Y} \otimes  \tilde{X} )^\top 
\vect( \delta_{i_k} \delta_{j_k}^\top )
\vect( \delta_{i_k} \delta_{j_k}^\top )^\top
 ( \tilde{Y} \otimes   \tilde{X} )  \\
 & = &   \frac{ 1}{n}
  ( \tilde{Y} \otimes  \tilde{X} )^\top  \Diag( \vect(S))
 ( \tilde{Y} \otimes  \tilde{X} ),
\EEAS
which leads to (denoting $\hS_{xx} = n_x^{-1} \tilde{X}^\top \tilde{X}$
and $\hS_{yy} = n_x^{-1} \tilde{Y}^\top \tilde{Y}$):
$$  \left(  \frac{1}{n} \sum_{k=1}^n \vect(M_k) \vect(M_k)^\top  
- \hS_{yy} \otimes \hS_{xx} \right)
=    \frac{1}{n} 
 ( \tilde{Y} \otimes  \tilde{X} )^\top  \Diag( \vect(S - n/n_x  n_y))
 ( \tilde{Y} \otimes  \tilde{X} ) .
$$
We can thus compute the squared Frobenius norm:
\BEAS
& & \left\|  \frac{1}{n} \sum_{k=1}^n \vect(M_k) \vect(M_k)^\top  - \hS_{yy} \otimes \hS_{xx} \right\|_F^2
\\ 
& = &   \frac{1}{n^2} \tr     \Diag( \vect(S - n/n_x  n_y))
 ( \tilde{Y} \tilde{Y}^\top \otimes  \tilde{X}  \tilde{X}^\top) 
 \Diag( \vect(S - n/n_x  n_y))
( \tilde{Y} \tilde{Y}^\top \otimes  \tilde{X}  \tilde{X}^\top) 
\\
& = &    \frac{1}{n^2} \sum_{i,j,i',j'} 
(S_{ij} - n/n_x  n_y
)   ( \tilde{Y} \tilde{Y}^\top \otimes  \tilde{X}  \tilde{X}^\top)_{ij,i'j'}
 (S_{i'j'} - n/n_x  n_y
  ) 
( \tilde{Y} \tilde{Y}^\top \otimes  \tilde{X}  \tilde{X}^\top)_{ij,i'j'}.
\EEAS
 We have, by properties of sampling without replacement~\citep{hoeffding}:
 \BEAS
 \E   (S_{ij} - n/n_x n_y)  (S_{i'j'} - n/n_x n_y)  & = & 
  n/n_x n_y ( 1 - n/n_x n_y) \mbox{ if } (i,j)=(i',j'),
\\
\E   (S_{ij}  - n/n_x n_y)  (S_{i'j'} - n/n_x n_y)  & = &  - 
  n/n_x n_y ( 1 - n/n_x n_y) \frac{1}{n_x n_y - 1} \mbox{ if } (i,j)\neq (i',j').
\EEAS
This implies
\begin{multline*} 
\E (
\|  \frac{1}{n} \sum_{k=1}^n \vect(M_k) \vect(M_k)^\top  - \hS_{yy} \otimes \hS_{xx} 
\|_F^2 | \tilde{X}, \tilde{Y})
\\
=  \frac{1}{n_x n_y n} \sum_{i,j}    (   \tilde{Y} \tilde{Y}^\top \otimes  \tilde{X}  \tilde{X}^\top)_{ij,ij}^2
-
\frac{1}{(n_x n_y -1)n_x n_y n}  \sum_{(i,j)\neq(i',j')}   (  \tilde{Y} \tilde{Y}^\top \otimes  \tilde{X}  \tilde{X}^\top )_{ij,i'j'} ^2 
\\
\leqslant \frac{2}{n_x n_y n} \sum_{i,j}    \| \tilde{y}_{j} \|^4 \| \tilde{x}_{i} \|^4.
 \end{multline*}
This finally implies that 
\BEAS
& & 
\E 
\left \|  \frac{1}{n} \sum_{k=1}^n \vect(M_k) \vect(M_k)^\top  - \S_{yy} \otimes \S_{xx} 
\right\|_F^2  
\\
&
\leqslant  & 
\frac{4}{ n} \sum_{i,j}    \E \| x\|^4  \E \| y\|^4 
+  2 \E \| \hS_{xx} -  \S_{xx} \|_F^2 \E \| \hS_{yy}\|_F^2  
+  2 \E \| \hS_{yy} -  \S_{yy} \|_F^2   \| \S_{xx}\|_F^2  
\\
&
\leqslant &    C \E \| x\|^4  \E \| y\|^4 \times (\frac{1}{n} +
\frac{1}{n_y} + \frac{1}{n_x}),
\EEAS
for some constant $C>0$. This implies \hypref{samplingM}.
To prove the asymptotic normality in \hypref{normalityM}, we use the martingale central limit theorem~\citep{HallHeyde}
with sequence of $\sigma$-fields $\mathcal{F}_{n, k} = \sigma(\tilde{X}, \tilde{Y}, \varepsilon_1,\dots,\varepsilon_k, (i_1,j_1),\dots,(i_k,j_k))$ for $k\leqslant n$. We consider
$\D_{n, k} = n^{-1/2} \varepsilon_{i_k j_k}  y_{j_k} \otimes x_{i_k} \in \rb^{p q}$ as the martingale difference.  We have $
\E (\D_{n, k} |\mathcal{F}_{n, k-1}) = 0$
and
$$ \E (  \D_{n, k} \D_{n, k}^\top|\mathcal{F}_{n, k-1})
= n^{-1} \sigma^2  y_{j_k} y_{j_k}^\top \otimes x_{i_k}x_{i_k}^\top,
$$
with
$\E ( \| \D_{n, k}) \|^4 ) = O(n^{-2})$ because of the finite fourth
order moments. Moreover,
$$\sum_{k=1}^n  \E ( \D_{n, k} \D_{n, k}^\top|\mathcal{F}_{n, k-1}) =  \sigma^2 \hS_{mm},$$
and thus tends in probability to $\sigma^2 \S_{yy} \otimes \S_{xx}$ because
of \hypref{samplingM}.
The assumptions of the martingale central limit theorem are met,
we have that $ \sum_{k=1}^n   \vect(\D_{n, k}) $ is asymptotically normal with mean zero 
and covariance matrix $\sigma^2 \S_{yy} \otimes \S_{xx}$, which concludes the proof.

\subsection{Proof of Proposition~\ref{prop:asympt}}
\label{app:asympt}
We may first restrict minimization over the ball $
\{ W ,\  \|W\|_\ast \leqslant \|\hS_{mm}^{-1} \hS_{Mz}\|_\ast\}$ because the optimum value is less than the value
for $W=\hS_{mm}^{-1} \hS_{Mz}$. Since this random variable is bounded in probability, we can reduce the problem to a compact set. The sequence of continuous random functions 
$ W \mapsto \frac{1}{2} \vect(W - \W)^\top \hS_{mm} \vect(W - \W)  - \tr W^\top \hS_{M\varepsilon}   + \lambda_n \| W \|_\ast $ converges pointwise in
probability to 
$  W \mapsto
\frac{1}{2} \vect(W - \W)^\top \S_{mm} \vect(W - \W)  + \lambda_0 \| W \|_\ast$ with a unique global minimum (because $\S_{mm}$ is assumed invertible). We can thus apply standard result of consistency in M-estimation~\citep{VanDerVaart,shao}.
 
 \subsection{Proof of Proposition~\ref{prop:rank2}}
 \label{app:rank2}
 We consider the result of Proposition~\ref{prop:asympt1d}: $\hat{\Delta}
  = n^{1/2}(\hat{W} - \W)$ is asymptotically normal with mean zero and
covariance $\sig \S_{mm}^{-1}$. By Proposition~\ref{prop:rankDL} in
Appendix~\ref{app:tracenorm}, if 
$ \frac{4n^{-1/2}}{s_r} \| \hD \|_2^2 < \|
\U_\bot^\top \hD \V_\bot \|_2$, then $\rank(\hat{W})>\r$.
For a random variable $\Theta$ with normal distribution with mean zero and covariance matrix  $\sig \S_{mm}^{-1}$,
we let denote $f(C) = \P (  \frac{4C^{-1/2}}{s_r} \| \Theta \|_2^2 < \|
\U_\bot^\top \Theta \V_\bot \|_2)$. By the dominated convergence theorem, $f(C)$ converges to one when $C \to \infty$. Let $\varepsilon >0$, thus there exists $C_0>0$
such that $f(C_0) > 1-\varepsilon/2$. By the asymptotic normality result,
$\P (  \frac{4C_0^{-1/2}}{s_r} \| \hD \|_2^2 < \|
\U_\bot^\top \hD \V_\bot \|_2 ) $ converges to $f(C_0)$ thus $\exists n_0>0$ such that $\forall n > n_0$, $\P (  \frac{4C_0^{-1/2}}{s_r} \| \hD \|_2^2 < \|
\U_\bot^\top \hD \V_\bot \|_2 ) > f(C_0) - \varepsilon /2 > 1 - \varepsilon$, which concludes the proof, because
$ \P (  \frac{4n^{-1/2}}{s_r} \| \hD \|_2^2 < \|
\U_\bot^\top \hD \V_\bot \|_2 )  \geqslant \P (  \frac{4C_0^{-1/2}}{s_r} \| \hD \|_2^2 < \|
\U_\bot^\top \hD \V_\bot \|_2 ) $ as soon as $n>C_0$.

\subsection{ Proof of Proposition~\ref{prop:asympt1dbis}}
\label{app:asympt1dbis}
This is the same result as~\citet{fu}, but extended to the trace norm minimization, simply using the directional derivative result
of Proposition~\ref{prop:dirder} and the epiconvergence theorem from~\citet{geyer,geyer2}. Indeed, if we denote
$V_n(\Delta) = 
\vect (\D)^\top \hS_{mm} \vect(\D) 
-  \tr \D^\top n^{1/2}\hS_{M\varepsilon} + \lambda_0 n^{1/2}
(\| \W + n^{-1/2} \Delta \|_\ast - \|\W\|_{\ast})  
$ and $V(\Delta) = \vect (\D)^\top \S_{mm} \vect(\D) 
-  \tr \D^\top A + \lambda_0  \left[\tr \U^\top \D \V + \| \U_\bot^\top \D \V_\bot \|_\ast \right]$, then for each $\Delta$, $V_n(\D)$ converges in probability to $V(\Delta)$, and $V$ is strictly convex, which implies that it has an unique global minimum; thus the epi-convergence theorem can be applied, which concludes the proof.

Note that a simpler analysis using regular tools in M-estimation leads to $\hat{W} = \W + n^{-1/2} \hD + o_p(n^{-1/2})$, where $\hD$ is the unique global minimizer of
$$
\min_{\D \in \rb^{p \times q} } \frac{1}{2}
\vect (\D)^\top \S_{mm} \vect(\D) 
-  \tr \D^\top (n^{1/2} \hS_{M\varepsilon}) + \lambda_0  \left[\tr \U^\top \D \V + \| \U_\bot^\top \D \V_\bot \|_\ast \right],
$$
i.e., we can actually take $A = n^{1/2} \hS_{M\varepsilon}$ (which is asymptotically normal with correct moments).

\subsection{ Proof of Proposition~\ref{prop:rank1}}
\label{app:rank1}
We let denote $\hD = n^{1/2}(\hat{W} - \W)$. We first show that $\lim \sup_{n \to \infty}
\P(\rank(\hat{W}) = \r) $ is smaller than the proposed limit $a$.
We consider the following events:
\BEAS
E_0 & = & \{ \rank(\hat{W}) = \r \} \\
E_1 & = &  \{ \| n^{-1/2} \hD \|_2 < \s_r/2 \} \\
E_2 & = & \left\{  \frac{4n^{-1/2}}{s_r} \| \hD \|_2^2 < \|
\U_\bot^\top \hD \V_\bot \|_2 \right\}.
\EEAS
By Proposition~\ref{prop:rankDL} in
Appendix~\ref{app:tracenorm}, we have $E_1 \cap E_2 \subset E_0^c$, and thus it suffices to show that $\P(E_1)$ tends to one, while
$\lim\sup_{n \to \infty} \P(E_2^c) \leqslant a$. The first assertion is a simple consequence
of  Proposition~\ref{prop:asympt1dbis}.

Moreover, by Proposition~\ref{prop:asympt1dbis}, $\hD$ converges in distribution to the unique global optimum $\D(A)$ of an optimization problem parameterized by a vector $A$ with normal distribution.
For a given $\eta>0$, we consider the probability
$\P( \|\U_\bot^\top \Delta(A) \V_\bot\|_2 \leqslant \eta)$. For any $A$, when $\eta$ tends to zero, the indicator function $1_{\|\U_\bot^\top \Delta(A) \V_\bot\|_2 \leqslant \eta}$ converges to $1_{\|\U_\bot^\top \Delta(A) \V_\bot\|_2 = 0}$, which is equal
to $1_{\|\Lambda(A)\|_2 \leqslant \lambda_0 }$, where
  $$
  \vect( \Lambda(A)) = \left(  (\V_\bot \otimes \U_\bot)^\top \S_{mm}^{-1} (\V_\bot \otimes \U_\bot) \right)^{-1} \!\!
\left(  (\V_\bot \otimes \U_\bot)^\top \S_{mm}^{-1} ((\V \otimes \U ) \vect(\idm) \! - \! \vect(A) ) \right).
$$
By the dominated convergence theorem, $\P( \|\U_\bot^\top \Delta(A) \V_\bot\|_2 \leqslant \eta)$ converges to $a = \P( \|\Lambda(A)\|_2 \leqslant \lambda_0)$, which is the proposed limit. This limit is in $(0,1)$   because of the normal distribution has an invertible covariance matrix and the set $\{\|\Lambda\|_2 \leqslant 1\}$ and its complement
 have   non empty interiors.
 
 Since $\hD = O_p(1)$, we can instead consider
 $ E_3 =\{  \frac{4n^{-1/2}}{s_r}M^2 < \|
\U_\bot^\top \hD \V_\bot \|_2 \} $ for a particular $M$, instead of $E_2$. Then following the same line or arguments than in Appendix~\ref{app:rank2}, we conclude that
$\lim\sup_{n \to \infty} \P(E_3^c) \leqslant a$, which concludes the first part of the proof.

\vspace*{1cm}

We now show that  $\lim \inf_{n \to \infty}
\P(\rank(\hat{W}) = \r)  \geqslant a$. A sufficient condition for rank consistency is the following:
we let denote $\hat{W} = U S V^\top$ the singular value
decomposition of $\hat{W}$ and we let denote $U_o$ and $ V_o$ the singular vectors
corresponding to all but the $\r$ largest singular values. Since we have simultaneous
singular value decompositions, a sufficient condition is that $\rank(\hat{W}) \geqslant \r$
and 
$ \left\| U_o^\top  
\left( \hS_{mm} (\hat{W}-\W) - \hS_{M\varepsilon} \right) V_o \right\|_2 < \lambda_n ( 1 - \eta)
$.
If $ \| \Lambda( n^{1/2} \hS_{M\varepsilon}) \| \leqslant  \lambda_0( 1- \eta) $, then, by Lemma~\ref{lemma:cond}, $\U_\bot^\top \Delta ( n^{1/2} \hS_{M\varepsilon}) \V_\bot = 0$, and we get, using the proof of Proposition~\ref{prop:asympt1dbis} and the notation $\hat{A} = n^{1/2} \hS_{M\varepsilon} $:
\BEAS
U_o^\top  
\left( \hS_{mm} (\hat{W}-\W) - \hS_{M\varepsilon} \right) V_o 
& =  & 
U_o^\top
 n^{-1/2}  \left(\hS_{mm} \D(\hat{A})  -\hat{A}    \right) V_o  +    o_p(n^{-1/2})  .
\EEAS
Moreover, because of regular consistency and a positive eigengap for $\W$, the projection onto the first 
$\r$ singular vectors of $\hat{W}$ converges to the projection onto the first $\r$ singular
vectors of $\W$ (see Appendix~\ref{app:svd}), which implies that the   projection onto the orthogonal is also consistent, i.e., $ U_o U_o^\top$ converges in probability to $\U_\bot \U_\bot^\top $ and
$ V_o V_o^\top$ converges in probability to $\V_\bot \V_\bot^\top $. Thus:
\BEAS
 \left\| U_o^\top  
\left( \hS_{mm} (\hat{W}-\W) - \hS_{M\varepsilon} \right) V_o \right\|_2 
&
= & 
\left\| U_o U_o^\top  
\left( \hS_{mm} (\hat{W}-\W) - \hS_{M\varepsilon} \right) V_o V_o^\top \right\|_2 \\
& = & Ä
 n^{-1/2} \|   \U_\bot\U_\bot ^\top  ( \hS_{mm} \D(\hat{A})  - \hat{A} ) \V_\bot \V_\bot^\top   \|_2 + o_p(  n^{-1/2} ) \\
 & = & 
  n^{-1/2} \|    \Lambda(A)    \|_2 + o_p( n^{-1/2}  ).
\EEAS
This implies that 
$$ \lim\inf_{n \to \infty} \left\| U_o^\top  
\left( \hS_{mm} (\hat{W}-\W) - \hS_{M\varepsilon} \right) V_o \right\|_2 < \lambda_n ( 1 - \eta) \geqslant \lim\inf_{n \to \infty} \P (  \|    \Lambda(\hat{A})    \|_2 \leqslant \lambda_0(1 - \eta))$$ which converges to $a$ when $\eta$ tends to zero, which concludes the proof.

\subsection{ Proof of Proposition~\ref{prop:asympt2}}
\label{app:asympt2} 
This is the same result as~\citet{fu}, but extended to the trace norm minimization, simply using the directional derivative result
of Proposition~\ref{prop:dirder}.
If we write $\hat{W} = \W + \lambda_n \hD$, then $\hD$ is defined as the global minimum of
\BEAS
V_n(\Delta) &  =  & \frac{1}{2} \vect(\D)^\top \hS_{mm} \vect(\D) - \lambda_n^{-1} \tr \D^\top \hS_{M\varepsilon} + \lambda_n^{-1} ( \| \W + \lambda_n \D\|_\ast -  \| \W \|_\ast)
\\
&
= &  
\frac{1}{2} \vect(\D)^\top \S_{mm} \vect(\D) + O_p( \zeta_n \|\D\|_2^2)
+ O_p(\lambda_n^{-1}n^{-1/2} ) + 
 \tr \D^\top \hS_{M\varepsilon} 
 \\
  & &  +  \tr \U^\top \D \V + \| \U_\bot^\top \D \V_\bot \|_\ast + O_p(\lambda_n \| \D\|_2^2) \\
  & = & V(\Delta) +
   O_p( \zeta_n \|\D\|_2^2)
+ O_p(\lambda_n^{-1}n^{-1/2} ) + 
   O_p(\lambda_n \| \D\|_2^2) .
\EEAS
   More precisely, if $ M \lambda_n <  \s_r/2$,
  \BEAS
  \E \!\!\! \sup_{\|\Delta\|_2 \leqslant M }
  | V_n(\Delta)  - V(\Delta) |
 & \!\!=\!\! &  \mbox{cst} \times \left(  M^2 \E \| \hS_{mm} - \S_{mm} \|_F  +  M \lambda_n^{-1} \E(  \| \hS_{M\varepsilon} \|^2)^{1/2} + \lambda_n M^2 \right)
 \\
 & = &  O( M^2 \zeta_n + M \lambda_n^{-1} n^{-1/2} + \lambda_n M^2).
 \EEAS
 Moreover, $V(\Delta)$ achieves its minimum at a bounded point $\Delta_0 \neq 0$. Thus, by Markov inequality the minimum of $V_n(\Delta)$ over the ball $\|\D \|_2 < 2 \| \D_0 \|_2$ is with probability tending to one strictly inside and is thus also the unconstrained minimum, which leads to the proposition. 
 
\subsection{ Proof of Proposition~\ref{lemma:cond}}
\label{app:cond} 
 
 The optimal $\D \in \rb^{p \times q}$
 should be such that  $\U_\bot^\top \D \V_\bot$
has low rank, where $\U_\bot \in \rb^{ p \times (p-\r)}$
and $\V_\bot \in \rb^{ q \times (q-\r)}$ are orthogonal complements of the singular
vectors $\U$ and $\V$. We now derive the condition under which the optimal $\D$ is such that 
$\U_\bot^\top \D \V_\bot$ is actually  equal to zero: we consider the minimum of
$\frac{1}{2}
\vect (\D)^\top \S_{mm} \vect(\D) 
 +  \vect(\D)^\top \vect(\U \V^\top) $ with respect to $\D$ such that 
 $\vect( \U_\bot^\top \D \V_\bot) = (\V_\bot \otimes \U_\bot)^\top \vect(\D) = 0 $. The solution of that constrained optimization problem is
 obtained through the following linear system~\citep{boyd}:
\BEQ
\label{eq:cond1}
 \left( \begin{array}{cc}
 \S_{mm}  & (\V_\bot \otimes \U_\bot) \\
 (\V_\bot \otimes \U_\bot)^\top & 0  
 \end{array}
 \right)  \left( \begin{array}{c}
 \vect(\D) \\
 \vect(\Lambda)  
 \end{array}
 \right)  = \left( \begin{array}{c}
 - \vect(\U \V^\top) \\
 0  
 \end{array}
 \right),
 \EEQ
 where $\Lambda \in \rb^{ (p-\r) \times (q-\r)}$ is the Lagrange multiplier for the equality constraint. We can solve explicitly for $\D$ and $\Lambda$ which leads to 
$$\vect(\Lambda) = \left(  (\V_\bot \otimes \U_\bot)^\top \S_{mm}^{-1} (\V_\bot \otimes \U_\bot) \right)^{-1}
\left(  (\V_\bot \otimes \U_\bot)^\top \S_{mm}^{-1} (\V \otimes \U) \vect(\idm) \right),
$$ 
and $$\vect(\D) = -\S_{mm}^{-1} \vect( \U \V^\top - \U_\bot \Lambda \V_\bot^\top).
 $$
 
 Then the minimum of the function $F(\Delta)$ in \eq{expansion} is such that
$\U_\bot^\top \D \V_\bot = 0$ (and thus equal to $\D$ defined above) if  and only if
for all $\Theta \in \rb^{p \times q}$, the directional derivative of $F$ at $\Delta$ in the direction $\Theta $ is nonnegative, i.e.:
$$\lim_{\varepsilon \to 0^+}  \frac{F(\Delta+\varepsilon \Theta) - F(\D)}{\varepsilon}\geqslant 0.$$
By Proposition~\ref{prop:dirder}, this directional derivative is equal to 
\BEAS
\tr \Theta^\top ( \S_{mm} \Delta + \U \V^\top ) + \| \U_\bot^\top \Theta \V_\bot 
\|_\ast 
& = &   \tr \Theta^\top \U_\bot \Lambda \V_\bot + \| \U_\bot^\top \Theta \V_\bot 
\|_\ast  \\
& = &   \tr  \Lambda^\top \U_\bot^\top \Theta \V_\bot + \| \U_\bot^\top \Theta \V_\bot 
\|_\ast  .
\EEAS
Thus the directional derivative is always non negative if for all $\Theta' \in \rb^{ (p-\r) \times (q-\r)}$, $
\tr     \Lambda^\top \Theta'  + \|\Theta'\|_\ast \geqslant 0 $, i.e., if and only if
$\|\Lambda\|_2 \leqslant 1$, which concludes the proof.

\subsection{ Proof of Theorem~\ref{theo:sufficient}}
\label{app:sufficient} 
Regular consistency is obtained by Corollary~\ref{prop:regular-consistency}.
We consider the problem in \eq{expansion} of Proposition~\ref{prop:asympt2},
where $\lambda_n n^{1/2} \to \infty$ and $\lambda_n \to 0$. 
Since \eq{strict} is satisfied, the solution $\D$ indeed
satisfies $\U_\bot^\top \D \V_\bot = 0$ by Lemma~\ref{lemma:cond}.

We   have  $\hat{W} = \W + \lambda_n \D + o_p(\lambda_n)$ and we now show that
the optimality conditions are satisfied with rank $\r$. From the regular consistency,
the rank of $\hat{W}$ is, with probability tending to one, larger than $\r$ (because the rank is lower semi-continuous function). We now need to show that
it is actually equal to $\r$. We let denote $\hat{W} = U S V^\top$ the singular value
decomposition of $\hat{W}$ and we let denote $U_o$ and $ V_o$ the singular vectors
corresponding to all but the $\r$ largest singular values. Since we have simultaneous
singular value decompositions, we
simply need to show that,
$ \left\| U_o^\top  
\left( \hS_{mm} (\hat{W}-\W) - \hS_{M\varepsilon} \right) V_o \right\|_2 < \lambda_n
$ with probability tending to one. We have:
\BEAS
U_o^\top  
\left( \hS_{mm} (\hat{W}-\W) - \hS_{M\varepsilon} \right) V_o 
& =  & 
U_o^\top
\left( \lambda_n \hS_{mm} \D +   o_p(\lambda_n) - O_p(n^{-1/2})  \right) V_o  \\
& = &  \lambda_n U_o^\top  
  (\S_{mm} \D ) V_o   + o_p(\lambda_n).
\EEAS
Moreover, because of regular consistency and a positive eigengap for $\W$, the projection onto the first 
$\r$ singular vectors of $\hat{W}$ converges to the projection onto the first $\r$ singular
vectors of $\W$ (see Appendix~\ref{app:svd}), which implies that the   projection onto the orthogonal is also consistent, i.e., $ U_o U_o^\top$ converges in probability to $\U_\bot \U_\bot^\top $ and
$ V_o V_o^\top$ converges in probability to $\V_\bot \V_\bot^\top $. Thus:
\BEAS
 \left\| U_o^\top  
\left( \hS_{mm} (\hat{W}-\W) - \hS_{M\varepsilon} \right) V_o \right\|_2 
&
= & 
\left\| U_o U_o^\top  
\left( \hS_{mm} (\hat{W}-\W) - \hS_{M\varepsilon} \right) V_o V_o^\top \right\|_2 \\
& = & Ä
 \lambda_n \|   \U_\bot\U_\bot ^\top ( \S_{mm} \D) \V_\bot \V_\bot^\top   \|_2 + o_p(\lambda_n) \\
 & = & 
 \lambda_n \|    \Lambda    \|_2 + o_p(\lambda_n).
\EEAS
This implies that
that the last expression is  asymptotically of magnitude strictly less than one, which concludes the proof.

\subsection{ Proof of Theorem~\ref{theo:necessary}}
\label{app:necessary} 
 We have seen earlier that if $n^{1/2} \lambda_n$ tends to zero and $\lambda_n$ tends to zero, then \eq{weak} is necessary for rank-consistency. We just have to show that there is a subsequence that does satisfy this. If $\lim\inf \lambda_n > 0 $, then we cannot have consistency (by Proposition~\ref{prop:asympt1d}), thus if we consider a subsequence, we can always assume that $\lambda_n$ tends to zero.
 
 We now consider the sequence $n^{1/2} \lambda_n$, and its accumulation points. If zero or $+\infty$ is one of them, then by Propositions~\ref{prop:rank2} and~\ref{prop:rank1}, we cannot have rank consistency. Thus, for all acccumulation points (which are finite and strictly positive), by considering a subsequence, we are in the situation where $n^{1/2} \lambda_n$ tends to $+\infty$ and $\lambda_n$ tends to zero, which implies \eq{weak}, by definition of $\Lambda$ in \eq{lambda} and Lemma~\ref{lemma:cond}.

\subsection{ Proof of Theorem~\ref{theo:adaptive}}
\label{app:adaptive} 

We let denote $U_{LS}^\r$  and $V_{LS}^\r$  the first $\r$ columns of $U_{LS}$
and $V_{LS}$ and $U_{LS}^o$  and $V_{LS}^o$ the remaining columns; we also denote 
$s_{LS}^\r$ the corresponding first $\r$ singular values and $s_{LS}^o$ the remaining singular values.
From Lemma~\ref{lemma:LS} and  results in the appendix, we get that 
$\| s_{LS}^\r - \s \|_2 = O_p(n^{-1/2})$ and $\| s_{LS}^o  \|_2 = O_p(n^{-1/2})$
and
$ \|U_{LS}^\r (U_{LS}^\r)^\top - \U \U^\top\|_2 = O_p(n^{-1/2})$ and
$ \|V_{LS}^\r (V_{LS}^\r)^\top - \V \V^\top\|_2 = O_p(n^{-1/2}).$
By writing $\hat{W}_A = \W + n^{-1/2} \hD_A$, $\hD_A $ is defined as the minimum of
$$
\frac{1}{2}\vect(\D)^\top \hS_{mm} \vect(\D) - n^{1/2} \tr \D^\top \hS_{M\varepsilon} +
 n \lambda_n \left( \| A \W B  + n^{-1/2} A \D B \|_\ast
 - \| A \W B \|_\ast
  \right).
$$
We have: 
\BEAS
A \U & = & U_{LS} \Diag(s_{LS})^{-\gamma} U_{LS}^\top \U \\
& = & 
U_{LS}^\r \Diag(s_{LS}^\r)^{-\gamma} (U_{LS}^\r)^\top \U  + U_{LS}^o \Diag(s_{LS}^o)^{-\gamma} (U_{LS}^o)^\top \U \\
 & = & \U  \Diag(\s) ^{-\gamma}
 + O_p(n^{-1/2}) + O_p(n^{-1/2} n^{\gamma/2} ) 
  \\
  & = & \U  \Diag(\s) ^{-\gamma}  +
 O_p(n^{-1/2} n^{\gamma/2} ),
 \EEAS
 and 
 \BEAS
A \U_\bot & = & U_{LS} \Diag(s_{LS})^{-\gamma} U_{LS}^\top \U_\bot \\
& = & 
U_{LS}^\r \Diag(s_{LS}^\r)^{-\gamma} (U_{LS}^\r)^\top \U_\bot  + U_{LS}^o \Diag(s_{LS}^o)^{-\gamma} (U_{LS}^o)^\top \U_\bot \\
& = & \U_\bot  \Diag(s_{LS}^o)^{-\gamma} + O_p(n^{\gamma/2-1/2}) \\
 & = &    O_p(  n^{\gamma/2} ) .
 \EEAS
Similarly we have: $ B \V = \V \Diag(\s) ^{-\gamma}  +
 O_p(n^{-1/2} n^{\gamma/2} )$ and $ B\V = O_p(  n^{\gamma/2} ) $. We can decompose any $\D \in \rb^{p \times q}$ as
 $\D = ( \U \ \U_\bot ) \left(
 \begin{array}{cc} \D_{\r\r} & \D_{\r o} \\ \D_{o \r} & \D_{oo}
 \end{array}\right)
 ( \V \ \V_\bot )^\top $.
We have assumed that $\lambda_n n^{1/2} n^{\gamma/2}$ tends to infinity. Thus,
\BIT
\item if $\U_\bot^\top \D = 0$ and $  \D \V_\bot = 0$ (i.e.,
if $\D$ is of the form $  \U \Delta_{\r\r} \V^\top$),
\BEAS
 n \lambda_n \| A \W B  + n^{-1/2} A \D B \|_\ast - \| A \W B \|_\ast
& \leqslant &  \lambda_n n^{1/2} \| A \D B \|_\ast  \\
& = &  \lambda_n n^{1/2} \| \Diag(\s)^{-\gamma} \Delta_{\r\r} \Diag(\s)^{-\gamma} 
\|_\ast + O_p( \lambda_n n^{\gamma/2}) \\
& = &  O_p( \lambda_n n^{1/2})
\EEAS
 tends to zero.
\item Otherwise,
$ n \lambda_n \| A \W B  + n^{-1/2} A \D B \|_\ast - \| A \W B \|_\ast
 $ is larger than 
 $ \lambda_n n^{1/2} \| A \D B \|_\ast - 2 \| A \W B \|_\ast$. The term
$ \| A \W B \|_\ast$ is bounded in probability because we can write
$ A \W B = \U \Diag( \s)^{1-2\gamma} \V^\top + O_p(n^{-1/2+\gamma/2})$ and $\gamma \leqslant 1$. Besides, $\lambda_n n^{1/2} \| A \D B \|_\ast$ is tending to infinity
as soons as any of $\D_{o\r}$, $\D_{\r o}$ or $\D_{\r\r}$ are different from zero. Indeed, by equivalence of finite dimensional norms 
$\lambda_n n^{1/2} \| A \D B \|_\ast $ is larger than a constant times
$\lambda_n n^{1/2} \| A \D B \|_F$, which can be decomposed in four pieces along $(\U,\U_\bot)$ and $(\V,\V_\bot)$, corresponding asymptotically to $\D_{oo}$, $\D_{o\r}$, $\D_{\r o}$ or $\D_{\r\r}$. The smallest of those terms grows faster than
$\lambda_n n^{1/2+\gamma/2}$, and thus tends to infinity.
\EIT 

 Thus, since $\S_{mm}$ is invertible,
 by the epi-convergence theorem of~\citet{geyer,geyer2},   $\hD_{A}$ converges
 in distribution  
 to the minimum of 
 $$
\frac{1}{2}\vect(\D)^\top \S_{mm} \vect(\D) - n^{1/2} \tr \D^\top \hS_{M\varepsilon}, $$
such that $\U_\bot^\top \D = 0$ and $  \D \V_\bot = 0$. This minimum has a simple asymptotic distribution, namely $\D = \U \Theta \V^\top$ and
$\Theta$ is asymptotically normal with mean zero and covariance matrix
$ \sigma^2 \left[(\V \otimes \U)^\top \S_{mm} (\V \otimes \U) \right]^{-1}
$, which leads to the consistency and the asymptotic normality.

 In order to finish the proof, we consider the optimality conditions which can be written as $ A \D B$ and $$ A^{-1} \left( \hS_{mm} \hD_A - n^{1/2}   \hS_{M\varepsilon} \right) B^{-1} $$ having simultaneous singular value decompositions with proper decays
of singular values, i.e, such that the first $\r$ are equal to $\lambda_n n^{1/2}$ and the remaining ones are less than $ \lambda_n n^{1/2}$.

From the asymptotic normality we get that $  \hS_{mm} \hD_A - n^{1/2}   \hS_{M\varepsilon} $ is $O_p(1)$, we can thus consider matrices of the form $A^{-1} \Theta B^{-1}$ where $\Theta$ is bounded, the same way we considered matrices of the form
$A \D B$. 

We have: 
\BEAS
A^{-1} \U & = & U_{LS} \Diag(s_{LS})^{\gamma} U_{LS}^\top \U \\
& = & 
U_{LS}^\r \Diag(s_{LS}^\r)^{\gamma} (U_{LS}^\r)^\top \U  + U_{LS}^o \Diag(s_{LS}^o)^{\gamma} (U_{LS}^o)^\top \U \\
 & = & \U  \Diag(\s) ^{\gamma} + O_p(n^{-1/2} ),
 \EEAS
 and 
 \BEAS
A^{-1}\U_\bot & = & U_{LS} \Diag(s_{LS})^{\gamma} U_{LS}^\top \U_\bot \\
& = & 
U_{LS}^\r \Diag(s_{LS}^\r)^{\gamma} (U_{LS}^\r)^\top \U_\bot  + U_{LS}^o \Diag(s_{LS}^o)^{\gamma} (U_{LS}^o)^\top \U_\bot \\
& = & O_p(n^{-1/2})
+\U_\bot  \Diag(s_{LS}^o)^{\gamma} ,
 \EEAS
with similar expansions for $B^{-1} \V$ and $B^{-1} \V_\bot$.
We obtain the first order expansion:
\begin{multline*}
 A^{-1} \Theta B^{-1}   =  \U  \Diag(\s) ^{\gamma}   \Theta_{\r\r}
 \Diag(\s) ^{\gamma}  \V^\top  
 + \U_\bot  \Diag(s_{LS}^o)^{\gamma}    \Theta_{o\r}\Diag(\s) ^{\gamma}  \V^\top 
  \\ +
 \U  \Diag(\s) ^{\gamma}   \Theta_{\r o}     \Diag(s_{LS}^o)^{\gamma}    
\V_\bot^\top 
+
\U_\bot  \Diag(s_{LS}^o)^{\gamma}    \Theta_{o o }  \Diag(s_{LS}^o)^{\gamma}    
\V_\bot^\top 
 \end{multline*}

Because of the regular consistency, the first term is of the order of $\lambda_n n^{1/2}$ (so that the first $\r$ singular values of $\hat{W}$ are strictly positive), while the three other terms have
norms less than $O_p(n^{-\gamma/2})$ which is less than $O_p(n^{1/2} \lambda_n)$ by assumption. This concludes the proof.

\bibliography{tracenorm}

\end{document}